\newcommand{\R}{\mathbb{R}}
\theoremstyle{definition}
\numberwithin{equation}{section}
\theoremstyle{plain}
\newtheorem{theorem}{Theorem}[section]
\newtheorem{proposition}[theorem]{Proposition}
\newtheorem{lemma}[theorem]{Lemma}
\title{On the eigenvector bias of Fourier feature networks: From regression to solving multi-scale PDEs with physics-informed neural networks}
\author{
  Sifan Wang \\
  Graduate Group in Applied Mathematics \\
  and Computational Science \\
  University of Pennsylvania\\
  Philadelphia, PA 19104 \\
  \texttt{sifanw@sas.upenn.edu} \\
   \And
    Hanwen Wang \\
  Graduate Group in Applied Mathematics \\
  and Computational Science \\
  University of Pennsylvania\\
  Philadelphia, PA 19104 \\
  \texttt{wangh19@sas.upenn.edu} \\
   \And
  Paris Perdikaris \\
  Department of Mechanichal Engineering \\
  and Applied Mechanics\\
  University of Pennsylvania\\
  Philadelphia, PA 19104 \\
  \texttt{pgp@seas.upenn.edu} \\
}
\begin{document}
\maketitle
\begin{abstract}

Physics-informed neural networks (PINNs) are demonstrating remarkable promise in integrating physical models with gappy and noisy observational data, but they still struggle in cases where the target functions to be approximated exhibit high-frequency or multi-scale features. 
In this work we investigate this limitation through the lens of Neural Tangent Kernel (NTK) theory and elucidate how PINNs are biased towards learning functions along the dominant eigen-directions of their limiting NTK. Using this observation, we construct novel architectures that employ spatio-temporal and multi-scale random Fourier features, and justify 
how such coordinate embedding layers can lead to robust and accurate PINN models. Numerical examples are presented for several challenging cases where conventional PINN models fail,  including wave propagation and reaction-diffusion dynamics, illustrating how the proposed methods can be used to effectively tackle both forward and inverse problems involving partial differential equations with multi-scale behavior. All code an data accompanying this manuscript will be made publicly available at \url{https://github.com/PredictiveIntelligenceLab/MultiscalePINNs}.

\end{abstract}

\keywords{Spectral bias \and Deep learning \and Neural Tangent Kernel \and Partial differential equations \and Scientific machine learning} 






\section{Introduction}
Leveraging advances in automatic differentiation \cite{baydin2017automatic}, 
deep learning tools are introducing a new trend 
in tackling forward and inverse problems in computational mechanics. Under this emerging paradigm, unknown quantities of interest are typically parametrized by deep neural networks, and a multi-task learning problem is posed with the dual goal of fitting observational data and approximately satisfying a given physical law, mathematically expressed via systems of partial differential equations (PDEs). Since the early studies of Psichogios {\em et al.} \cite{psichogios1992hybrid} and Lagaris {\em et al.} \cite{lagaris1998artificial}, and their modern re-incarnation via the framework of physics-informed neural networks (PINNs) \cite{raissi2019physics}, the use of neural networks to represent PDE solutions has undergone rapid growth, both in terms of theory \cite{shin2020convergence, wang2020and, luo2020two, shin2020error} and diverse applications in computational science and engineering \cite{li2020fourier,zhu2019physics,sun2020surrogate}. PINNs in particular, have demonstrated remarkable power in applications including fluid dynamics \cite{raissi2020hidden, jin2020nsfnets, reyes2020learning}, biomedical engineering \cite{sahli2020physics,kissas2020machine,yazdani2020systems}, meta-material design \cite{fang2019deep,chen2020physics}, free boundary problems \cite{wang2020deep}, Bayesian networks and uncertainty quantification \cite{yang2020b,yang2020bayesian}, high dimensional PDEs \cite{han2018solving,raissi2018forward,karumuri2020simulator}, stochastic differential equations \cite{yang2019adversarial}, and beyond \cite{lu2019deeponet, gao2020phygeonet}. However, despite this early empirical success, we are still lacking a concrete mathematical understanding of the mechanisms that render such constrained neural network models effective, and, more importantly, the reasons why these models can oftentimes fail. In fact, more often than not, PINNs are notoriously hard to train, especially for forward problems exhibiting high-frequency or multi-scale behavior. 

Recent work by Wang {\em et al.} \cite{wang2020understanding, wang2020and} has identified two fundamental weaknesses in conventional PINN formulations. The first is related to a remarkable discrepancy of convergence rate between the different terms that define a PINN loss function. As demonstrated by Wang {\em et al.} \cite{wang2020understanding}, the gradient flow of PINN models becomes increasingly stiff for PDE solutions exhibiting high-frequency or multi-scale behavior, often leading to unbalanced gradients during back-propagation. A subsequent analysis using the recently developed neural tangent kernel (NTK) theory \cite{wang2020and}, has revealed how different terms in a PINNs loss may dominate one another, leading to models that cannot simultaneously fit the observed data and minimize the PDE residual. These findings have motivated the development of novel optimization schemes and adaptive learning rate annealing strategies that are demonstrated to be very effective in minimizing multi-task loss functions, such as the ones routinely encountered in PINNs \cite{wang2020understanding, wang2020and}.

The second fundamental weakness of PINNs is related to spectral bias \cite{rahaman2019spectral, cao2019towards, ronen2019convergence}; a commonly observed pathology of deep fully-connected networks that prevents them from learning high-frequency functions. As analyzed in \cite{wang2020and} using NTK theory \cite{jacot2018neural, arora2019exact, lee2019wide}, spectral bias indeed exists in PINN models and is the leading reason that prevents them from accurately approximating high-frequency or multi-scale functions. To this end, recent work in \cite{wang2020multi, li2020dnn, liu2020multi}, attempts to empirically address this pathology by introducing appropriate input scaling factors to  convert the problem of approximating high frequency components of the target function to one of approximating lower frequencies. In another line of work, Tancik {\em et al.} \cite{tancik2020fourier} introduced Fourier feature networks which use a simple Fourier feature mapping to enhance the ability of fully-connected networks to learn high-frequency functions. 
Although these techniques can be effective in some cases, in general, they still lack a concrete mathematical justification in relation to how they potentially address spectral bias.

Building on the these recent findings, this work attempts to analyze and address the aforementioned shortcomings of PINNs, with a particular focus on designing effective models for multi-scale PDEs.
To this end, we rigorously study fully-connected neural networks and PINNs through the lens of their limiting NTK, and produce novel insights into how these models fall short in presence of target functions with high-frequencies of multi-scale features. Using this analysis, we propose a family of novel architectures that can effectively mitigate spectral bias and enable the solution of problems for which current PINN approaches fail. Specifically, our
main contributions can be summarized into the following points:

\begin{itemize}[leftmargin=*]
    \item We argue that spectral bias in deep neural networks in fact corresponds to ``NTK eigenvector bias'', and show that Fourier feature mappings can modulate the frequency of the NTK eigenvectors.
    \item By analyzing how the NTK eigenspace determines the type of functions a neural net can learn, we engineer new effective architectures for multi-scale problems.
    \item  We propose a series of benchmarks for which conventional PINN models fail, and use them to demonstrate the effectiveness of the proposed methods.
\end{itemize}

The remaining of this paper is organized as follows. In section \ref{sec: overview_PINNs}, we present a brief overview of PINNs and emphasize their weakness in solving multi-scale problems. Next, we introduce the neural tangent kernel (NTK) as a theoretical tool to detect and analyze spectral bias in section \ref{sec: NTK_spectrum_bias}. Furthermore, we study the NTK eigensystem of Fourier feature networks and propose two novel network architectures that are efficient in handling multi-scale problems, see section \ref{sec: Fourier_features}, \ref{sec: architecture}.
We present a detailed evaluation of our proposed neural network architectures across a range of representative benchmark examples, see section \ref{sec: results}.
Finally, in section \ref{sec: disscusion}, we summarize our findings and provide a discussion on lingering limitations and promising future directions. 

\section{Physics-informed neural networks}
\label{sec: overview_PINNs}

In this section, we present a brief overview of physics-informed neural networks (PINNs) \cite{raissi2019physics}. In general, we consider partial differential equations of the following form
\begin{align}
\label{eq: PDE}
     &\mathcal{N}[\bm{u}](\bm{x}) = \bm{f}(\bm{x}), \ \  \bm{x} \in \Omega,\\
     \label{eq: BC}
     &\mathcal{B}[\bm{u}](\bm{x})=\bm{g}(\bm{x}), \ \ \bm{x} \in \partial \Omega,
\end{align} 
where $\mathcal{N}[\cdot]$ is a differential operator and  $\mathcal{B}[\cdot] $ corresponds to Dirichlet, Neumann, Robin, or periodic boundary conditions. In addition, $\bm{u}: \overline{\Omega} \rightarrow \R$ describes the unknown latent quantity of interest that is governed by the  PDE system of equation \ref{eq: PDE}. For
time-dependent problems, we consider time $t$ as a special component of $\bm{x}$, and $\Omega$ then also contains the temporal domain. In that case, initial conditions can be simply treated as a special type of boundary condition on the spatio-temporal domain.

Following the original work of Raissi {\em et al.} \cite{raissi2019physics}, we proceed by approximating $\bm{u}(\bm{x})$ by a deep neural network $\bm{u}_{\bm{\theta}}(\bm{x})$, where $\bm{\theta}$ denotes all tunable parameters of the network (e.g., weights and biases). Then, a physics-informed model can be trained by minimizing the following composite loss function
\begin{align}
    \label{eq: PINN_loss}
    \mathcal{L}(\bm{\theta}) = \lambda_r \mathcal{L}_r(\bm{\theta}) + \lambda_b \mathcal{L}_{u_b}(\bm{\theta}),
\end{align}
where 
\begin{align}
    \label{eq: loss_r}
    &\mathcal{L}_r(\bm{\theta}) = \frac{1}{N_r} \sum_{i=1}^{N_r} \left| \mathcal{N}[\bm{u}_{\bm{\theta}}](\bm{x}_r^i) - \bm{f}(\bm{x}_r^i)  \right|^2, \\
    \label{eq: loss_ub}
     &\mathcal{L}_b(\bm{\theta}) = \frac{1}{N_b} \sum_{i=1}^{N_b} \left| \mathcal{B}[\bm{u}_{\bm{\theta}}](\bm{x}_b^i) - \bm{g}(\bm{x}_b^i) \right|^2,
\end{align}
and $N_r$ and $N_b$ denote the batch-sizes of training data  $\{\bm{x}_b^i, \bm{g}(\bm{x}_b^i)\}_{i=1}^{N_b}$ and $\{\bm{x}_r^i, \bm{f}(\bm{x}_r^i)\}_{i=1}^{N_r}$, respectively, which are randomly sampled in the computational domain at each iteration of a gradient descent algorithm. Notice that all required gradients with respect to input variables $\bm{x}$ or parameters $\bm{\theta}$ can be efficiently computed via automatic differentiation \cite{baydin2017automatic}.
Moreover, the parameters $\left\{ \lambda_r,  \lambda_b \right\}$ correspond to weight coefficients in the loss function that can effectively assign a different learning rate to each individual loss term. These weights may be user-specified or tuned automatically during network training \cite{wang2020understanding, wang2020and}. 



Despite a series of early promising results \cite{raissi2020hidden,kissas2020machine,wang2020deep}, the original formulation of Raissi {\em et al.} \cite{raissi2019physics} often struggles to handle multi-scale problems. As an example, let us consider a simple 1D Poisson's equation
\begin{align}
    \label{eq: Poisson1D}
    \Delta u(x) = f(x), \quad x \in (0,1)
\end{align}
subject to the boundary condition
\begin{align*}
    u(0) = u(1) = 0
\end{align*}
Here the fabricated solution we consider is 
\begin{align*}
    u(x) = \sin(2 \pi x) + 0.1 \sin (50 \pi x)
\end{align*}
and $f(x)$ can be derived using equation \ref{eq: Poisson1D}. Though this example is simple and pedagogical, it is worth noting that the solution exhibits low frequency in the macro-scale and high frequency in the micro-scale, which resembles many practical scenarios. 

We represent the unknown solution $u(x)$ by a  5-layer fully-connected neural network $u_{\bm{\theta}(x)}$  with 200 units per hidden layer. The parameters of the network can be learned by minimizing the following loss function 
\begin{align}
    \label{eq: Poisson1D_loss}
    \mathcal{L}(\bm{\theta}) &=  \mathcal{L}_b(\bm{\theta})  + \mathcal{L}_r(\bm{\theta}) \\
    &=\frac{1}{N_b} \sum_{i=1}^{N_b} \left|u_{\bm{\theta}}(x_b^i) - u(x_b^i)  \right|^2 + \frac{1}{N_r}\sum_{i=1}^{N_r} \left| \Delta u_{\bm{\theta}}(x_r^i) - f(x_r^i)  \right|^2
\end{align}
where the batch sizes are set to $N_b = N_r = 128$ and all training  points $\{x_b^i, u(x_b^i)\}_{i=1}^{N_b}$, $\{x_r^i, f(x_r^i)\}_{i=1}^{N_r}$ are uniformly sampled for the boundary and residual collocation points at each iteration of gradient descent.

Figure \ref{fig: Poisson1D_NN} summarized the results obtained  by training the network for $10^7$ iterations of gradient descent using the Adam optimizer \cite{kingma2014adam} with default settings. We observe that the network is incapable of learning the correct solution, even after a million training iterations. In fact, it is not difficult for a conventional fully-connected neural network to approximate that function $u$, given sufficient data inside the computational domain. However, as shown in  figure \ref{fig: Poisson1D_NN}, solving high-frequency or multi-scale problems presents great challenges to PINNs. Although there has been recent efforts to elucidate the reasons why PINN models may fail to train \cite{wang2020understanding, wang2020and}, a complete understanding of how to quantify and resolve such pathologies is still lacking. In the following sections, we will obtain insights by studying Fourier feature networks through  the lens of their neural tangent kernel (NTK), and present a novel methodology to tackle multi-scale problems with PINNs.

\begin{figure}
    \centering
    \includegraphics[width=0.9\textwidth]{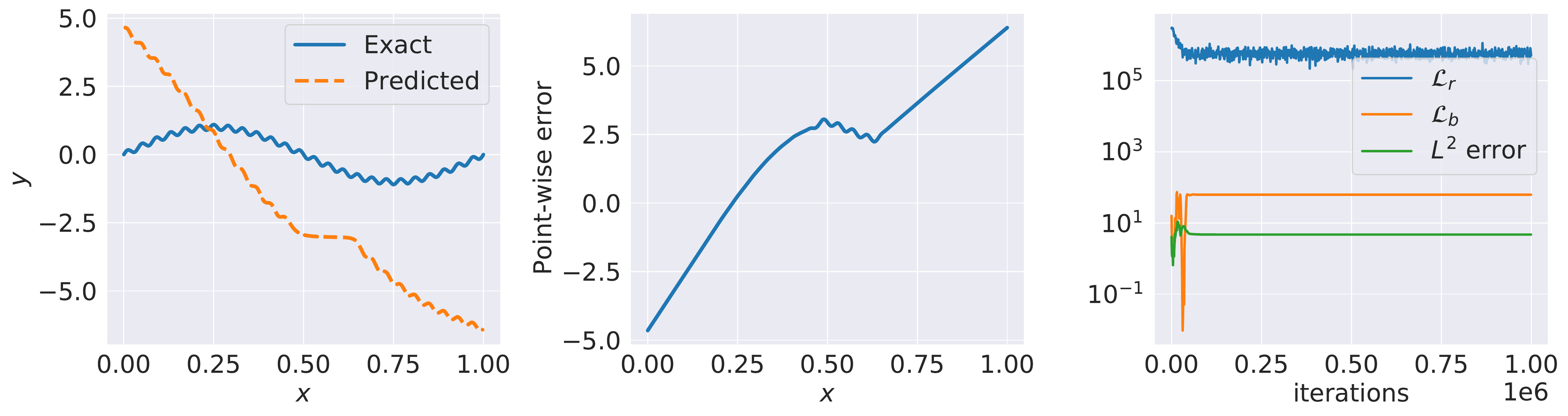}
    \caption{{\em 1D Poisson equation:} Results obtained by training a conventional physics-informed neural network (5-layer, 200 hidden units, $\tanh$ activations) via $10^7$ iterations of gradient descent. 
    {\em Left:} Comparison of the predicted and exact solutions. 
    {\em Middle:} Point-wise error between the predicted and the exact solution. {\em Right:} Evolution of the residual loss $\mathcal{L}_r$, the boundary loss $\mathcal{L}_b$, as well as the relative $L^2$ error  during training. } 
    \label{fig: Poisson1D_NN}
\end{figure}


\section{Methodology} 

\subsection{Analyzing spectral bias through the lens of the Neural Tangent Kernel}
\label{sec: NTK_spectrum_bias}

Before presenting our proposed methods in the context of PINNs, let us first start with a much simpler setting involving regression of functions using deep neural networks. To lay the foundations for our theoretical analysis, we first review the recently developed Neural Tangent Kernel (NTK) theory of Jacot {\em et al.} \cite{jacot2018neural, arora2019exact, lee2019wide},  and its 
connection to investigating spectral bias \cite{rahaman2019spectral, cao2019towards, ronen2019convergence} in the training behavior of deep fully-connected networks.  
Let $f(\bm{x}, \bm{\theta})$ be a scalar-valued fully-connected neural network (see Appendix \ref{appendix: def_FCNN}) with weights $\bm{\theta}$ initialized by a Gaussian distribution $\mathcal{N}(0,1)$. 
Given a data-set $\{\bm{X}_{\text{train}}, \bm{Y}_{\text{train}}\}$, where $\bm{X}_{\text{train}} = (\bm{x}_i)_{i=1}^N$ are inputs and $\bm{Y}_{\text{train}} = (y_i)_{i=1}^N$ are the corresponding outputs, 
we consider a network trained by minimizing the mean square loss $\mathcal{L}(\bm{\theta}) = \frac{1}{N}\sum_{i=1}^N |f(\bm{x}_i, \bm{\theta}) - y_i|^2$ using a very small learning rate $\eta$. Then, following the derivation of Jacot {\em et al.} \cite{jacot2018neural, arora2019exact}, we can define the neural tangent kernel operator $\bm{K}$, whose entries are given by
\begin{align}
    \label{eq: NTK}
   \bm{K}_{ij} = \bm{K}(\bm{x}_i, \bm{x}_j) =  \left\langle\frac{\partial f(\bm{x}_i, \boldsymbol{\theta}))}{\partial \boldsymbol{\theta}}, \frac{\partial f\left(\bm{x}_j , \boldsymbol{\theta}) \right)}{\partial \boldsymbol{\theta}}\right\rangle,
\end{align}
Strikingly, the NTK theory shows that, under gradient descent dynamics with an infinitesimally small learning rate (gradient flow),
the kernel $\bm{K}$ converges to a deterministic kernel $\bm{K}^*$ and does not changes during training as the width of the network grows to infinity. 

Furthermore, under the asymptotic conditions stated in Lee {\em et al.} \cite{lee2019wide}, we can derive that
\begin{align}
    \frac{d f(\bm{X}_{\text{train}}, \bm{\theta}(t)  )}{ d t} \approx - \bm{K} \cdot (f(\bm{X}_{\text{train}}, \bm{\theta}(t)  ) - \bm{Y}_{\text{train}} ),
\end{align}
where $\bm{\theta}(t)$ denotes the parameters of the network at iteration $t$ and $f(\bm{X}_{\text{train}}, \bm{\theta}(t) ) = (f(\bm{x}_i, \bm{\theta}(t) )_{i=1}^N$.  Then, it directly follows that
\begin{align}
     f(\bm{X}_{\text{train}}, \bm{\theta}(t)  ) \approx (I -  e^{-\bm{K}t} ) \cdot \bm{Y}_{\text{train}}.
\end{align}

Since the kernel $\bm{K}$ is positive semi-definite, we can take its spectral decomposition $\bm{K} = \bm{Q}^T\bm{\Lambda} \bm{Q}$, where $\bm{Q}$ is an orthogonal matrix whose $i$-th column is the eigenvector $\bm{q}_i$ of $\bm{K}$ and $\bm{\Lambda}$ is a diagonal matrix whose diagonal entries $\lambda_i$ are the corresponding eigenvalues. 
Since $e^{-\bm{K}t} = \bm{Q^T} e^{-\bm{\Lambda}t} \bm{Q}$, we have 
\begin{align}
    \bm{Q}^T \left(f(\bm{X}_{\text{train}}, \bm{\theta}(t)  ) -       \bm{Y}_{\text{train}} \right) = - e^{\bm{\Lambda}t} \bm{Q}^T \bm{Y_{\text{train}}},
\end{align}
which implies
\begin{align}
      \begin{bmatrix}
        \bm{q}_1^T \\
        \bm{q}_2^T \\
        \vdots \\
        \bm{q}_N^T
    \end{bmatrix}  ( f(\bm{X}_{\text{train}}, \bm{\theta}(t)  ) -       \bm{Y}_{\text{train}}) &=  \begin{bmatrix}
        e^{-\lambda_1 t} & & & \\
        & e^{-\lambda_2 t} & & \\
                    &    &   \ddots &\\
                    &    &  & e^{-\lambda_N t}
    \end{bmatrix}
    \begin{bmatrix}
        \bm{q}_1^T \\
        \bm{q}_2^T \\
        \vdots \\
        \bm{q}_N^T
    \end{bmatrix}
    \bm{Y_{\text{train}}}.
\end{align}
The above equation shows that the convergence rate of $\bm{q}_i^T  ( f(\bm{X}_{\text{train}}, \bm{\theta}(t)  ) - \bm{Y}_{\text{train}}) $ is determined by the $i$-th eigenvalue $\lambda_i$. 
Moreover, we can decompose the training error into the eigenspace of the NTK as
\begin{align}
f(\bm{X}_{\text{train}}, \bm{\theta}(t)  ) -       \bm{Y}_{\text{train}}  &= \sum_{i=1}^N (f(\bm{X}_{\text{train}}, \bm{\theta}(t)  ) -       \bm{Y}_{\text{train}}   , \bm{q}_i) \bm{q}_i \\
&=  \sum_{i=1}^N \bm{q}_i^T \left(f(\bm{X}_{\text{train}}, \bm{\theta}(t)  ) -       \bm{Y}_{\text{train}} \right)\bm{q}_i \\
&= \sum_{i=1}^N \left( e^{-\lambda_i t} \bm{q}_i^T \bm{Y}_{\text{train}} \right)\bm{q}_i.
\end{align}
Clearly, the network is biased to first learn the target function along the eigendirections of neural tangent kernel with larger eigenvalues, and then the rest components corresponding to smaller eigenvalues. 
A more detailed analysis on the convergence rate of different components is illustrated by Cao {\em et al.} \cite{cao2019towards}.
For conventional fully-connected neural networks, the eigenvalues of the NTK shrink monotonically as the frequency of the corresponding eigenfunctions increases, yielding a significantly lower convergence rate for high frequency components of the target function \cite{rahaman2019spectral, ronen2019convergence}.
This indeed reveals the so-called ``spectral bias'' \cite{rahaman2019spectral} pathology of deep neural networks.


Since the learnability of a target function by a neural network can be characterized by the eigenspace of its neural tangent kernel, it is very natural to ask: can we engineer the eigenspace of the NTK to accelerate convergence? If this is possible, can we leverage it to help the network  effectively learn different frequencies in the target function? In the next section, we will answer these questions by re-visitng the recently proposed random Fourier features embedding proposed by Tancik {\em et al.} \cite{tancik2020fourier}.

\subsection{Fourier feature embeddings}

\label{sec: Fourier_features}

Following the original formulation of Tancik {\em et al.} \cite{tancik2020fourier}, a random Fourier mapping $\gamma$ is defined as
\begin{align}
    \gamma(\bm{v})= \begin{bmatrix}
    \cos (\bm{B v} ) \\
    \sin (\bm{Bv} )
    \end{bmatrix},
\end{align}
where each entry in $\bm{B} \in \R^{m \times d}$
is sampled from a Gaussian distribution $\mathcal{N}(0, \sigma^2)$ and $\sigma > 0$ is a user-specified hyper-parameter. 
Then, a Fourier features network \cite{tancik2020fourier} can be simply constructed using a random Fourier features mapping $\gamma$ as a coordinate embedding of the inputs, 
followed by a conventional fully-connected neural network \cite{tancik2020fourier}. 

As shown in \cite{tancik2020fourier}, such a simple method can mitigate the pathology of spectral bias and enable networks to learn high frequencies more effectively, which can significantly improve the effectiveness of neural networks across many tasks including image regression, computed tomography, magnetic resonance imaging (MRI), etc.

In order to explore the deeper reasoning and understand the inner mechanisms behind this simple technique, we consider a two-layer bias-free neural network with Fourier features, i.e.
\begin{align}
    f(\bm{x}) = \frac{1}{\sqrt{m}} \bm{W} \cdot
    \begin{bmatrix}
    \cos (\bm{B x} ) \\
    \sin (\bm{Bx} )
    \end{bmatrix},
\end{align}
where $x \in \R^d$ is the input, $W \in \R^{1 \times 2m}$ is the weight matrix and $\bm{B} = [\bm{b}_1, \bm{b}_2, \dots, \bm{b}_m]^T \in \R^{m \times d}$ are sampled from Gaussian $\mathcal{N}(0, \sigma^2)$. Let $\{\bm{x}_i\}_{i=1}^N$ be input points in a compact domain $C$ . Then, according to equation \ref{eq: NTK}, the neural tangent kernel $\bm{K}$ is given as
\begin{align*}
    \bm{K}_{ij} = \bm{K}(\bm{x}_i, \bm{x}_j) &= \frac{1}{m}  
    \begin{bmatrix}
    \cos (\bm{B} \bm{x}_i)\\
    \sin (\bm{B} \bm{x}_j)
    \end{bmatrix}^{\mathrm{T}}
    \cdot  
        \begin{bmatrix}
    \cos (\bm{B} \bm{x}_i)\\
    \sin (\bm{B} \bm{x}_j)
    \end{bmatrix}  \\
    &= \frac{1}{m} \sum_{k=1}^m \cos(\bm{b}_k^T \bm{x}_i) \cos(\bm{b}_k^T \bm{x}_j)  + \sin(\bm{b}_k^T \bm{x}_i) \sin(\bm{b}_k^T \bm{x}_j)  \\
    &= \frac{1}{m} \sum_{k=1}^m \cos(\bm{b}_k^T (\bm{x}_i - \bm{x}_j)).
\end{align*}
To study the eigen-system of the kernel $\bm{K}$, we consider the limit of $\bm{K}$ as the number of points goes to infinity. In this limit the eigensystem of $\bm{K}$ approaches the eigen-system of the kernel function $K(\bm{x}, \bm{x}')$ which satisfies the following equation \cite{shawe2005eigenspectrum}
\begin{align}
    \label{eq: eigenfunc_equ}
    \int_C K\left(\bm{x}, \bm{x}'\right) g\left(\bm{x}'\right) d \bm{x}'=\lambda g\left(\bm{x}\right), 
\end{align}
where $K({\bm{x}}, {\bm{x}}') = \frac{1}{m} \sum_{k=1}^m \cos(\bm{b}_k^T (\bm{x} - \bm{x}'))$.
Note that  the kernel $K$ induces an Hilbert-Schdmit integral operator $T_K : L^2(C) \rightarrow L^2(C)$ 
\begin{align*}
    T_K(g)(\bm{x}) =  \int_C K\left(\bm{x}, \bm{x}'\right) g\left(\bm{x}'\right) d \bm{x}'.
\end{align*}
Also note that $T_K$
is a compact and self-adjoint operator, which implies that the eigenfunctions exist and all eigenvalues are real.
The following lemma reveals that eigenfunctions are indeed solutions to a eigenvalue problem. 
\begin{lemma}
\label{lemma: eigenfunc}
For the kernel $\bm{K}(\bm{x}, \bm{x}') = \frac{1}{m} \sum_{k=1}^m \cos(\bm{b}_k^T (\bm{x} - \bm{x}'))$, the eigenfunction $g(\bm{x})$ corresponding to non-zero eigenvalues satisfying the the following equation
\begin{align}
        \Delta g(\bm{x}) = - \frac{1}{m} \|\bm{B}\|_F^2  g(\bm{x})
\end{align}
\end{lemma}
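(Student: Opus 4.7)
The plan is to convert the integral eigenvalue equation (\ref{eq: eigenfunc_equ}) into a pointwise differential identity by applying the Laplacian $\Delta_{\bm{x}}$ to both sides. Because $K$ is a finite sum of smooth plane waves, differentiation under the integral sign is immediate, and I obtain
\begin{align*}
\int_C \Delta_{\bm{x}} K(\bm{x}, \bm{x}')\, g(\bm{x}')\, d\bm{x}' = \lambda\, \Delta g(\bm{x}).
\end{align*}
So the task reduces to computing $\Delta_{\bm{x}} K$ and then recasting it as a multiple of $K$ itself, so that the eigenrelation $T_K g = \lambda g$ can be reinserted on the left-hand side and the factor $\lambda$ cancelled.

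The core computation is that each summand $\cos(\bm{b}_k^T(\bm{x}-\bm{x}'))$ is a plane wave in $\bm{x}$ with wave-vector $\bm{b}_k$, giving
\begin{align*}
\Delta_{\bm{x}} \cos(\bm{b}_k^T(\bm{x}-\bm{x}')) = -\|\bm{b}_k\|^2 \cos(\bm{b}_k^T(\bm{x}-\bm{x}')).
\end{align*}
Summing over $k$ yields
\begin{align*}
\Delta_{\bm{x}} K(\bm{x}, \bm{x}') = -\frac{1}{m}\sum_{k=1}^m \|\bm{b}_k\|^2 \cos\bigl(\bm{b}_k^T(\bm{x}-\bm{x}')\bigr).
\end{align*}
Using the Frobenius-norm identity $\sum_{k=1}^m \|\bm{b}_k\|^2 = \|\bm{B}\|_F^2$ to identify the right-hand side with $-\frac{1}{m}\|\bm{B}\|_F^2\, K(\bm{x}, \bm{x}')$, substituting back and cancelling the nonzero $\lambda$ produces the claimed Helmholtz-type equation $\Delta g(\bm{x}) = -\frac{1}{m}\|\bm{B}\|_F^2\, g(\bm{x})$.

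The main obstacle is precisely this last identification: the Laplacian attaches the frequency-dependent weight $\|\bm{b}_k\|^2$ to the $k$-th plane wave, whereas $\|\bm{B}\|_F^2\, K$ weights all frequency components equally by their collective mass, so the replacement is not an exact pointwise algebraic identity for finite $m$. To bridge this gap I would invoke the large-$m$ regime where, since $\bm{b}_k \sim \mathcal{N}(0, \sigma^2 I_d)$ i.i.d., the strong law of large numbers gives $\frac{1}{m}\sum_k \|\bm{b}_k\|^2 \to d\sigma^2$, so the individual coefficients concentrate around their common mean and the replacement becomes asymptotically exact. A careful argument that the non-uniform fluctuations are subdominant once integrated against $g$ (equivalently, that the dominant non-zero eigenspace of $T_K$ is effectively monochromatic on the scale $\sqrt{d}\sigma$) is where the substantive analytical work lies; the rest of the proof is bookkeeping.
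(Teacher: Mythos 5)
Your calculation mirrors the paper's own: the paper differentiates the integral eigenvalue equation twice in each coordinate $\bm{x}_l$ and then sums over $l$, which is precisely the $\Delta_{\bm{x}}$ you apply directly, and both of you land on
\begin{align*}
\lambda\, \Delta g(\bm{x}) \;=\; -\frac{1}{m}\sum_{k=1}^m \|\bm{b}_k\|^2 \int_C \cos\bigl(\bm{b}_k^T(\bm{x}-\bm{x}')\bigr)\, g(\bm{x}')\, d\bm{x}'.
\end{align*}
Where you then stop and flag an obstacle, the paper's proof writes the next line as $-\frac{1}{m}\|\bm{B}\|_F^2 \lambda g(\bm{x}) = \lambda \Delta g(\bm{x})$ with no further comment. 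You are right to pause: that step replaces the weighted sum $\sum_k \|\bm{b}_k\|^2 I_k$, with $I_k = \int_C \cos(\bm{b}_k^T(\bm{x}-\bm{x}'))\,g(\bm{x}')\,d\bm{x}'$, by $\bigl(\frac{1}{m}\sum_k \|\bm{b}_k\|^2\bigr)\sum_k I_k$, and these are not equal for $m>1$ unless the $\|\bm{b}_k\|$ all coincide or the $I_k$ are proportional. So the gap you identify is genuine, and it is present in the paper's own proof: you have reproduced the argument faithfully and, in addition, exposed a hole that the paper glosses over.

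Your proposed repair (law of large numbers, $\frac{1}{m}\sum_k \|\bm{b}_k\|^2 \to d\sigma^2$) is a sensible instinct but it quietly changes the claim from an exact statement at fixed $m$ and fixed $\bm{B}$ into an asymptotic, almost-sure one, and it still does not control how an eigenfunction of $T_K$ spreads across the different frequency shells $\|\bm{b}_k\|$. Indeed a one-dimensional example with $m=2$, $b_1 \ne \pm b_2$ already produces eigenfunctions that mix $\cos(b_1 x), \sin(b_1 x), \cos(b_2 x), \sin(b_2 x)$ and therefore cannot solve $g'' = -\tfrac{1}{2}(b_1^2+b_2^2)\,g$. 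What the rank-$2m$ structure of $K$ actually delivers is that every eigenfunction with nonzero eigenvalue lies in $\mathrm{span}\{\cos(\bm{b}_k^T \bm{x}), \sin(\bm{b}_k^T \bm{x}) : k=1,\dots,m\}$; the single-frequency Helmholtz equation in the lemma holds as stated only under an extra hypothesis such as $m=1$ or $\|\bm{b}_1\|=\dots=\|\bm{b}_m\|$, and a complete proof would need to state that hypothesis explicitly rather than appeal to concentration.
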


\begin{proof}
The proof can be found in Appendix \ref{proof: lemma}.
\end{proof}

If we consider the Laplacian on the sphere $S^{d-1}$ and assume that $\frac{1}{m}\|\bm{B}\|_2^2 = l(l+d-2)$ for some positive integer $l$, then $g(\bm{x})$ are corresponding homogeneous harmonic polynomials of degree $l$ \cite{evans1998partial}. However, in general, directly solving this eigenvalue problem on a complex domain is intractable.

To obtain a better understanding of the behavior of the eigenfunctions and the corresponding eigenvalues,
let us consider a much simper case by setting $d=1$ and $m=1$. Specifically, we take the input $x \in \R$, the compact domain $C = [0,1]$ and the Fourier features
$\bm{B} = b \in \R$ are sampled from a Gaussian distribution $\mathbb{N}(0, \sigma^2)$. Then
the kernel function is given by
\begin{align*}
    K(x, x')  = \cos(b(x - x')).
\end{align*}

In this case, we can compute the exact expression of the  eigenfunctions and their eigenvalues, as summarized in the Proposition \ref{prop: eigenfunc} below.

\begin{proposition}
\label{prop: eigenfunc}
For the kernel function $K(x, x')  = \cos(b(x - x'))$, the non-zero eigenvalues are given by 
\begin{align}
    \lambda =  \frac{1 \pm \frac{\sin b}{b}}{2}.
\end{align}
The corresponding eigenfunctions $g(x)$ must have the form of
\begin{align}
    g(x) = C_1 \cos(b x) + C_2 \sin(b x),
\end{align}
where $C_1$ and $C_2$ are some constants.
\end{proposition}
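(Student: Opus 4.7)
The plan is to proceed in two stages. First I would invoke Lemma~\ref{lemma: eigenfunc} in the special case $d=m=1$ to pin down the functional form of any eigenfunction, and then substitute this form back into the integral equation~\eqref{eq: eigenfunc_equ} to reduce the problem to a $2\times 2$ matrix eigenvalue problem.

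For the first stage: with $d=m=1$ and $\bm{B} = b$, we have $\tfrac{1}{m}\|\bm{B}\|_F^2 = b^2$, and Lemma~\ref{lemma: eigenfunc} tells us that any eigenfunction associated with a nonzero eigenvalue must satisfy the ODE $g''(x) = -b^2 g(x)$. The general solution is $g(x) = C_1\cos(bx) + C_2\sin(bx)$, which already establishes the claimed form.

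For the second stage I would plug this ansatz into $\int_0^1 \cos(b(x-x'))\,g(x')\,dx' = \lambda g(x)$, using the angle-subtraction identity $\cos(b(x-x')) = \cos(bx)\cos(bx') + \sin(bx)\sin(bx')$ to separate the integral into a $\cos(bx)$ part and a $\sin(bx)$ part. Matching coefficients of $\cos(bx)$ and $\sin(bx)$ on both sides (which is legitimate because these two functions are linearly independent on $[0,1]$) produces the linear system
\begin{equation*}
M \begin{pmatrix} C_1 \\ C_2 \end{pmatrix} = \lambda \begin{pmatrix} C_1 \\ C_2 \end{pmatrix}, \qquad M_{ij} = \int_0^1 \phi_i(x')\phi_j(x')\,dx',
\end{equation*}
where $\phi_1(x') = \cos(bx')$ and $\phi_2(x') = \sin(bx')$. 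The nonzero eigenvalues of the integral operator therefore coincide with the eigenvalues of the symmetric matrix $M$.

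Finally I would compute $\mathrm{trace}(M) = \int_0^1 (\cos^2(bx')+\sin^2(bx'))\,dx' = 1$ and evaluate the three elementary integrals in $\det(M)$. A short trigonometric simplification (using $\sin^2(2b) + (1-\cos(2b))^2 = 2 - 2\cos(2b) = 4\sin^2 b$) collapses the determinant to $\tfrac{1}{4} - \tfrac{\sin^2 b}{4b^2}$, and the quadratic formula then yields $\lambda = \tfrac{1}{2}\bigl(1 \pm \tfrac{\sin b}{b}\bigr)$. The main obstacle here is not conceptual but book-keeping: making sure the trigonometric simplification of $\det(M)$ is carried out cleanly so that the discriminant factors as $(\sin b /b)^2$. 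One minor subtlety to note is that the two candidate values $C_1\cos(bx) \pm C_2\sin(bx)$ must correspond to the two eigenvectors of $M$, so the argument also implicitly identifies which linear combination belongs to which eigenvalue.
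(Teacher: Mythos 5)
Your proof is correct and follows essentially the same route as the paper: both invoke Lemma~\ref{lemma: eigenfunc} to reduce the eigenfunction to $g(x)=C_1\cos(bx)+C_2\sin(bx)$, substitute into the integral equation using the angle-subtraction identity, and diagonalize the resulting $2\times 2$ Gram matrix of $\{\cos(bx'),\sin(bx')\}$ on $[0,1]$. The only cosmetic difference is that you extract the eigenvalues via $\mathrm{trace}$ and $\det$ rather than writing out the characteristic polynomial term by term as the paper does.
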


\begin{proof}
The proof can be found in Appendix \ref{proof: prop_1}.
\end{proof}
From this Proposition, we immediately observe that 
the frequency of the eigenfunctions is determined by $b$ and the gap between the eigenvalues is $\frac{\sin b}{b}$. Besides, recall that $b$ is sampled from a Gaussian distribution $\mathcal{N}(0, \sigma^2)$, which implies that the larger the $\sigma$ we choose, the higher the probability that $b$ takes greater magnitude. Therefore, we may conclude that, for this toy model, large $\sigma$ would lead to high frequency eigenfunctions, as well as narrow eigenvalues gaps. As a result, Fourier features may resolve the issue of spectral bias and enable faster
convergence to high-frequency components of a target function.

Intuitively, we would expect that general fully-connected neural networks with Fourier features exhibit similar behaviors as our toy example. However, it is extremely difficult to  calculate the the eigenvalues and eigenfunctions for generic cases. Therefore, here we attempt to empirically verify our analysis by numerical experiments.

To this end, we first initialize two Fourier feature embeddings with $\sigma = 1, 10$, respectively,
and apply them to a one-dimensional input coordinate before passing them through a 4-layer fully-connected neural network with 100 units per hidden layer. Then, we study the 
NTK eigendecomposition of these two networks at initialization. Figure \ref{fig: reg_sigma_1} and figure \ref{fig: reg_sigma_10} show the visualizations of eigenfunctions and eigenvalues of the NTK computed using  $100$ equally spaced points in $[0,1]$  for  $\sigma = 1, 10$, respectively.  One can observe that the eigenvalues corresponding to the Fourier features with $\sigma = 10$ shrink much slower than the ones corresponding to $\sigma = 1$. Moreover, comparing the eigenvectors for different $\sigma$, it is easy to see that $\sigma = 10$ results in higher frequency eigenvectors than $\sigma = 1$.
This conclusion is further clarified by figure \ref{fig: sigma_spectrum}, which depicts the frequency content of the eigenvector corresponding to the largest eigenvalue, for different $\sigma \in  [1, 50]$. All these observations are consistent with Proposition \ref{prop: eigenfunc} and the analysis presented for the toy network with Fourier features.

\begin{figure}
     \centering
     \begin{subfigure}[b]{0.3\textwidth}
         \centering
         \includegraphics[width=\textwidth]{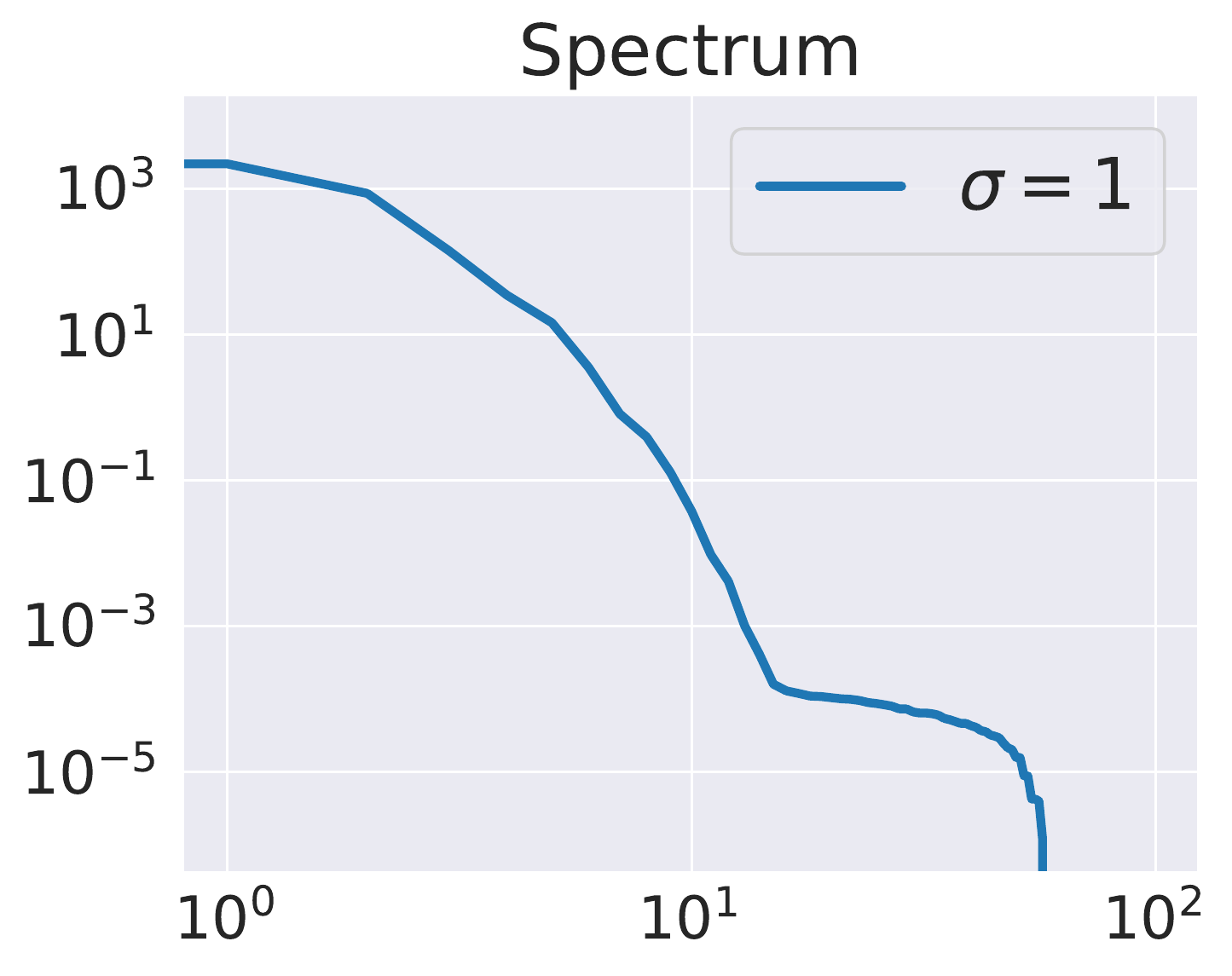}
         \caption{}
         \label{fig: reg_eigenval_sigma_1}
     \end{subfigure}
     \begin{subfigure}[b]{0.5\textwidth}
         \centering
         \includegraphics[width=\textwidth]{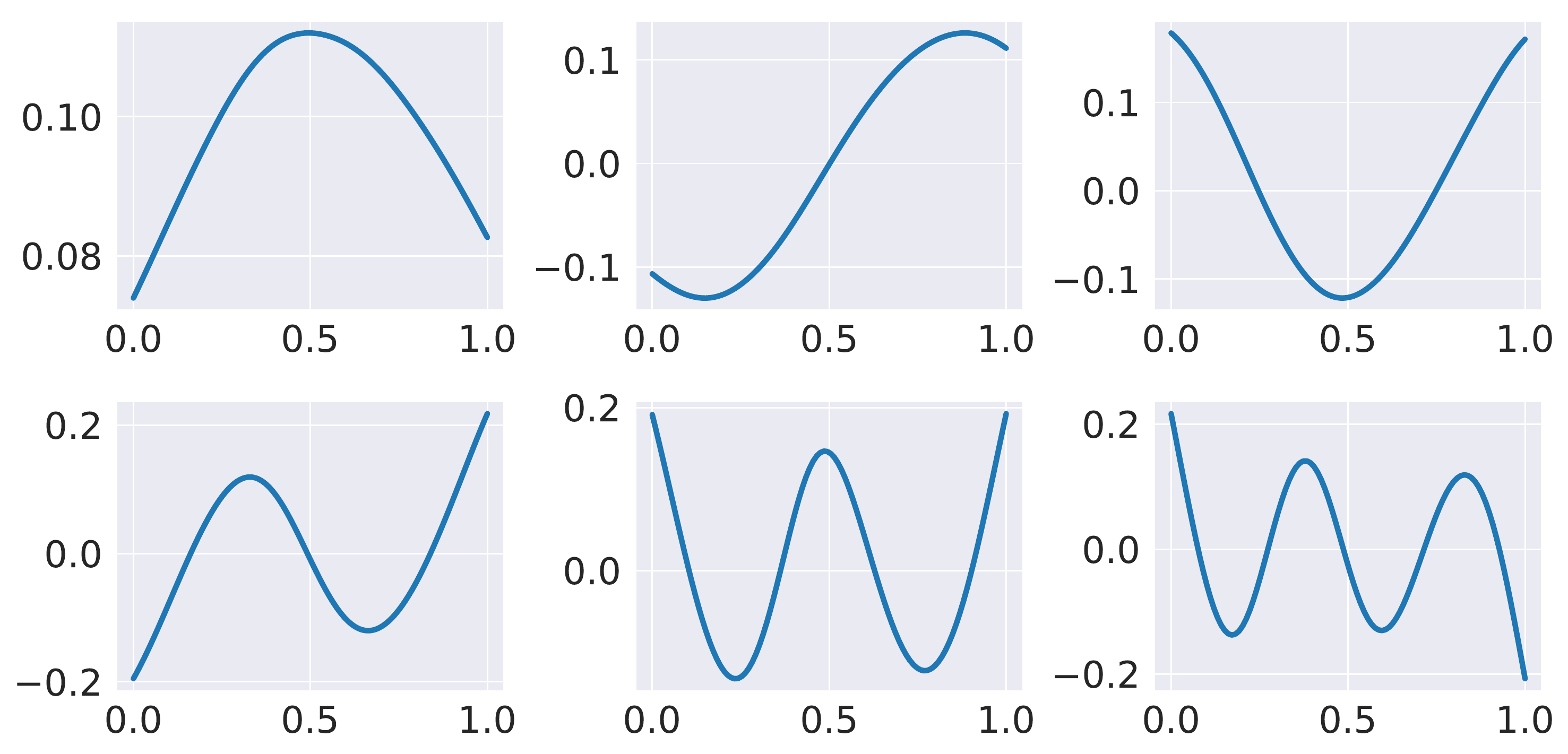}
         \caption{}
         \label{fig: reg_eigenfunc_sigma_1}
     \end{subfigure}
        \caption{{\em NTK eigen-decomposition of a  fully-connected neural network (4 layer, 100 hidden units, $\tanh$ activations) with Fourier features initialized by $\sigma = 1$ on 100 equally spaced points in $[0,1]$:} {\em (a)}: The NTK eigenvalues  in descending order. {\em (b):} The six leading eigenvectors of the NTK in descending order of corresponding eigenvalues.} 
        \label{fig: reg_sigma_1}
\end{figure}

\begin{figure}
     \centering
     \begin{subfigure}[b]{0.3\textwidth}
         \centering
         \includegraphics[width=\textwidth]{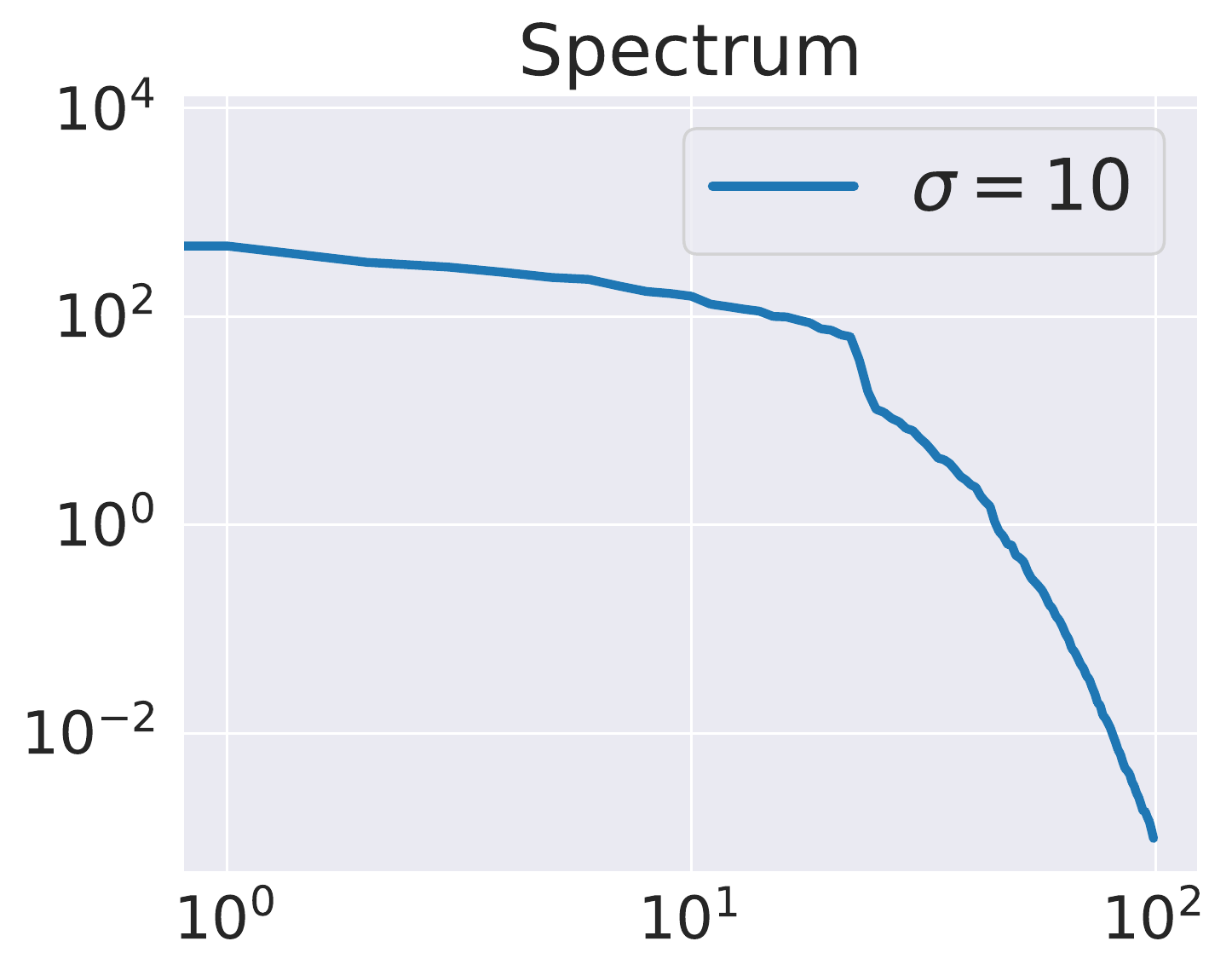}
         \caption{}
         \label{fig: reg_eigenval_sigma_10}
     \end{subfigure}
     \begin{subfigure}[b]{0.5\textwidth}
         \centering
         \includegraphics[width=\textwidth]{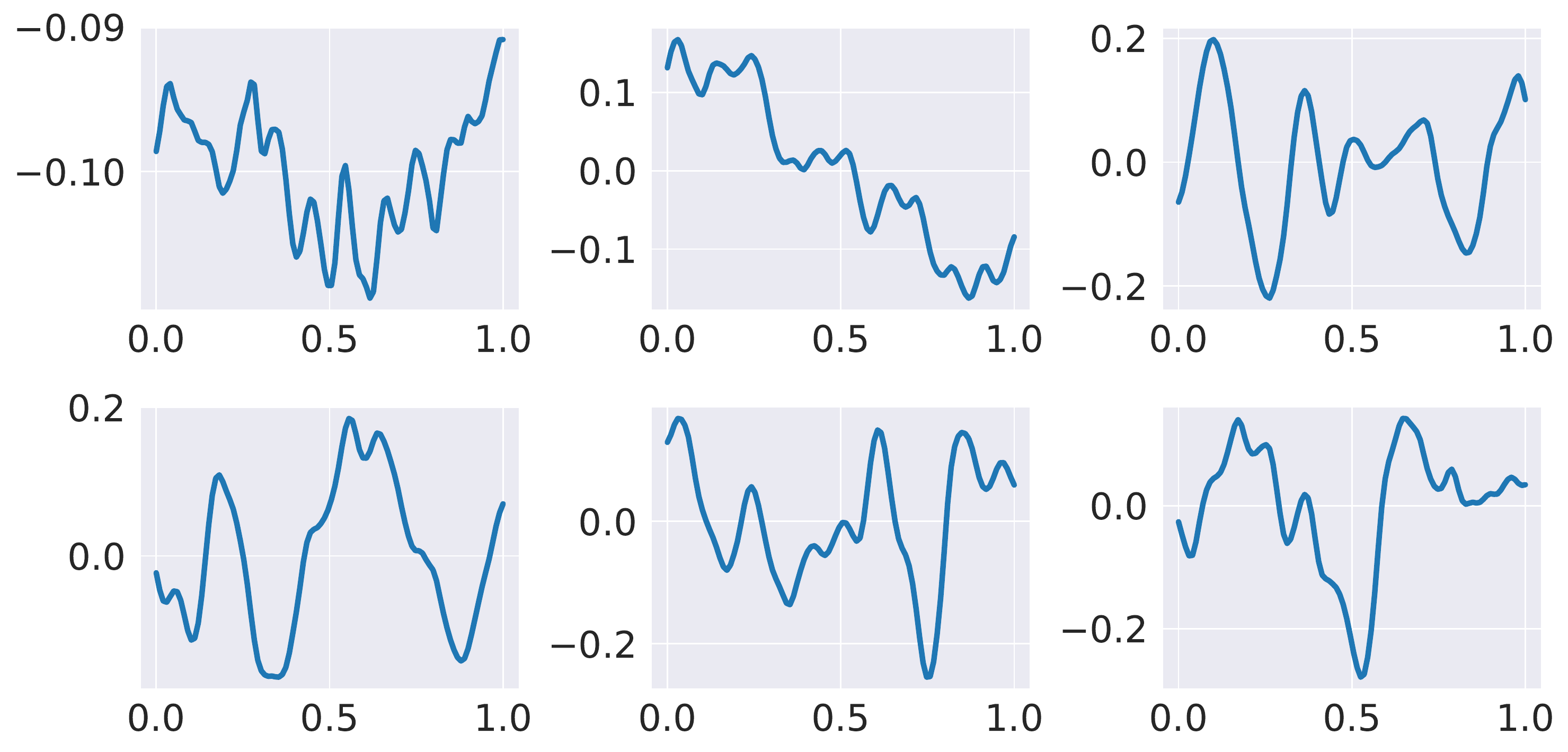}
         \caption{}
         \label{fig: reg_eigenfunc_sigma_10}
     \end{subfigure}
        \caption{{\em NTK eigen-decomposition of a  fully-connected neural network (4 layer, 100 hidden units, $\tanh$ activations) with Fourier features initialized by $\sigma = 10$ on 100 equally spaced points in $[0,1]$:} {\em (a)}: The NTK eigenvalues in descending order. {\em (b):} The six leading eigenvectors of the NTK  in descending order of corresponding eigenvalues.} 
        \label{fig: reg_sigma_10}
\end{figure}

\begin{figure}
    \centering
    \includegraphics[width=0.4\textwidth]{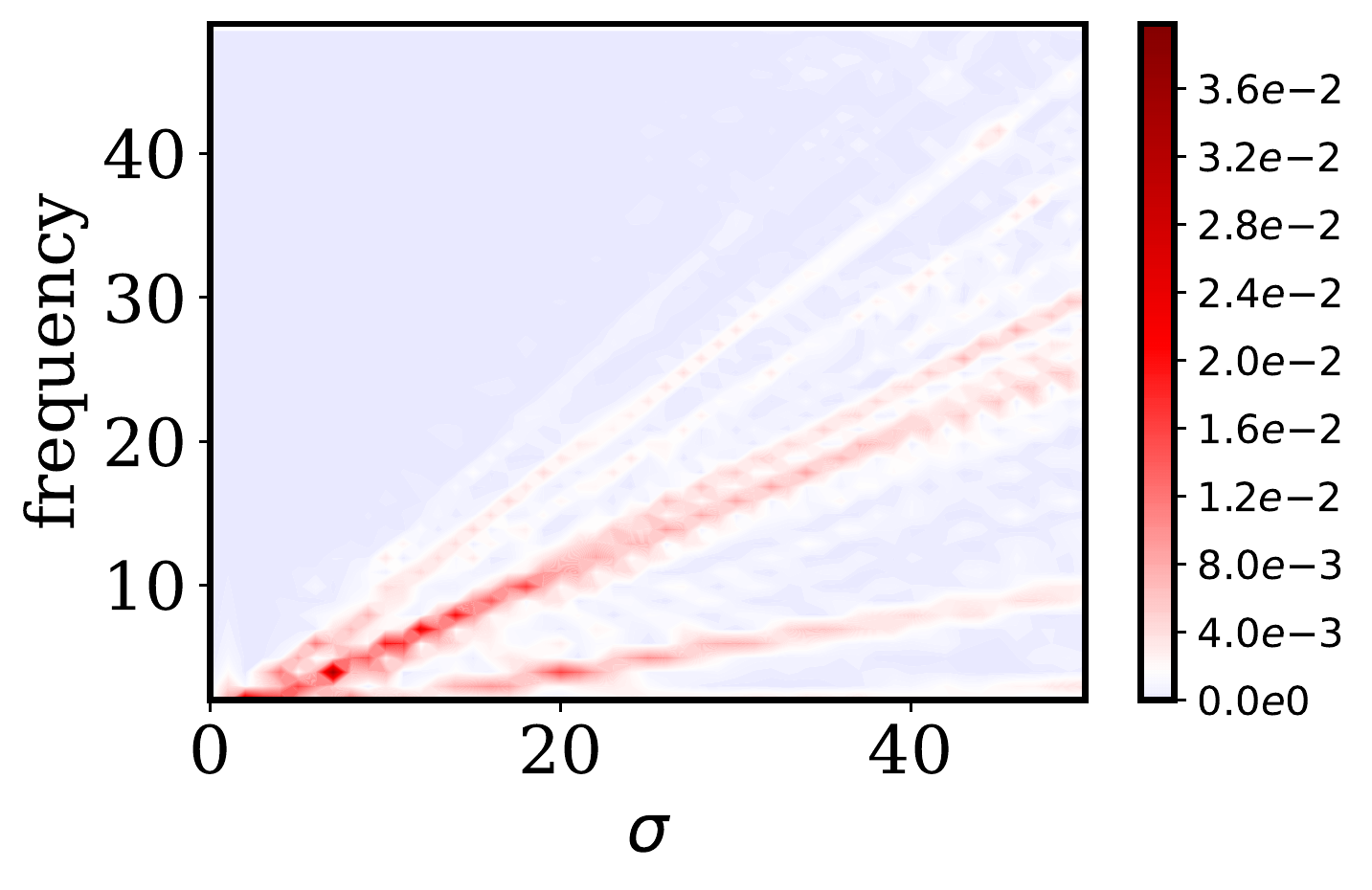}
  \caption{Frequency domain analysis of the first leading NTK eigenvector for a fully-connected neural network (4 layer, 100 hidden units, $\tanh$ activations) with Fourier features initialized by different $\sigma \in [1, 50]$, evaluated on 100 equally spaced points in $[0,1]$.
    }
    \label{fig: sigma_spectrum}
\end{figure}
 
Next, let us consider a simple  one-dimensional target function of the form 
\begin{align}
    f(x) = \sin(20 \pi x) + \sin(2 \pi x), \quad x \in [0,1],
\end{align}
and generate the training data $\{x_i, f(x_i)\}_{i=1}^{100}$ where $x_i$ are  evenly spaced in the unit interval. We proceed by training these two networks to fit the target function using the Adam optimizer \cite{kingma2014adam} with default settings for $1,000$ and $10,000$ epochs respectively.
The results for $\sigma = 1, 10$  are summarized in figure \ref{fig: reg_a_20_sigma_1}
and figure \ref{fig: reg_a_20_sigma_10}, respectively. It can be observed that low frequencies are learned first for $\sigma = 1$, which is pretty similar to the behavior of conventional fully-connected neural networks, commonly referred to as ``spectral bias'' \cite{rahaman2019spectral}. 
Notice, however, that it is high frequencies that are learned first for  $\sigma = 10$. As shown in figure \ref{fig: reg_sigma_1}
 and figure \ref{fig: reg_sigma_10}, we already know that the value of $\sigma$ determines the frequency of eigenvectors of the NTK. Therefore, this observation highly suggests that ``spectral bias'' actually corresponds to  ``eigenvector bias'', in the sense that the leading eigenvectors corresponding to large eigenvalues of the NTK determine the frequencies which the network is biased to learn first. 

Furthermore, one may note that the distribution of the eigenvalues corresponding to $\sigma = 1$ moves ``outward'' during training.  From our experience, the movement of the eigenvalue distribution results in the movement of its NTK, as well as the parameters of the network during training.
As shown in figure \ref{fig: reg_a_20_sigma_1_spec_error}), the parameters of the network barely move after a rapid change in the first few hundred epochs.
This indicates that the network initialization  is not suitable to fit the given target function because the parameters of the network have to move very far from initialization in order to reach a reasonable local minimum. In contrast, the distribution of the the eigenvalues corresponding to $\sigma = 10$ almost keeps the NTK spectrum fixed during training. Accordingly, in the middle panel of figure \ref{fig: reg_a_20_sigma_10_spec_error}, similar behavior can be observed, but the relative change of the parameters for case of $\sigma = 10$ is much less than the case of $\sigma=1$.
This suggests that the initialization of the network is ``good'' and desirable local minima exist in the vicinity of the parameter initialization in the corresponding loss landscape.
As shown in the top right panels of figure \ref{fig: reg_a_20_sigma_1} and figure \ref{fig: reg_a_20_sigma_10}, the relative $L^2$ prediction error corresponding to $\sigma =10$ decreases much faster than the relative $L^2$ error corresponding to $\sigma =1$. 

Finally, it is worth emphasizing that Fourier feature mappings initialized by large $\sigma$ do not always benefit the network, as too large value of $\sigma$ may cause over-fitting. To demonstrate this point, we initialize a Fourier feature mapping with $\sigma = 10$ and pass it through the same fully-connected network. We now consider $f(x) = \sin{\pi x} + \sin(2 \pi x)$ as the ground truth target function, from which $20$ points are uniformly sampled as training data. Then we train the network to fit the target function using the Adam optimizer with default settings for $1,000$ epochs. As shown in figure \ref{fig: reg_a_1_sigma_10}, although all training data pairs are perfectly approximated and the training error is very small, the network interpolates the training data with high frequency oscillations, and thus fails to correctly recover the target function. One possible explanation is that 
the neural network approximation tends to exhibit similar frequencies as the leading eigenvectors of its NTK.
Therefore, choosing an appropriate $\sigma$ such that the frequency of the leading NTK eigenvectors agrees with the  frequency of the target function plays an important role in the network performance, which not only accelerates the convergence speed, but also increases the prediction accuracy.

\begin{figure}
     \centering
     \begin{subfigure}[b]{0.8\textwidth}
         \centering
         \includegraphics[width=\textwidth]{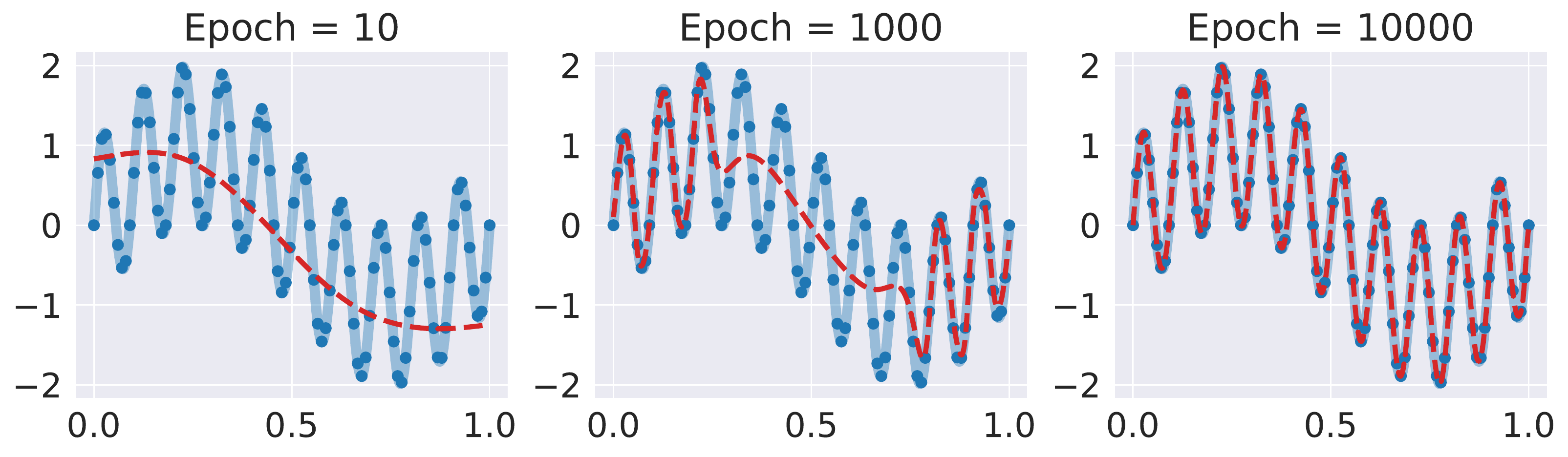}
         \caption{}
         \label{fig: reg_a_20_sigma_1_func}
     \end{subfigure}
     \begin{subfigure}[b]{0.8\textwidth}
         \centering
         \includegraphics[width=\textwidth]{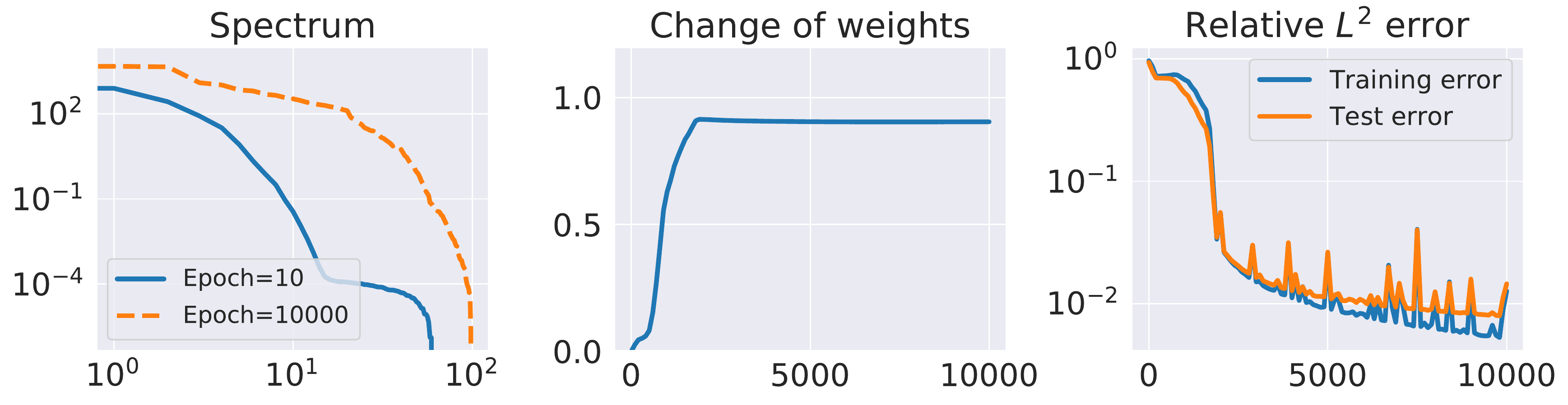}
         \caption{}
         \label{fig: reg_a_20_sigma_1_spec_error}
     \end{subfigure}
         \caption{{\em Training a network with Fourier features initialized by $\sigma = 10$ to fit the target function $f(x) = \sin(20\pi x) + \sin(2 \pi x)$ for $10,000$ epochs:} 
         {\em (a):} Network prediction (dash red) against the ground truth (light blue).  The network prediction exhibits high frequencies when fitting the data points during training.
         {\em (b) middle:} Relative change of the parameters $\bm{\theta}$ ($\frac{||\theta(t) - \theta(0)||_2}{\||\theta(0)\||_2}$) of the network during training.
         {\em (b) right:}  Relative $L^2$ training error and test error during training.}
        \label{fig: reg_a_20_sigma_1}
\end{figure}

\begin{figure}
     \centering
     \begin{subfigure}[b]{0.8\textwidth}
         \centering
         \includegraphics[width=\textwidth]{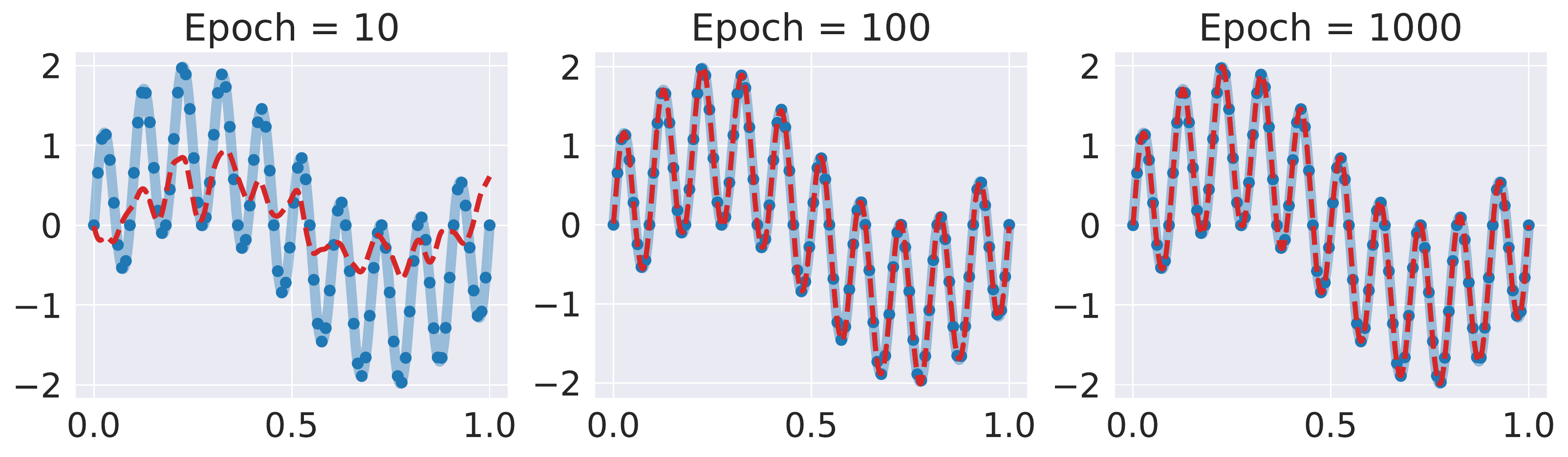}
         \caption{}
         \label{fig: reg_a_20_sigma_10_func}
     \end{subfigure}
     \begin{subfigure}[b]{0.8\textwidth}
         \centering
         \includegraphics[width=\textwidth]{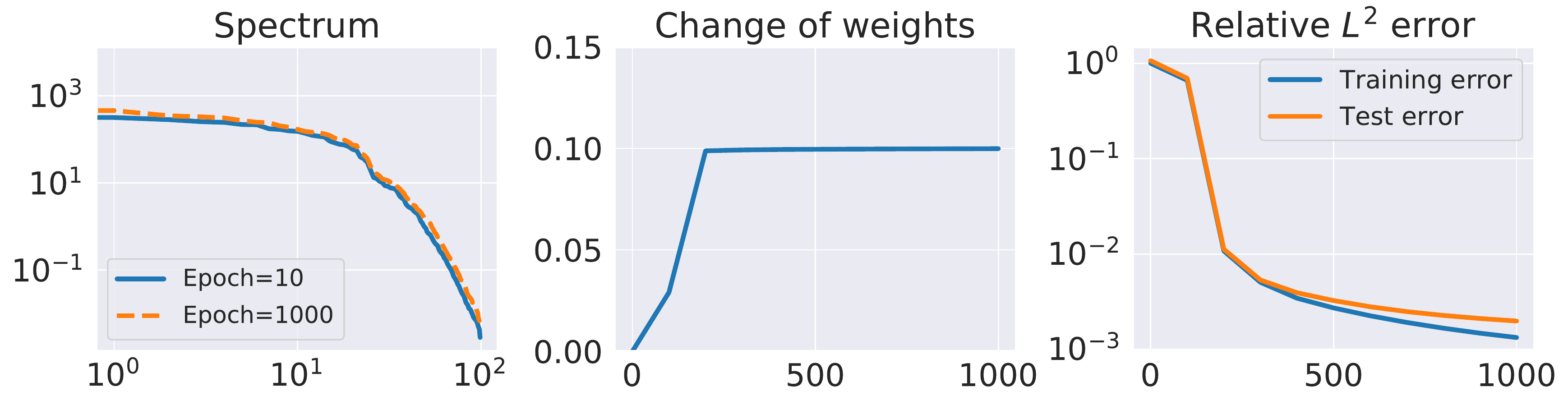}
         \caption{}
         \label{fig: reg_a_20_sigma_10_spec_error}
     \end{subfigure}
         \caption{{\em Training a network with Fourier features initialized by $\sigma = 10$ to fit the target function $f(x) = \sin(20\pi x) + \sin(2 \pi x)$ for $1,000$ epochs:} 
         {\em (a):} Network prediction (dash red) against the ground truth (light blue).  The network prediction exhibits high frequencies when fitting the data points during training.
         {\em (b) left:} Evolution of NTK eigenvalues during training. 
         {\em (b) middle:} Relative change of the parameters $\bm{\theta}$ ($\frac{||\theta(t) - \theta(0)||_2}{\||\theta(0)\||_2}$) of the network during training.
         {\em (b) right:} Relative $L^2$ training error and test error during training. }
        \label{fig: reg_a_20_sigma_10}
\end{figure}

\begin{figure}
     \centering
     \begin{subfigure}[b]{0.8\textwidth}
         \centering
         \includegraphics[width=\textwidth]{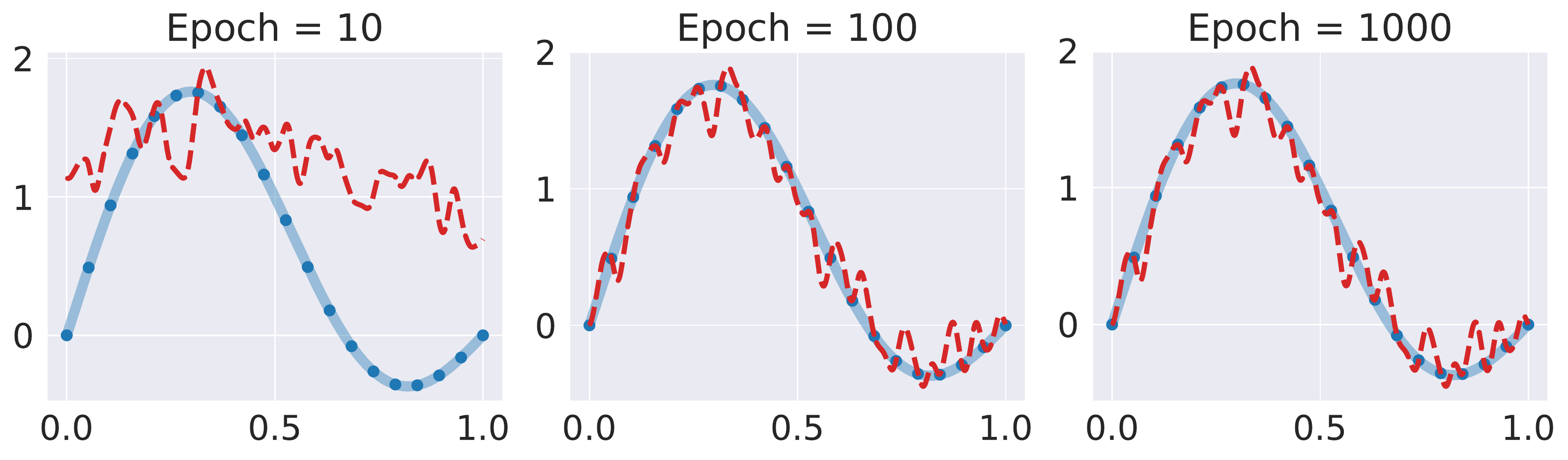}
         \caption{}
         \label{fig: reg_a_1_sigma_10_func}
     \end{subfigure}
     \begin{subfigure}[b]{0.8\textwidth}
         \centering
         \includegraphics[width=\textwidth]{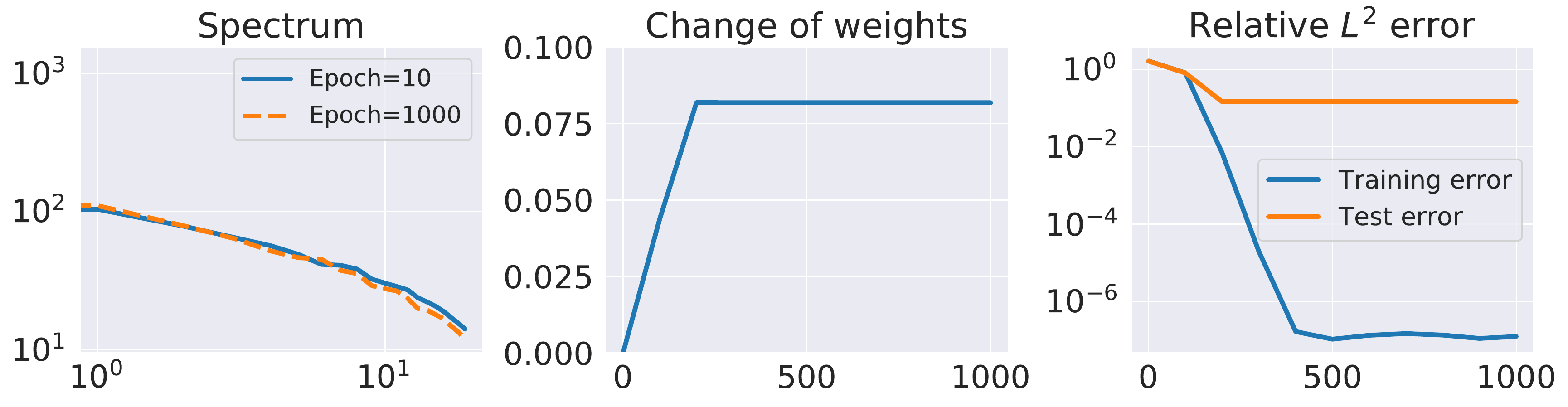}
         \caption{}
         \label{fig: reg_a_1_sigma_10_spec_error}
     \end{subfigure}
         \caption{{\em Training a network with Fourier features initialized by $\sigma = 10$ to fit the target function $f(x) = \sin(\pi x) + \sin(2 \pi x)$ for 1,000 epochs:} 
         {\em (a):} Network prediction (dash red) against the ground truth (light blue).  The network prediction exhibits high frequencies when fitting the data points during training.
           {\em (b) middle:} Relative change of the parameters $\bm{\theta}$ ($\frac{||\theta(t) - \theta(0)||_2}{\||\theta(0)\||_2}$) of the network during training.
         {\em (b) left:} Evolution of NTK eigenvalues  during training. {\em (b) right:} Relative $L^2$ training error and test error during training. }
        \label{fig: reg_a_1_sigma_10}
\end{figure}

\subsection{Multi-scale Fourier feature embeddings for physics-informed neural networks}

\label{sec: architecture}






In the previous section, we presented a detailed theoretical and numerical analysis of the NTK eigen-system of a neural network for classical $\ell_2$ regression problems. Building on this insight, we now draw our attention back to physics-informed neural networks for solving forward and 
inverse problems involving partial differential equations, whose solutions may exhibit multi-scale behavior.

To begin, we first note that the NTK of PINNs \cite{wang2020and} is slightly more complicated than the NTK of networks used for conventional regressions. To this end, we follow the setup presented in Wang {\em et al.} \cite{wang2020and}, and consider general PDEs with appropriate boundary conditions (see equations \ref{eq: PDE} and \ref{eq: BC}) and corresponding ``training data'' $\{\bm{x}_b^i, g(\bm{x}_b^i)\}_{i=1}^{N_b}, \{\bm{x}_r^i, f(\bm{x}_r^i)\}_{i=1}^{N_r}$, and define the NTK of PINNs as
\begin{align*}
    \bm{K}(t) =  \begin{bmatrix}
     \bm{K}_{uu}(t) & \bm{K}_{ur}(t) \\
     \bm{K}_{ru}(t) & \bm{K}_{rr}(t)
    \end{bmatrix},
\end{align*}
where $\bm{K}_{ru}(t) = \bm{K}_{ur}^T(t)$ and
$\bm{K}_{uu}(t) \in \R^{N_b \times N_b}, \bm{K}_{ur}(t) \in \R^{N_b \times N_r}, \text{ and } \bm{K}_{rr}(t) \in \R^{N_r \times N_r}$, whose $(i,j)$-th entry is given by
\begin{align}
       & (\bm{K}_{uu})_{ij}(t) =  \Big\langle  \frac{d \mathcal{B}[\bm{u}]({\bm x}_b^i, {\bm \theta}(t))}{d{\bm \theta}},  \frac{d\mathcal{B}[\bm{u}](\bm{x}_b^j, {\bm \theta}(t))}{d{\bm \theta}}    \Big\rangle \\
   & (\bm{K}_{ur})_{ij}(t)  =  \Big\langle  \frac{d \mathcal{B}[\bm{u}](\bm{x}_b^i, {\bm \theta}(t))}{d{\bm \theta}} , \frac{d \mathcal{N}[\bm{u}](\bm{x}_r^j, {\bm \theta}(t))}{d{\bm \theta}}  \Big\rangle  \\
   & (\bm{K}_{rr})_{ij}(t)  =  \Big\langle \frac{d \mathcal{N}[\bm{u}](\bm{x}_r^i, {\bm \theta}(t))}{d{\bm \theta}},  \frac{d \mathcal{N}[\bm{u}](\bm{x}_r^j, {\bm \theta}(t))}{d{\bm \theta}}   \Big\rangle.
\end{align}

Then, the training dynamics of PINNs under gradient descent with an infinitesimally small learning rate can be characterized by the following ODE system
\begin{align}
\label{eq: PINN_ode}
    \begin{bmatrix}
    \frac{d \mathcal{B}[\bm{u}] (\bm{x}_b, {\bm \theta}(t))}{dt}\\
    \frac{d \mathcal{N}[\bm{u}](\bm{x}_r, {\bm \theta}(t))}{dt}
    \end{bmatrix}
    =
       - \begin{bmatrix}
     \bm{K}_{uu}(t) & \bm{K}_{ur}(t) \\
     \bm{K}_{ru}(t) & \bm{K}_{rr}(t)
    \end{bmatrix}
    \cdot
       \begin{bmatrix}
    \mathcal{B}[\bm{u}](\bm{x}_b, {\bm \theta}(t)) - \bm{g}(\bm{x}_b) \\
    \mathcal{N}[\bm{u}](\bm{x}_r, {\bm \theta}(t)) - \bm{f}(\bm{x}_r)
    \end{bmatrix}.
\end{align}

Then, the NTK framework allows us to show the following Proposition.

\begin{proposition}  
\label{prop: pinns}
Suppose that the training dynamics of PINNs satisfies equation \ref{eq: PINN_ode} and the spectral decompositions of $\bm{K}_{uu}(0)$ and $\bm{K}_{rr}(0)$ are
\begin{align}
    &\bm{K}_{uu}(0) =  \bm{Q}_u^T \Lambda_u  \bm{Q}_u^T \\
    &\bm{K}_{rr}(0) =  \bm{Q}_r^T \Lambda_r  \bm{Q}_r^T, 
\end{align}
where $\bm{Q}_u$ and $\bm{Q}_r$ are orthogonal matrices consisting of eigenvectors of $\bm{K}_{uu}(0)$ and $\bm{K}_{rr}(0)$, respectively, and
$\Lambda_u$ and $\Lambda_r$ are diagonal matrices whose entries are the eigenvalues of $\bm{K}_{uu}(0)$ and $\bm{K}_{rr}(0)$, respectively.
Given the assumptions
\begin{enumerate}[label=(\roman*),leftmargin=*]
    \item  $\bm{K}(t) \approx \bm{K}(0)$ for all $t \geq 0$.

    \item $\bm{K}_{uu}(0)$ and $\bm{K}_{rr}(0)$ are positive definite,
\end{enumerate}
we can write $\bm{B} = \bm{Q}_r^T, \bm{K}_{ru}(0) \bm{Q}_u$, and obtain
\begin{align}
\bm{Q}^T 
     \left( \begin{bmatrix}
    \mathcal{B}[\bm{u}](\bm{x}_b, {\bm \theta}(t))\\
   \mathcal{N}[\bm{u}](\bm{x}_r, {\bm \theta}(t))
    \end{bmatrix} -
     \begin{bmatrix}
     \bm{g}(\bm{x}_b) \\
    \bm{f}(\bm{x}_r)
    \end{bmatrix} \right) 
    \approx e^{- \bm{P }^T  \Tilde{\bm{\Lambda}}  \bm{P } t}\bm{Q}^T 
     \begin{bmatrix}
      \bm{g}(\bm{x}_b) \\
       \bm{f}(\bm{x}_r)
       \end{bmatrix},
\end{align}
where  
\begin{align}
    \bm{Q} = \begin{bmatrix}
             \bm{Q}_u & 0\\
           0 &  \bm{Q}_r
           \end{bmatrix}, \quad
\bm{P} = \begin{bmatrix}
           \bm{I} & 0 \\
           - \bm{B} \bm{\Lambda}_u^{-1}  & \bm{I}
            \end{bmatrix}, \quad
        \bm{\Lambda} =  \begin{bmatrix}
           \bm{ \Lambda}_u & 0\\
           0& \bm{\Lambda}_r - \bm{B}^T \bm{\Lambda}_u^{-1} \bm{B}
           \end{bmatrix}.
\end{align}

\end{proposition}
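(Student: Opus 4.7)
The plan is to reduce the training dynamics \eqref{eq: PINN_ode} to a constant-coefficient linear ODE via assumption (i), solve by matrix exponentiation, and then recast the exponent in the claimed form by composing two changes of basis: first an orthogonal block rotation $\bm{Q}$ that simultaneously diagonalizes the two diagonal blocks of $\bm{K}(0)$, then a block Schur-complement ($LDL^{\top}$) factorization of the resulting $2\times 2$ block matrix, which is available because $\bm{\Lambda}_u$ is invertible by (ii).

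First, by (i) the kernel is frozen, so \eqref{eq: PINN_ode} becomes $\dot{\bm{e}}(t) = -\bm{K}(0)\,\bm{e}(t)$, where $\bm{e}(t)$ is the concatenated residual appearing on the left-hand side of the claimed identity. The solution is $\bm{e}(t) = e^{-\bm{K}(0)t}\,\bm{e}(0)$. Following the analogous regression argument at the end of Section~\ref{sec: NTK_spectrum_bias}, I would take the network outputs at initialization to be negligible so that $\bm{e}(0)\approx -\bm{Y}$ with $\bm{Y}$ the concatenated target vector; the task then reduces to rewriting $\bm{Q}^{\top} e^{-\bm{K}(0)t}$ in the stated form (the overall sign in the displayed identity appears to be a typographical slip).

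Second, assemble $\bm{Q} = \mathrm{diag}(\bm{Q}_u,\bm{Q}_r)$ from the two given spectral decompositions; a blockwise computation yields $\bm{Q}^{\top}\bm{K}(0)\bm{Q}$ as the symmetric $2\times 2$ block matrix with diagonal blocks $\bm{\Lambda}_u,\bm{\Lambda}_r$ and off-diagonal block $\bm{B} = \bm{Q}_r^{\top}\bm{K}_{ru}(0)\bm{Q}_u$. Since $\bm{Q}$ is orthogonal, the exponential is conjugation-covariant, giving $\bm{Q}^{\top} e^{-\bm{K}(0)t} = e^{-\bm{Q}^{\top}\bm{K}(0)\bm{Q}\,t}\,\bm{Q}^{\top}$. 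Now invoke the block $LDL^{\top}$ factorization using invertibility of $\bm{\Lambda}_u$: one obtains $\bm{Q}^{\top}\bm{K}(0)\bm{Q} = \bm{P}^{\top}\tilde{\bm{\Lambda}}\bm{P}$, with $\bm{P}$ block-unit-triangular whose only nonzero off-diagonal block is determined by $\bm{B}\bm{\Lambda}_u^{-1}$ and $\tilde{\bm{\Lambda}}$ block-diagonal whose second block is the Schur complement $\bm{\Lambda}_r - \bm{B}\bm{\Lambda}_u^{-1}\bm{B}^{\top}$; substituting this into the exponential above yields the claimed identity.

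The main subtlety, and really the only obstacle, is that the $LDL^{\top}$ step is a \emph{congruence} rather than a similarity (because $\bm{P}$ is not orthogonal), so $e^{-\bm{P}^{\top}\tilde{\bm{\Lambda}}\bm{P}\,t}$ does \emph{not} further simplify to $\bm{P}^{-1} e^{-\tilde{\bm{\Lambda}}t}\bm{P}$ and must be kept in the displayed form. This is precisely what makes the Proposition informative: the Schur complement encodes how the cross-kernel $\bm{K}_{ur}(0)$ slows and mixes the residual-loss eigen-modes whenever they fail to be orthogonal to the boundary-loss eigen-modes, which is exactly the PINN analogue of the NTK eigenvector bias analyzed in Section~\ref{sec: NTK_spectrum_bias}.
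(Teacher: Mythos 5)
Your proposal follows essentially the same path as the paper's own proof: freeze the kernel via assumption (i), solve the resulting constant-coefficient ODE (implicitly taking the initial outputs to be negligible so that the residual reduces to $-e^{-\bm{K}(0)t}[\bm{g};\bm{f}]$), apply the orthogonal block rotation $\bm{Q}=\mathrm{diag}(\bm{Q}_u,\bm{Q}_r)$, and finish with a block $LDL^{\top}$ (Schur-complement) factorization using the invertibility of $\bm{\Lambda}_u$ guaranteed by (ii). Your version is in fact slightly more careful than the paper's: the dimensionally correct Schur complement is $\bm{\Lambda}_r-\bm{B}\bm{\Lambda}_u^{-1}\bm{B}^{\top}$ as you wrote (the paper's $\bm{\Lambda}_r-\bm{B}^{\top}\bm{\Lambda}_u^{-1}\bm{B}$ only type-checks when $N_b=N_r$), the block lower-triangular factor should carry $+\bm{B}\bm{\Lambda}_u^{-1}$ rather than the paper's spurious minus sign, you rightly flag the missing overall minus sign in the displayed identity, and the exponent should carry $\bm{\Lambda}$ (the block-diagonal factor) rather than $\tilde{\bm{\Lambda}}=\bm{Q}^{\top}\bm{K}(0)\bm{Q}$, consistent with the paper's own proof but not its statement.
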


The above proposition reveals that, under some assumptions, the resulting NTK eigen-system of PINNs is determined by the eigenvectors of $\bm{K}_{uu}$ and $\bm{K}_{rr}$. Understanding the behavior of this eigen-system derived from the NTK of PINNs should be at the core of future extensions of this line of research.

As mentioned in section \ref{sec: overview_PINNs}, PINNs often struggle in solving multi-scale problems. Unlike  conventional regression tasks, there is generally no or just a handful data points provided for PINNs inside the computational domain. This is similar to the case illustrated in figure \ref{fig: reg_a_1_sigma_10} where the network would fit the target function biases towards its preferred frequencies, which are determined by the eigenvectors of its NTK. 
Consequently, PINNs using fully-connected networks would learn the solutions and their PDE residuals with the lowest frequency first, due to ``spectral bias''. We believe that this may be one of the fundamental reasons that cause failure of PINNs in learning high-frequency or multi-scale solutions of PDEs. 

Inspired by our analysis and observations of Fourier features in section \ref{sec: Fourier_features}, we present a novel network architecture to handle multi-scale problems. As illustrated in figure \ref{fig: arch_mFF}, we apply multiple Fourier feature embeddings initialized with different $\sigma$ to input coordinates before passing these embedded inputs through the same fully-connected neural network and finally concatenate the outputs with a linear layer.  The detailed forward pass is defined as follows:
\begin{align}
    &\bm{\gamma}^{(i)}(\bm{x}) = 
    \begin{bmatrix}
        \cos (2 \pi \bm{B}^{(i)} \bm{x})\\
        \sin (2 \pi \bm{B}^{(i)} \bm{x})
    \end{bmatrix},  \quad \text{ for } i=1, 2, \dots, M\\
    &\bm{H}^{(i)}_1 = \phi(\bm{W}_1 \cdot \bm{\gamma}^{(i)}(\bm{x})  + \bm{b}_1),  \quad \text{ for } i=1, 2, \dots, M \\
    & \bm{H}^{(i)}_\ell = \phi(\bm{W}_\ell \cdot \bm{H}^{(i)}_{\ell - 1}  + \bm{b}_\ell),  \quad \text{ for } \ell=2,  \dots, L,  i=1, 2, \dots, M\\
        & \bm{f}_{\bm{\theta}}(\bm{x}) = \bm{W}_{L+1} \cdot \left[  \bm{H}^{(1)}_L,  \bm{H}^{(2)}_L, \dots,   \bm{H}^{(M)}_L  \right] + \bm{b}_{L+1},
\end{align}
where $\bm{\gamma}^{(i)}$ and $\phi$ denote Fourier feature mappings and activation functions, respectively, and each entry in $\bm{B}^{(i)} \in \R^{m \times d}$ is sampled from $\mathcal{N}(0, \sigma_i)$, and is held fixed during model training (i.e. $\bm{B}^{(i)}$ are not trainable parameters, as in \cite{tancik2020fourier}). 
Notice that the weights and the biases of this architecture are essentially the same as in a standard fully-connected neural network.
Here, we underline that the choice of $\sigma_i$ is problem-dependent and typical values can be  $1, 20, 50, 100, $ etc. 
%

To better understand the motivation behind this architecture, suppose that $\bm{f}_{\bm{\theta}}$ is a approximation of a given target function $f$ whose Fourier decomposition is 
\begin{align}
    \label{eq: Fourier_decomposition}
  f(x)=\sum_{k=-\infty}^{\infty} \hat{f}_{k} e^{i k x},
\end{align}
where $\hat{f}_k$ is the Fourier coefficient corresponding to the wave-number $k$ \cite{hesthaven2007spectral}. Note that $\bm{f}_{\bm{\theta}}$ is simply a linear combination of $\{\bm{H}_L^{(i)}\}_{i=1}^M$, which has some degree of consistency with equation \ref{eq: Fourier_decomposition}.
Moreover, we emphasize again that, for networks with Fourier features,  $\sigma$ determines the frequency that the networks prefer to learn. As a result, if we just employ networks with one Fourier feature embedding,
then whatever the value of $\sigma$ is used to initialize Fourier feature mappings, will yield slower convergence to the rest frequency components, except for the preferable frequencies determined by the choice of $\sigma$. Therefore, it is reasonable to embed inputs to several Fourier feature mappings with different $\sigma$ and concatenate them through a linear layer after the forward propagation such that all frequency components can be learned with the same convergence rate. 






For time-dependent problems, multi-scale behavior may exist not only across spatial directions but also across time. Thus, we present another novel multi-scale Fourier feature architecture to tackle multi-scale problems in spatio-temporal domains. Specifically, the feed-forward pass of the network is now defined as


\begin{align}
     &\bm{\gamma}^{(i)}_{\bm{x}}(\bm{x}) = 
    \begin{bmatrix}
        \cos (2 \pi \bm{B}^{(i)}_{\bm{x}} \bm{x})\\
        \sin (2 \pi \bm{B}^{(i)}_{\bm{x}} \bm{x})
    \end{bmatrix}, \ 
    \bm{H}^{(i)}_{\bm{x}, 1} = \phi(\bm{W}_1 \cdot \bm{\gamma}^{(i)}_{\bm{x}}(\bm{x})  + \bm{b}_1),
    \quad \text{ for } i=1, 2, \dots, M_x,\\
    &\bm{\gamma}^{(j)}_{t}(t) = 
    \begin{bmatrix}
        \cos (2 \pi \bm{B}^{(j)}_{t} \bm{x})\\
        \sin (2 \pi \bm{B}^{(j)}_{t} t)
    \end{bmatrix}, \ 
    \bm{H}^{(j)}_{t, 1} = \phi(\bm{W}_1 \cdot \bm{\gamma}^{(j)}_{t}(t)  + \bm{b}_1), 
    \quad \text{ for } j=1, 2, \dots, M_t,\\
    \label{eq: ST_mFF_forward_pass_1}
    & \bm{H}^{(i)}_{\bm{x}, \ell} = \phi(\bm{W}_\ell \cdot \bm{H}^{(i)}_{\bm{x}, \ell-1}  + \bm{b}_\ell),  \quad \text{ for } \ell=2,  \dots, L \text{ and } i=1,2, \dots, M_x,\\
     \label{eq: ST_mFF_forward_pass_2}
    & \bm{H}^{(j)}_{t, \ell} = \phi(\bm{W}_\ell \cdot \bm{H}^{(j)}_{t, \ell-1}  + \bm{b}_\ell),  \quad \text{ for } \ell=2,  \dots, L \text{ and } j=1,2, \dots, M_t,\\
     \label{eq: ST_mFF_concatenate_1}
    &    \bm{H}_{L}^{(i,j)} =  \bm{H}^{(i)}_{\bm{x}, L} \odot  \bm{H}^{(j)}_{t, L}, \quad \text{ for } i=1,2, \dots, M_x \text{ and } j=1,2, \dots, M_t, \\
    \label{eq: ST_mFF_concatenate_2}
    & \bm{f}_{\bm{\theta}}(\bm{x}, t) = \bm{W}_{L+1} \cdot \left[  \bm{H}_{L}^{(1,1)}, \dots,  \bm{H}_{L}^{(M_x,M_t)} 
\right] + \bm{b}_{L+1},
\end{align}
where $\bm{\gamma}_{\bm{x}}^{(i)}$ and $\bm{\gamma}_t^{(j)}$ denote spatial and temporal Fourier feature mappings, respectively, and $\odot$ represents the point-wise multiplication. Here each entry of
$\bm{B}_{\bm{x}}^{(i)}$ and $\bm{B}_t^{(j)}$ are sampled from $\mathcal{N}(0, \sigma_i^{\bm{x}})$ and  $\mathcal{N}(0, \sigma_j^t)$, respectively, and are held fixed during model training.  A visualization of this architecture is presented in figure \ref{fig: arch_spatail_temporal_mFF}. One key difference from figure \ref{fig: arch_mFF}
is that we apply separate Fourier feature embeddings to spatial and temporal input coordinates before passing the embedded inputs through the same fully-connected network. Another key difference is that we merge spatial outputs $\bm{H}_{\bm{x}, L}^{(i)}$ and temporal outputs $\bm{H}_{t, L}^{(j)}$ using point-wise multiplication and passing them through a linear layer. Heuristically, this architecture is consistent with the Fourier spectral method \cite{hesthaven2007spectral}, i.e, given a function $f(\bm{x}, t)$, using Fourier series we may rewrite it as 
\begin{align}
    f(\bm{x},t) = \sum_{k=-\infty}^{\infty} \hat{f}_{k}(t) e^{i k \bm{x}}  
\end{align}
where  $\hat{f}_k(t)$ is the Fourier coefficient corresponding to the wavenumber $k$ at time $t$.

It is also worth noting that both proposed architectures do not introduce any additional trainable parameters compared to conventional PINN models, nor they require significantly more floating point operations to evaluate their forward or backward pass. Therefore they can be used as drop-in replacements to conventional fully-connected architectures with no sacrifices to computational efficiency. In section \ref{sec: results}, we will validate the effectiveness of the proposed architectures through a series of systematic numerical experiments.


\begin{figure}
     \centering
     \begin{subfigure}[b]{0.45\textwidth}
         \centering
         \includegraphics[width=\textwidth]{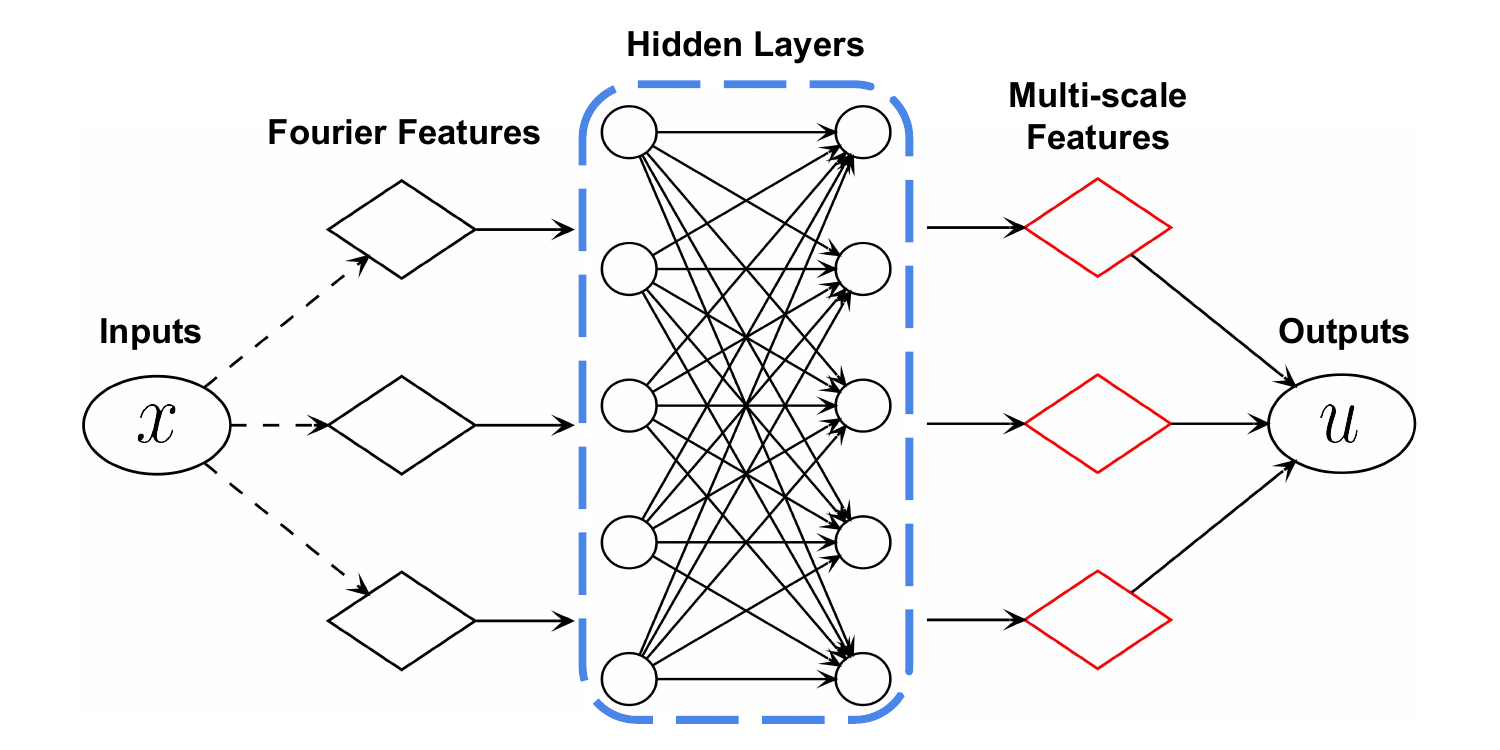}
         \caption{}
         \label{fig: arch_mFF}
     \end{subfigure}
     \begin{subfigure}[b]{0.45\textwidth}
         \centering
         \includegraphics[width=\textwidth]{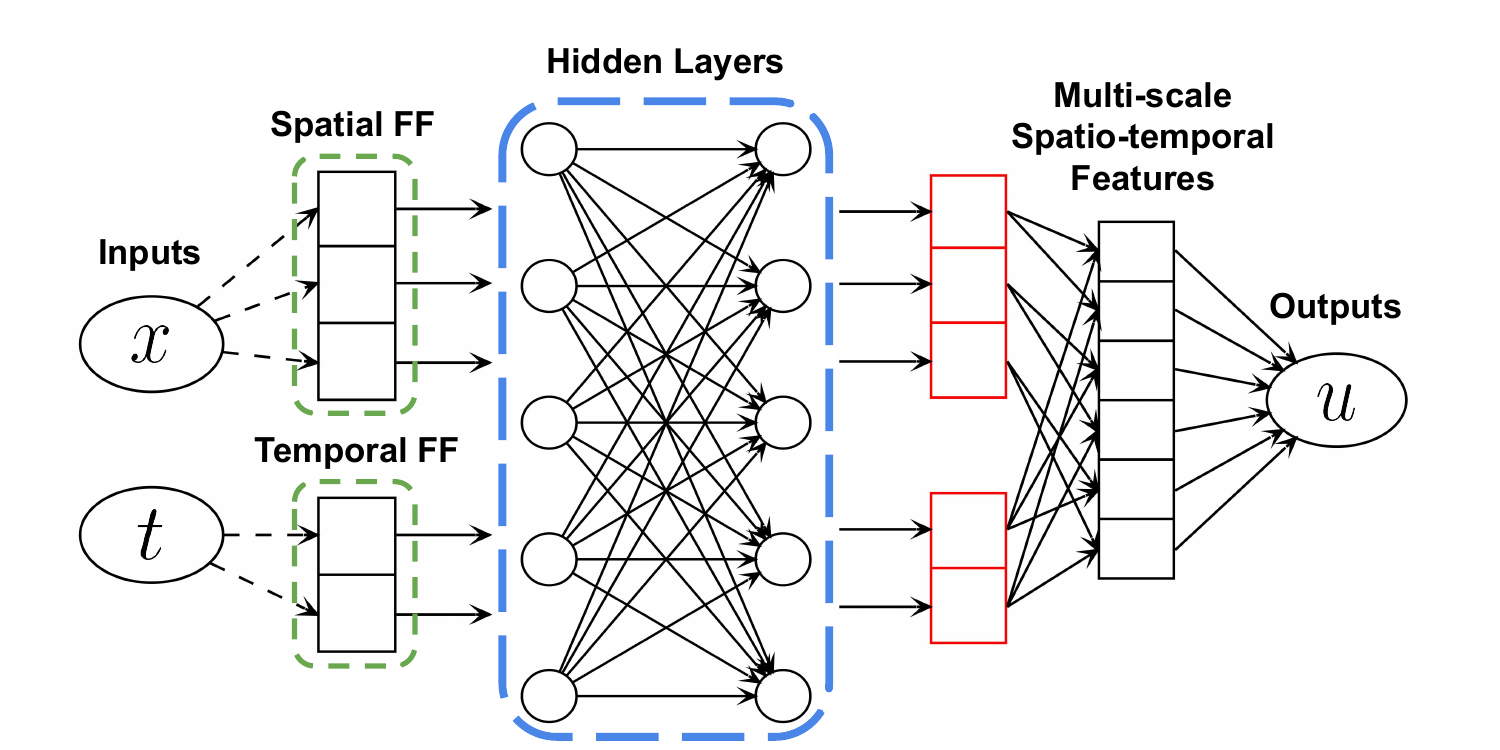}
         \caption{}
         \label{fig: arch_spatail_temporal_mFF}
     \end{subfigure}
        \caption{{\em (a) Multi-scale Fourier feature architecture:} multiple Fourier feature embeddings (initialized with different $\sigma$) are applied to input coordinates and then passed through the same fully-connected neural network, before the outputs are finally concatenated with a linear layer. {\em (b) Spatio-temporal multi-scale Fourier feature architecture:} Multi-scale Fourier feature architecture: multiple Fourier feature embeddings (initialized with different $\sigma$) are separately applied to spatial and temporal input coordinates and then passed through the same fully-connected neural network. Merging of the spatial and temporal outputs is performed using a point-wise multiplication layer, before obtaining the final outputs through a linear layer.}
        \label{fig: architecture}
\end{figure}


\section{Results}
\label{sec: results}


In this section we demonstrate the performance of
the proposed architectures in solving forward and inverse multi-scale problems. Throughout all benchmarks, we employ hyperbolic tangent activation functions and initialize the network using the Glorot normal scheme \cite{glorot2010understanding}. All networks are trained via stochastic gradient descent using the Adam optimizer \cite{kingma2014adam} with defaulting settings. Particularly, we employ  exponential learning rate decay with a decay-rate of 0.9 every $1,000$ training iterations. All results presented in this section can be reproduced using our publicly available codes at \url{https://github.com/PredictiveIntelligenceLab/MultiscalePINNs}.

\subsection{1D Poisson equation}

We begin with a pedagogical example involving the one-dimensional (1D) Poisson equation benchmark described in section \ref{sec: overview_PINNs} and examine the performance of the proposed architectures.

We begin by approximating the latent solution $u(x)$ with the proposed multi-scale Fourier feature architecture (figure \ref{fig: arch_mFF}). Specifically, we apply two Fourier feature mappings to 1D input coordinates, respectively, before passing them through a 2-layer fully-connected neural network with $100$ units per hidden layer, and then concatenating them through a linear layer. Particularly, these Fourier feature mappings are initialized with $\sigma_1=1$ and $\sigma_2 = 10$, respectively. We train the network by minimizing the loss function \ref{eq: Poisson1D_loss} in section \ref{sec: overview_PINNs} under the exactly same hyper-parameter settings.
Figure \ref{eq: Poisson1D} summarizes our results of the predicted solution to the 1D Poisson equation after $40,000$ training iterations. One can see that the predicted solution obtained using the proposed architecture achieves excellent agreement with the exact solution, yielding a $1.36e-03$ prediction error measured in the relative $L^2$-norm. 

Furthermore, we aim to demonstrate that the proposed architecture outperforms the conventional PINNs, as well as the PINNs with a single Fourier feature mapping. To this end, we first train a plain a conventional physics-informed network using exactly the same hyper-parameters and take the resulting relative $L^2$ error as our baseline. Next, we train the same network with a conventional Fourier feature mapping, considering different initializations for $\sigma$ in the range $[1, 50]$, and report the resulting relative $L^2$ errors over 10 independent trials in figure \ref{fig: Poission1D_mFF_diff_sigma}. It can be observed that either plain PINNs or PINNs with vanilla Fourier features fail to attain good prediction accuracy, yielding errors ranging from $10\%$ to above $100\%$  in the relative $L^2$-norm. In particular, we visualize the results obtained using the same network with conventional Fourier features \cite{tancik2020fourier} initialized by $\sigma= 1$ and $\sigma = 50$, respectively. As shown in figure \ref{fig: Poission1D_FF_sigma_1} and figure \ref{fig: Poission1D_FF_sigma_50}, the network with conventional Fourier features initialized by $\sigma=1$ tends to capture the low frequency components of the solution while the one initialized by $\sigma=50$ successfully captures the high frequency oscillations but ignores the low frequency components.

\begin{figure}
    \centering
    \includegraphics[width = 0.9\textwidth]{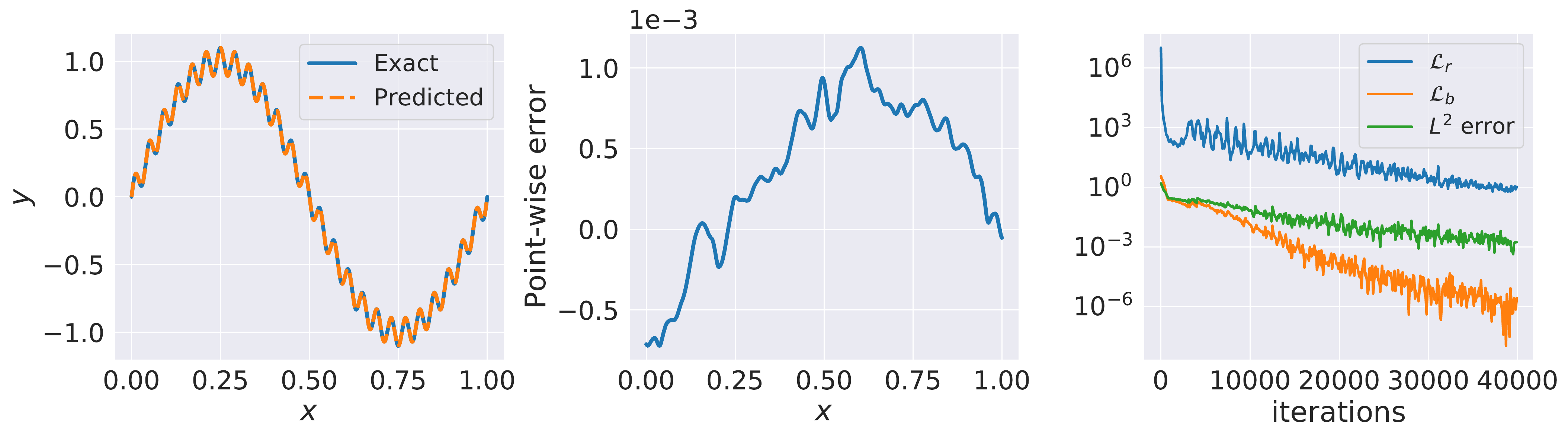}
    \caption{{\em 1D Poisson equation:} Results obtained by training a fully-connected network (2-layer, 100 hidden units, $\tanh$ activations) with the proposed multi-scale Fourier feature mappings via $40,000$ iterations of gradient descent. 
    {\em Left:} Comparison of the predicted and exact solutions. The relative $L^2$ error is $1.36e-03$.
    {\em Middle:} Point-wise error between the predicted and the exact solution. {\em Right:} Evolution of the residual loss $\mathcal{L}_r$, the boundary loss $\mathcal{L}_b$, as well as the relative $L^2$ error  during training. } 
    \label{fig: Poission1D}
\end{figure}

\begin{figure}
    \centering
    \includegraphics[width = 0.4\textwidth]{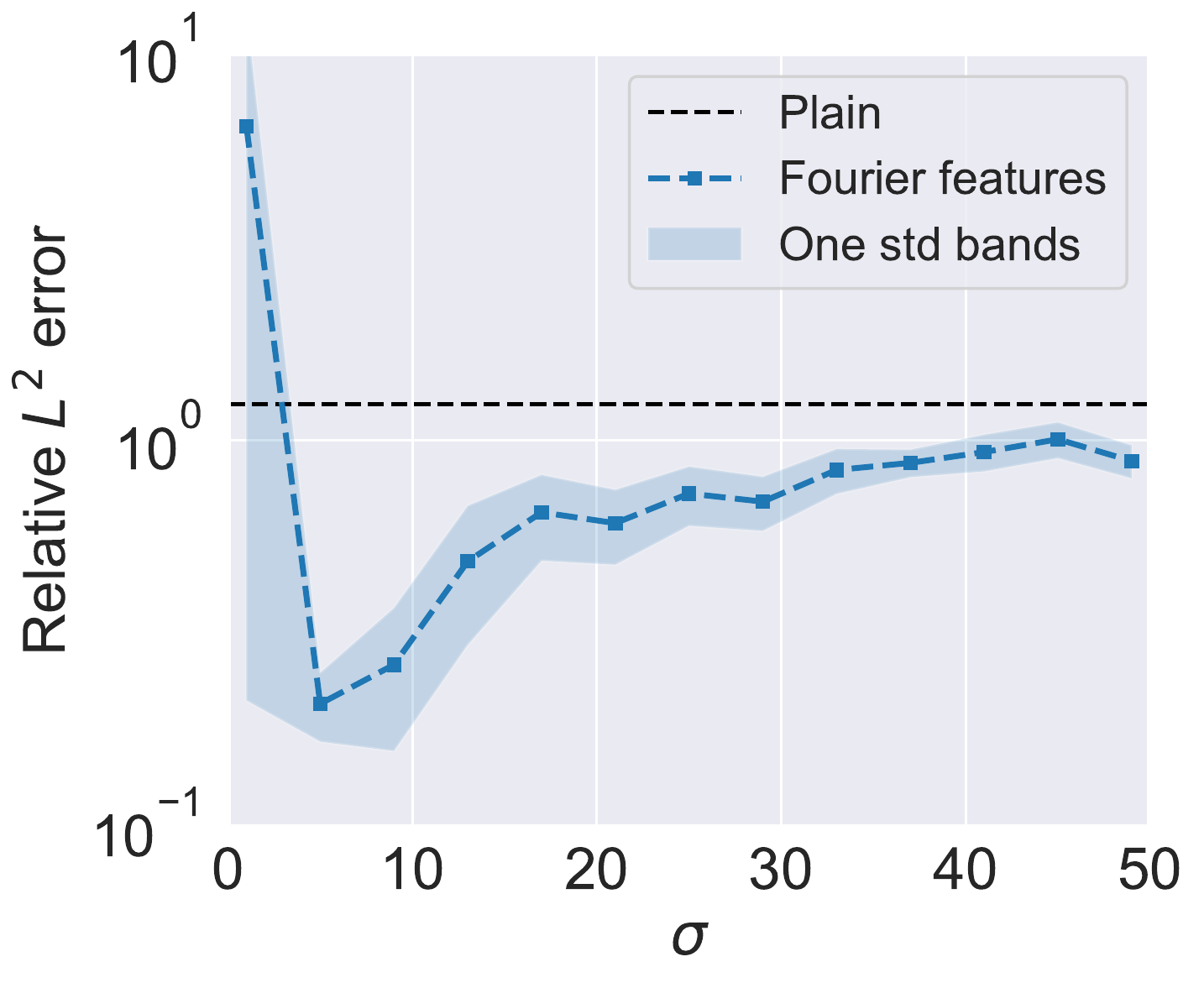}
    \caption{{\em 1D Poisson equation: } Relative $L^2$ errors of predicted solutions averaged over 10 independent trials by training a plain fully-connected neural network (2-layer, 100 hidden units, $\tanh$ activations), as well as the same network with single Fourier feature mapping initialized by different $\sigma \in [1,50]$.}
    \label{fig: Poission1D_mFF_diff_sigma}
\end{figure}

\begin{figure}
     \centering
     \begin{subfigure}[b]{0.9\textwidth}
         \centering
         \includegraphics[width=\textwidth]{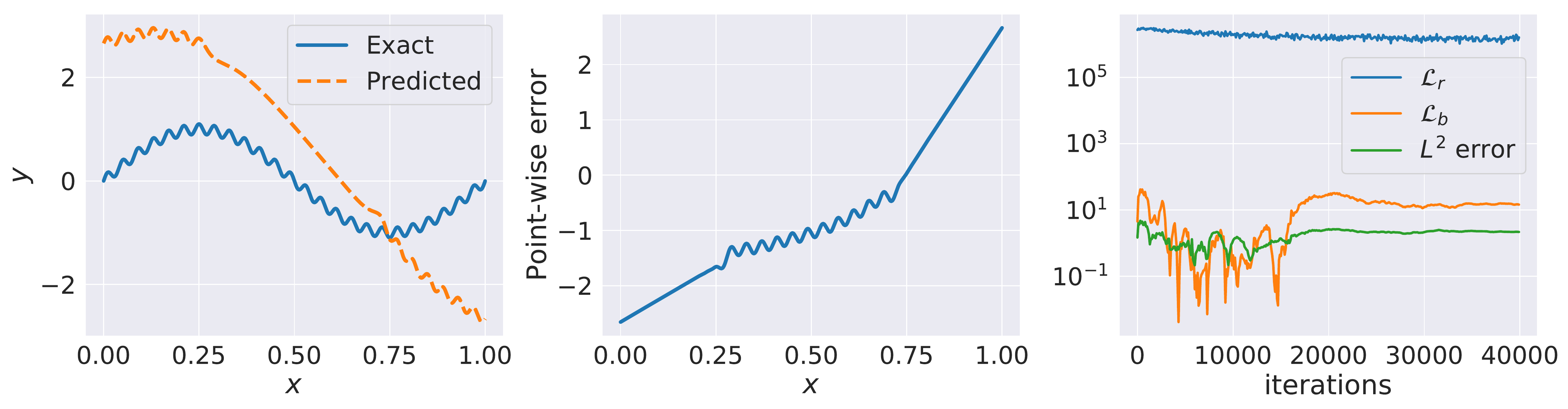}
         \caption{}
         \label{fig: Poission1D_FF_sigma_1}
     \end{subfigure}
     \begin{subfigure}[b]{0.9\textwidth}
         \centering
         \includegraphics[width=\textwidth]{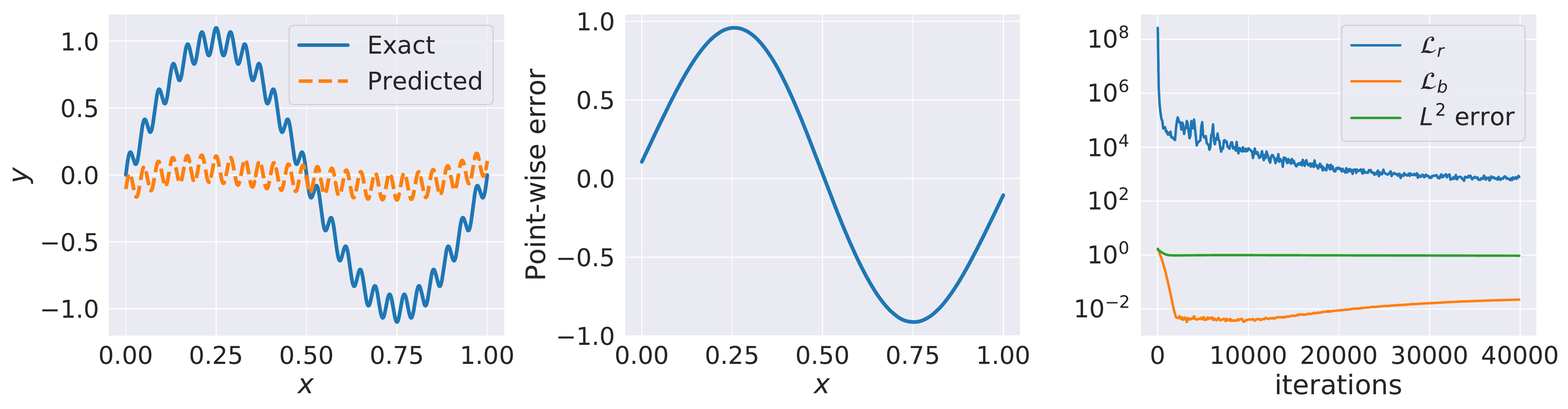}
         \caption{}
         \label{fig: Poission1D_FF_sigma_50}
     \end{subfigure}
         \caption{{\em 1D Poisson equation: } {\em (a)} Results obtained by training a fully-connected network (2-layer, 100 hidden units, $\tanh$ activations) with a single Fourier feature mapping initialized by $\sigma=1$ via $40,000$ iterations of gradient descent.
        {\em Left:} Comparison of the predicted and exact solutions.
        {\em Middle:} Point-wise error between the predicted and the exact solution. {\em Right:} Evolution of the residual loss $\mathcal{L}_r$, the boundary loss $\mathcal{L}_b$,  as well as the relative $L^2$ error  during training.
         {\em (b)} Results obtained by training the same network with single Fourier feature mapping initialized by $\sigma=50$  via 40, 000 iterations of gradient descent.
        {\em Left:} Comparison of the predicted and exact solutions.
        {\em Middle:} Point-wise error between the predicted and the exact solution. {\em Right:} Evolution of the residual loss $\mathcal{L}_r$, the boundary loss $\mathcal{L}_b$,  as well as the relative $L^2$ error  during training.
        } 
        \label{fig: Poission1D_FF_diff_sigma}
\end{figure}

\subsection{High frequencies in a heat equation}

To demonstrate the necessity and effectiveness of the proposed spatio-temporal multi-scale Fourier feature architecture (ST-mFF), let us consider the one-dimensional heat equation taking the form
\begin{align}
    &u_t = \frac{1}{(500 \pi)^2} u_{xx}, \quad (x, t) \in (0,1) \times (0, 1) \\
    &u(x, 0) = \sin(500 \pi x),  \quad x \in [0, 1] \\
    &u(0, t) = u(1, t) = 0,  \quad t \in [0,1].
\end{align}
The exact solution $u(x, t)$ for this benchmark is given by
\begin{align}
    u(x, t) = e^{-t} \sin(500 \pi x).
\end{align}
As the solution is mainly dominated by a single high frequency in the spatial domain, it suffices to employ just single Fourier feature mapping in the network architecture. We proceed by approximating the latent variable $u(x,t)$ with the network $u_{\bm{\theta}}(x)$ using the proposed spatio-temporal architecture (figure \ref{fig: arch_spatail_temporal_mFF}). Specifically, we embed the spatial and temporal input coordinates through two separate Fourier features initialized with $\sigma=200$ and $\sigma = 1$, respectively, and pass the embedded inputs though a 3-layer fully-connected neural network with 100 neurons per hidden layer and finally merge them according to equation \ref{eq: ST_mFF_forward_pass_1} -\ref{eq: ST_mFF_concatenate_2}. The network is trained by minimizing the following loss
\begin{align}
    \mathcal{L}(\bm{\theta})  &=  \mathcal{L}_{bc}(\bm{\theta})  + \mathcal{L}_{ic}(\bm{\theta}) + \mathcal{L}_r(\bm{\theta}) \\
    &= \frac{1}{N_{bc}} \sum_{i=1}^{N_{bc}} \left|u_{\bm{\theta}}(x_{bc}^i, t_{bc}^i) - u(x_{bc}^i, t_{bc}^i)  \right|^2 + \frac{1}{N_{ic}} \sum_{i=1}^{N_{ic}} \left|u_{\bm{\theta}}(x_{ic}^i, t_{ic}^i) - u(x_{ic}^i, t_{ic}^i)  \right|^2 \\
    &+ \frac{1}{N_r}\sum_{i=1}^{N_r} \left| \frac{\partial u_{\bm{\theta}}}{\partial t}(x_r^i, t_r^i) -  \frac{\partial^2 u_{\bm{\theta}}}{\partial x^2}(x_r^i, t_r^i)   \right|^2,
\end{align}
where we set the batch sizes to  $N_{bc} = N_{ic} = N_{r} = 128$.  The predictions of the trained model against the exact solution along with the point-wise absolute error between them are presented in figure \ref{fig: Heat1D_sigma_200_pred}. This figure indicates that our spatio-temporal multi-scale Fourier feature architecture is able to accurately capture the high frequency oscillations, leading to a $1.78e-03$
prediction error measured in the relative $L^2$-norm. 
To the best of author's knowledge, this is the first time that PINNs can be effective in solving a time-dependent problem exhibiting such extremely high frequencies.

Next, we test the performance of the multi-scale Fourier feature architecture. To this end, we represent the unknown solution $u(x,t)$ by the same network (3-layer, 100 hidden units) with single Fourier feature mapping initialized using different $\sigma \in [1, 1000]$. A visual assessment of the resulting relative $L^2$ error, as well as the baseline obtained by the conventional PINNs are shown in figure \ref{fig: Heat1D_FF_l2_error}. As suggested in this figure, all reported relative $L^2$ errors are around $100\%$, which implies that both conventional PINNs and the multi-scale Fourier feature architecture are incapable of learning high frequency components of solutions in the spatio-temporal domain.

\begin{figure}
     \centering
     \begin{subfigure}[b]{0.9\textwidth}
         \centering
         \includegraphics[width=\textwidth]{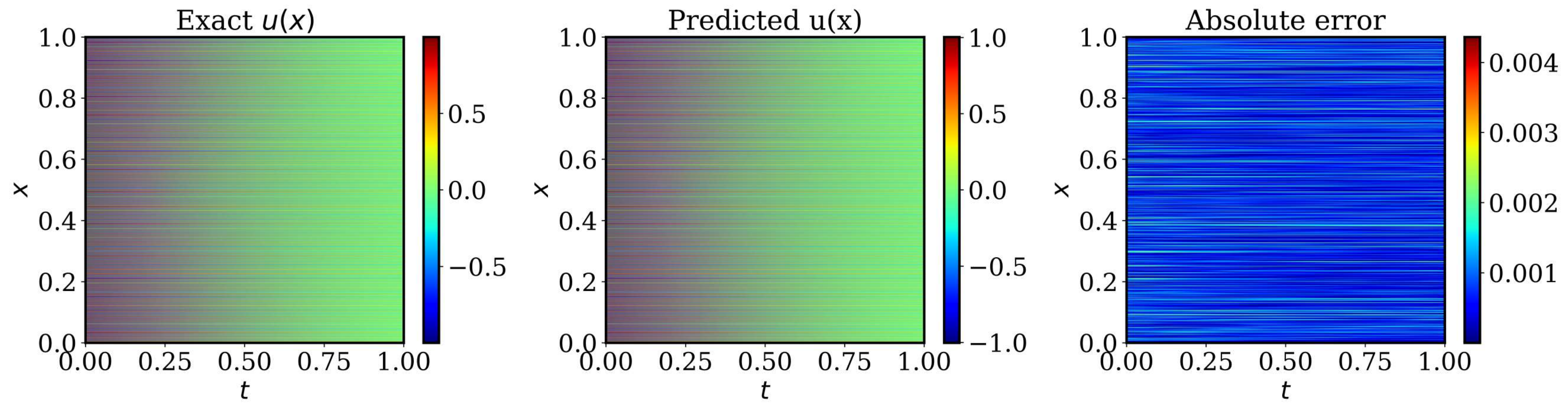}
         \caption{}
         \label{fig: Heat1D_sigma_200_pred}
     \end{subfigure}
     \begin{subfigure}[b]{0.4\textwidth}
         \centering
         \includegraphics[width=\textwidth]{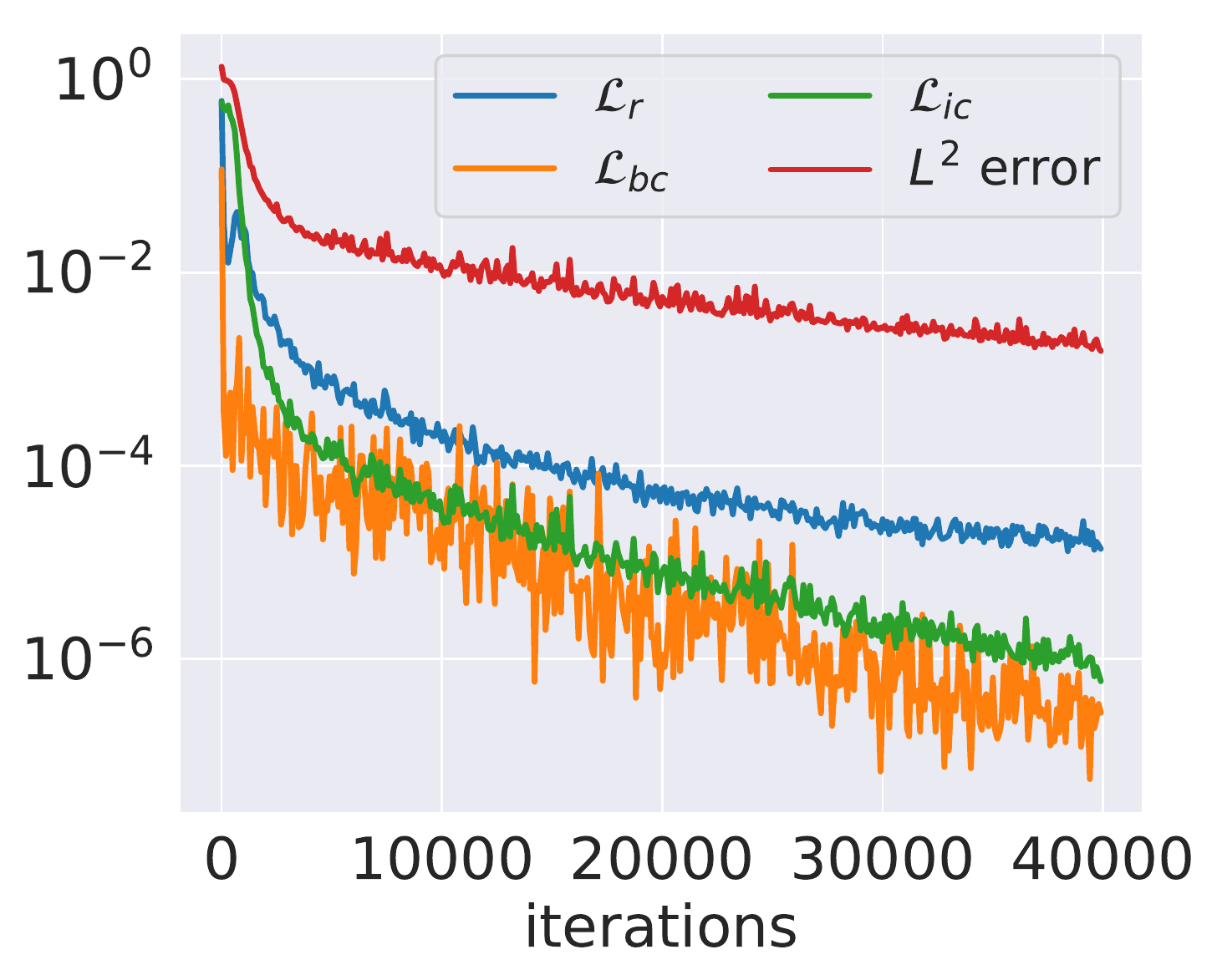}
         \caption{}
         \label{fig: Heat1D_sigma_200_loss}
     \end{subfigure}
        \caption{{\em 1D heat equation:}  {\em (a): } Exact solution versus the predicted solution by training a fully-connected network (3-layer, 100 hidden units, $\tanh$ activations) with the spatio-temporal Fourier feature mappings via 40,000 iterations of gradient descent. The relative $L^2$ error is $1.78e-03$. {\em (b):} Evolution of the different terms in the loss function,  as well as the relative $L^2$ error during training. } 
        \label{fig: Heat1D}
\end{figure}

\begin{figure}
    \centering
    \includegraphics[width=0.4\textwidth]{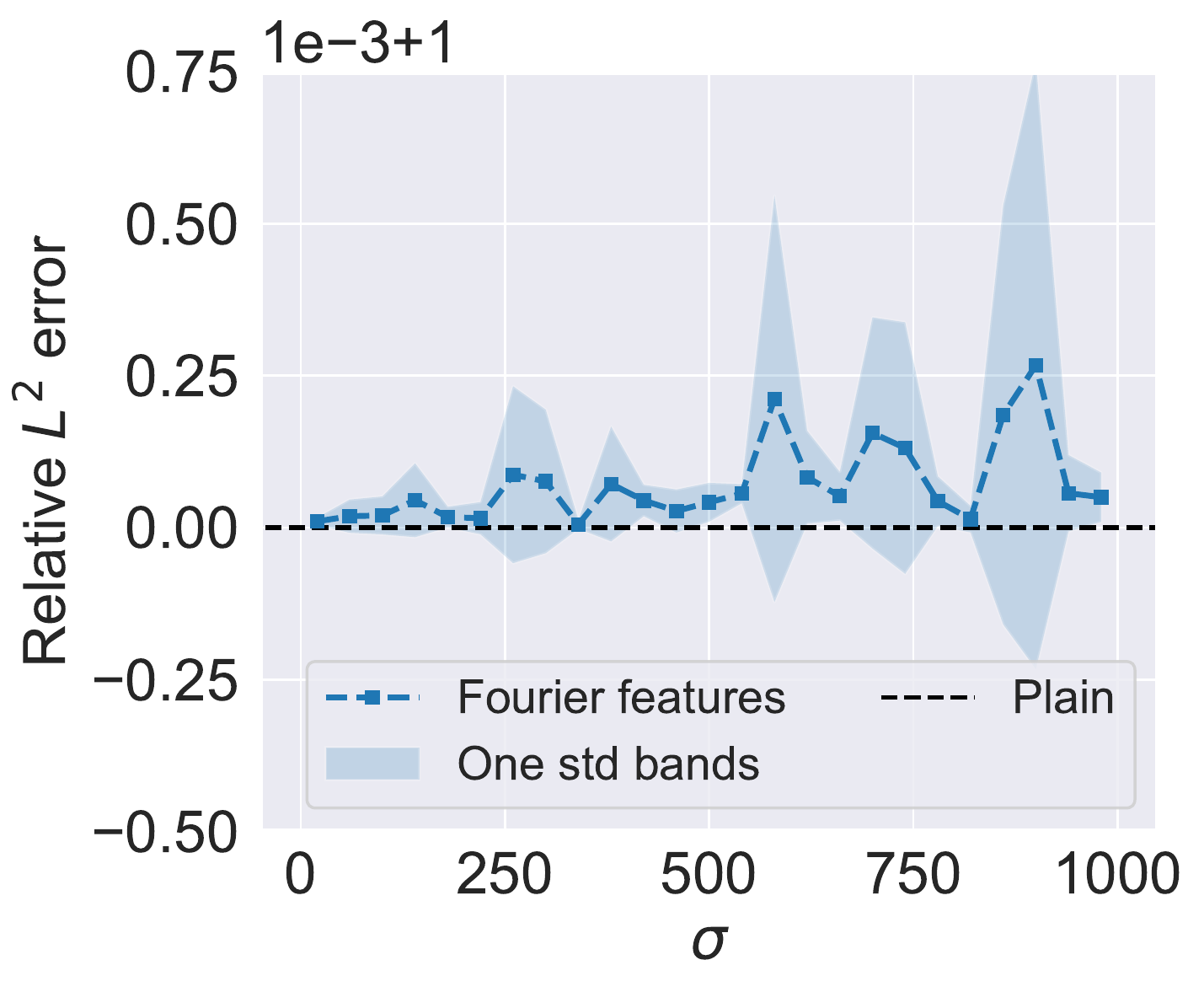}
     \caption{{\em 1D heat equation: } Relative $L^2$ errors of predicted solutions averaged over 10 independent trials by training a plain fully-connected neural network (3-layer, 100 hidden units, $\tanh$ activations), as well as the same network with single Fourier feature mapping initialized by different $\sigma \in [1,1000]$.}
    \label{fig: Heat1D_FF_l2_error}
\end{figure}

\subsection{Wave propagation}

In this example, we aim to demonstrate that employing appropriate architectures solely cannot guarantee accurate predictions. To this end, we consider one-dimensional wave equation taking the form
\begin{align}
    \label{eq: wave_pde}
    &u_{tt}(x, t) - 100 u_{xx}(x, t) = 0, \quad (x, t) \in (0,1) \times (0,1) \\
      \label{eq: wave_bc}
    & u(0, t) = u(1, t) = 0, \quad t \in [0,1] \\
    \label{eq:   wave_u_ic}
    & u(x,0) = \sin( \pi x) +  \sin(2 \pi x), \quad x\in [0,1 ]\\
    \label{eq:   wave_ut_ic}
    & u_t(x,0) = 0, \quad x\in [0,1 ].
\end{align}
By d'Alembert's formula \cite{evans1998partial}, the solution $u(x,t)$ is given by
\begin{align}
    u(x, t) =  \sin( \pi x) \cos(10 \pi t) +  \sin(2 \pi x) \cos(20\pi t).
\end{align}
To handle the multi-scale behavior in both spatial and temporal directions, we employ the spatio-temporal architecture (figure \ref{fig: arch_spatail_temporal_mFF}) to approximate the latent solution $u(x,t)$. Specifically, we apply two separate Fourier feature mappings initialized by $\sigma=1, 10$ respectively to temporal coordinates $t$ 
and apply one Fourier feature  mapping initialized by $\sigma=1$ to spatial coordinates $\bm{x}$. Then we pass all featurized spatial and temporal inputs coordinated through a 3-layer fully-connected neural network with 200 units per hidden layer and concatenate network outputs using equations \ref{eq: ST_mFF_concatenate_1} - \ref{eq: ST_mFF_concatenate_2}. In particular, we treat the initial condition \ref{eq: wave_u_ic} as a special boundary condition on the spatio-temporal domain $\Omega$. Then equation \ref{eq: wave_bc} and equation \ref{eq:   wave_u_ic} can be summarized as
\begin{align*}
    u(\bm{x}) = g(\bm{x}), \quad x \in \partial \Omega
\end{align*}
Then, the network can be trained by minimizing the following loss function
\begin{align}
    \label{eq: loss_wave}
    \mathcal{L}(\bm{\theta}) &=   \mathcal{L}_u(\bm{\theta}) +  \mathcal{L}_{u_t}(\bm{\theta}) +   \mathcal{L}_r(\bm{\theta}) \\
                             &=  \frac1{N_u} \sum_{i=1}^{N_u} |u(\bm{x}_u^i,\bm{\theta}) - g(\bm{x}_u^i)  |^2
                             +  \frac{1}{N_{u_t}} \sum_{i=1}^{N_{u_t}} |u_t(\bm{x}_{u_t}^i,\bm{\theta})   |^2
                             +  \frac{1}{ N_r} \sum_{i=1}^{N_r} |  \frac{\partial^2 u_{\bm{\theta}}}{\partial t^2}(\bm{x}_r^i) -100  \frac{\partial^2 u_{\bm{\theta}}}{\partial x^2}(\bm{x}_r^i)|^2,
 \end{align}
where the batch sizes are set to $N_u = N_{u_t} = N_r = 360$ and all data points $\{\bm{x}_u^i, g(\bm{x}_u^i)   \}_{i=1}^{N_u}, \{\bm{x}_{u_t}^i   \}_{i=1}^{N_{u_t}}$ and $\{\bm{x}_r^i \}_{i=1}^{N_r}$ are uniformly sampled from the appropriate regions in the computational domain at each iteration of gradient descent.    

Figure \ref{fig: Wave1D_STmFF} presents a comparison of the exact and predicted solution obtained after 40,000 iterations of gradient descent. It is evident that the PINN model completely fails to learn the correct solution. This illustrated the fact that even if we choose an appropriate network architecture to approximate the latent PDE solution, there might be some other issues that lead PINNs to fail.
Wang {\em et al.} \cite{wang2020and} found that multi-scale problems are more likely to cause a large discrepancy in the convergence rate of different terms contributing to the total training error, which may lead to severe training difficulties of PINNs in practice. 
Such a  discrepancy can be further justified in figure \ref{fig: Wave1D_loss}, from which one can see that the  loss $\mathcal{L}_r$ and loss $\mathcal{L}_{u_t}$ decrease much faster than the loss $\mathcal{L}_u$ during training. We believe that this may be a fundamental reason behind the collapse of  physics-informed neural networks, and their inability to yield accurate predictions for this specific example. 

To address this a training pathology, we employ the adaptive weights algorithm proposed by Wang {\em et al.} \cite{wang2020and} to train the same network with the same spatio-temporal Fourier feature mappings under the same exactly hyper-parameter settings. As shown in figure \ref{fig: Wave1D_adaptive}, the results demonstrate excellent agreement between the predicted and the exact solution with relative $L^2$ error within $0.1\%$. Furthermore, we test the performance of standard fully-connected network and the proposed multi-scale Fourier feature architecture, with or without the adaptive weights algorithm and the resulting relative $L^2$ errors are summarized in table \ref{tab: Wave1D}. For all results shown in the table, we employ a 3-layer fully-connected network with 200 units per hidden layer as a backbone. To obtain the result of the multi-scale Fourier features architecture, we jointly embed the spatio-temporal coordinates $(x, t)$ using two separate Fourier feature mappings initialized with $\sigma=1, 10$, respectively. We observe that only the proposed spatio-temporal Fourier feature architecture trained with the adaptive weights algorithm of Wang {\em et al.} \cite{wang2020and} is capable of achieving a accurate approximation of the true solution, which highly suggests the necessity of combining the proposed architecture with an appropriate optimization scheme for training.





\begin{figure}
     \centering
     \begin{subfigure}[b]{0.9\textwidth}
         \centering
         \includegraphics[width=\textwidth]{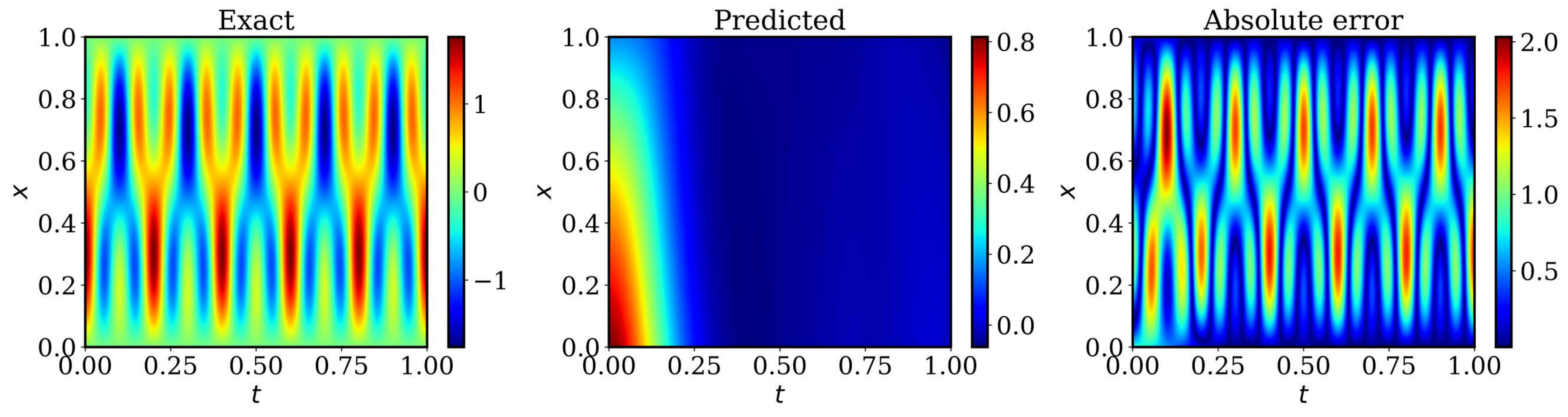}
         \caption{}
         \label{fig: Wave1D_pred}
     \end{subfigure}
     \begin{subfigure}[b]{0.4\textwidth}
         \centering
         \includegraphics[width=\textwidth]{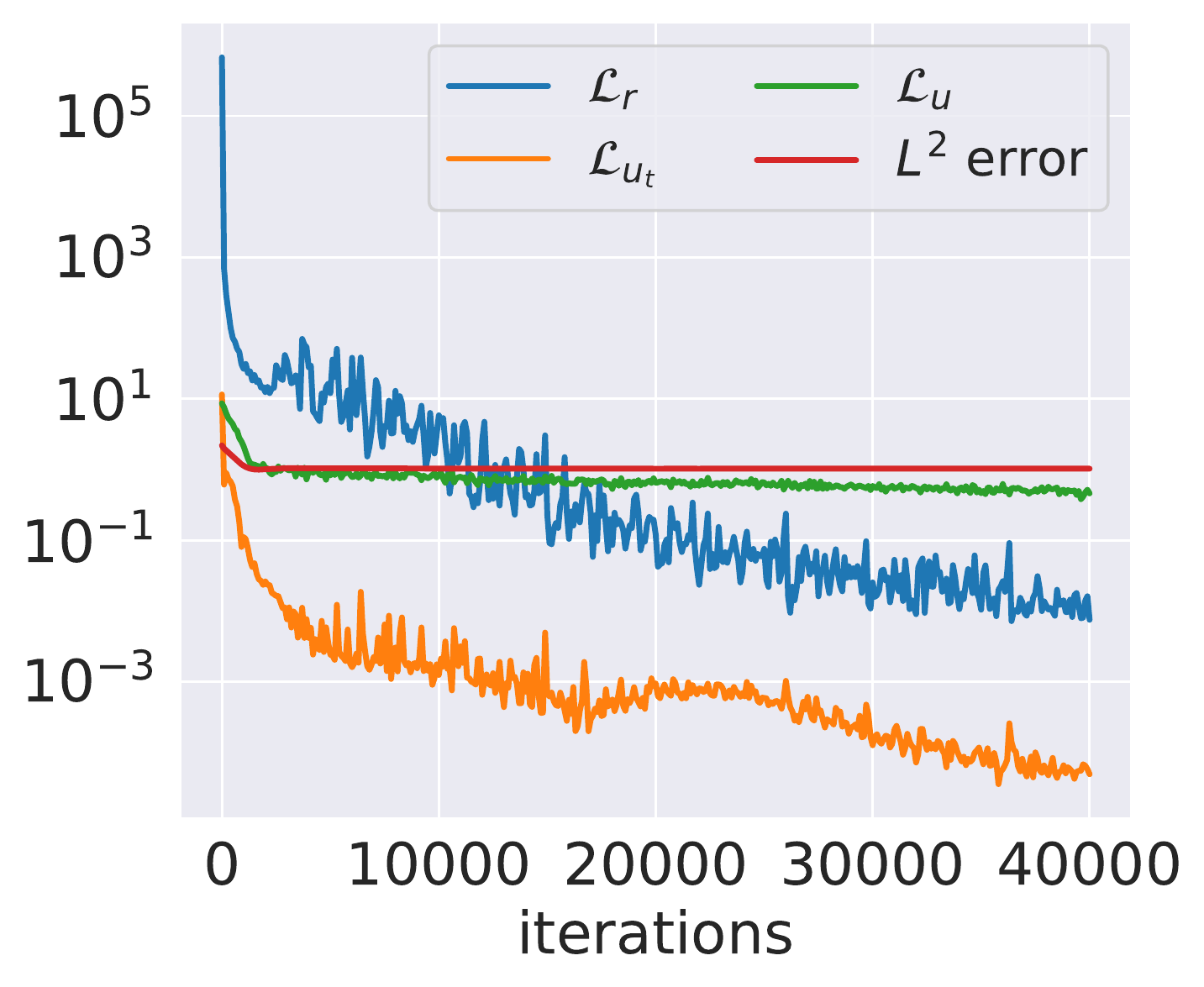}
         \caption{}
         \label{fig: Wave1D_loss}
     \end{subfigure}
         \caption{{\em 1D Wave equation:}  {\em (a):} Exact solution versus the predicted solution by training a fully-connected network (3-layer, 200 hidden units, $\tanh$ activations) with the spatio-temporal Fourier feature mappings 
        via 40,000 iterations of gradient descent. The relative $L^2$ error is $1.03e+00$. {\em (b):} Evolution of the different terms in the loss function,  as well as the relative $L^2$ error during training.} 
        \label{fig: Wave1D_STmFF}
\end{figure}

\begin{figure}
     \centering
     \begin{subfigure}[b]{0.9\textwidth}
         \centering
         \includegraphics[width=\textwidth]{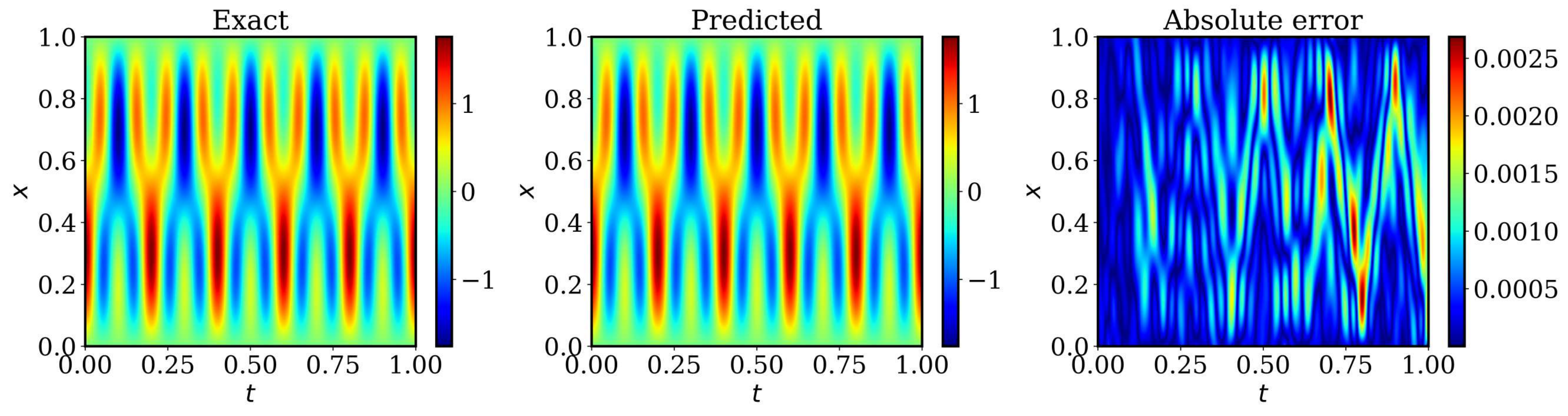}
         \caption{}
         \label{fig: Wave1D_pred_adaptive}
     \end{subfigure}
     \begin{subfigure}[b]{0.4\textwidth}
         \centering
         \includegraphics[width=\textwidth]{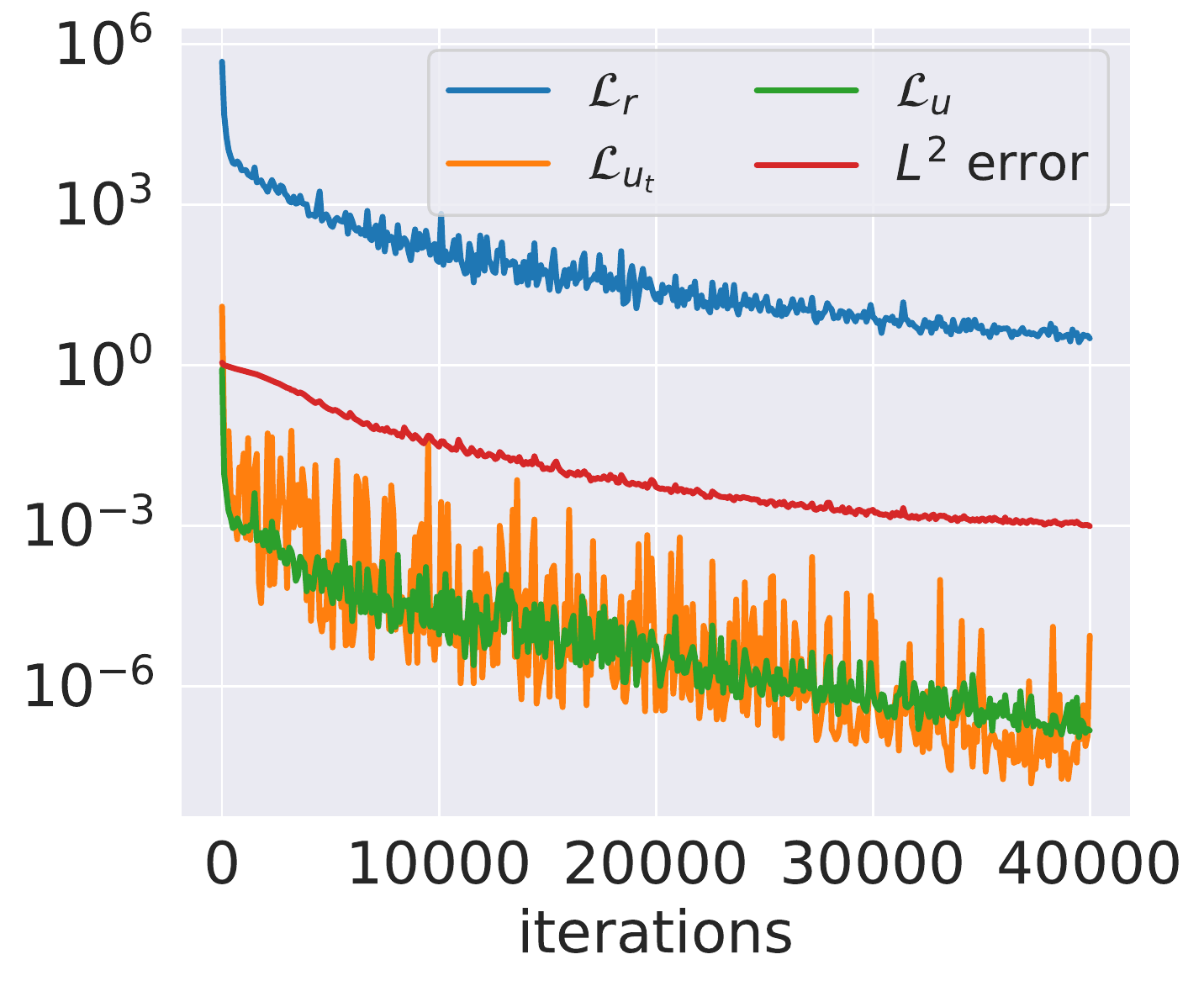}
         \caption{}
         \label{fig: Wave1D_loss_adaptive}
     \end{subfigure}
         \caption{{\em 1D wave equation:}  {\em (a):} Exact solution versus the predicted solution by training a fully-connected network (3-layer, 200 hidden units, $\tanh$ activations) with the spatio-temporal Fourier feature mappings 
         using adaptive weights algorithm \cite{wang2020and} via 40,000 iterations of gradient descent. The relative $L^2$ error is $9.83e-04$. {\em (b):} Evolution of the different terms in the loss function,  as well as the relative $L^2$ error during training. } 
        \label{fig: Wave1D_adaptive}
\end{figure}

\begin{table}[h]
\renewcommand{\arraystretch}{1.4}
    \centering
    \begin{tabular}{|c|c|c|c|}
    \hline
    \diagbox{Method}{Architecture}   & Plain        & MFF     & ST-MFF \\ \hline
    No adaptive weights   & 1.01e+00  & 1.03e+00 & 1.02e+00       \\ \hline
    With adaptive weights &  8.77e-01  & 1.00e+00  &9.83e-04      \\ \hline
    \end{tabular}
    \caption{{\em 1D wave equation:} Relative $L^2$ errors of the predicted solutions obtained using conventional fully-connected networks (plain), multi-scale Fourier features architecture (MFF) and spatio-temporal multi-scale Fourier features architecture (ST-MFF).}
    \label{tab: Wave1D}
\end{table}

\subsection{Reaction-diffusion dynamics in a two-dimensional Gray-Scott model}

Our final example aims to highlight the ability of the proposed methods 
to handle inverse problems. Let us consider a two-dimensional Gray-Scott model \cite{gray1990chemical} that describes two non-real chemical species $U, V$ reacting and transforming to each other. This model is governed by a coupled system of reaction-diffusion equations taking the form 
\begin{align}
    &u_{t}=\varepsilon_{1} \Delta u+b(1-u)-u v^{2} \\
    &v_{t}=\varepsilon_{2} \Delta v-d v+u v^{2}
\end{align}
where $u, v$ represent the concentrations of $U, V$ respectively and $\epsilon_1, \epsilon_2$ are their corresponding diffusion rates.

We generate a data-set containing a direct numerical solution of the two-dimensional Gray-Scott equations with 40,000 spatial points and 401 temporal snapshots. Specifically,  we take $b=0.04, d=0.1, \epsilon_1=2e-5, \epsilon_2=1e-5$ and, assuming periodic boundary conditions, we start from an initial condition 
\begin{align*}
    &u(x, y, 0) = 1 - \exp(-80((x+0.05)^2+(y+0.02)^2), \quad (x, y) \in [-1, 1] \times [-1, 1]  \\
    &v(x, y,0) = \exp(-80((x-0.05)^2+(y-0.02)^2),  \quad (x, y) \in [-1, 1] \times [-1, 1],
\end{align*}
and integrate the equations up to the final time $t =4000$. Synthetic training data for this example are generated using the Chebfun package \cite{driscoll2014chebfun} with a spectral Fourier discretization and a fourth-order stiff time-stepping scheme \cite{cox2002exponential} with time-step size of $0.5$. Temporal snapshots of the solution are are saved every $\Delta t = 10$. 
From this data-set, we create a smaller training subset by collecting the data points from time $t = 3500$ to $t=4000$ (50 snapshots in total) as our training data.  A representative snapshot of the numerical solution is presented in figure \ref{fig: GS2D_ref}. As illustrated in this figure, the solution exhibits complex spatial patterns with a non-trivial frequency content. 

Given training data $\{(x^i, y^i, t^i), (u^i, v^i)\}_{i=1}^{N}$ and assuming that $b,d$ are known, we are interested in predicting the latent concentration fields $u,v$, as well as inferring the unknown diffusion rates $\epsilon_1,\epsilon_2$. To this end, we represent the latent variables $u, v$ by a deep network employing the proposed spatio-temporal Fourier feature architecture
\begin{align}
    (x, y, t) \xrightarrow{\bm{f}_{\bm{\theta}}} (u_{\bm{\theta}}, v_{\bm{\theta}})
\end{align}
To be precise, we map temporal coordinates $t$ by a Fourier feature embedding with $\sigma = 1$ and map spatial coordinates $(x, y)$ by another Fourier feature embedding with $\sigma = 30$. Then we pass the embedded inputs through a 9-layer fully-connected neural network with 100 neurons per hidden layer. 
The corresponding loss function is given by
\begin{align}
    \mathcal{L}(\bm{\theta})  &=  \mathcal{L}_{u}(\bm{\theta})  + \mathcal{L}_{v}(\bm{\theta})  + \mathcal{L}_{r^u}(\bm{\theta}) + \mathcal{L}_{r^v}(\bm{\theta}) \\
    &= \frac{1}{N_u} \sum_{i=1}^{N_u} \left|u_{\bm{\theta}}(x^i, y^i, y^i) - u^i  \right|^2
    + \frac{1}{N_v} \sum_{i=1}^{N_v} \left|v_{\bm{\theta}}(x^i, y^i, y^i) - v^i  \right|^2\\
    &+ \frac{1}{N_{r^u}}\sum_{i=1}^{N_{r^u}} \left| r^u_{\bm{\theta}}(x_r^i, y_r^i, t_r^i)   \right|^2 
    + \frac{1}{N_{r^v}}\sum_{i=1}^{N_{r^v}} \left|r^v_{\bm{\theta}}(x_r^i, y_r^i, t_r^i)    \right|^2,
\end{align}
where the PDE residuals are defined as
\begin{align}
    &r_{\bm{\theta}}^u = \frac{\partial u_{\bm{\theta}} }{\partial t} -  \epsilon_1 \Delta u_{\bm{\theta}} - b(1 - u_{\bm{\theta}})  + u_{\bm{\theta}} v_{\bm{\theta}}^2 \\
  &r_{\bm{\theta}}^v = \frac{\partial v_{\bm{\theta}} }{\partial t} -  \epsilon_2 \Delta v_{\bm{\theta}} + d v_{\bm{\theta}}  - u_{\bm{\theta}} v_
  {\bm{\theta}}^2.
\end{align}
Here we choose batch sizes $N_u = N_v = N_{r^u} = N_{r^v} = 1000$ where all data points along with collocation points are randomly sampled at each iteration of gradient descent. Particularly, since the diffusion rates are strictly positive and generally very small, we parameterize $\epsilon_1, \epsilon_2$ by exponential functions, i.e $\epsilon_i = e^{\alpha_i}$ for $i=1,2$ where $\alpha_i$'s are trainable parameters initialized by $-10$.

We train the network by minimizing the above loss function via via 120,000 iterations of gradient descent.
Figure \ref{fig: GS2D_u_pred_FF} and figure  \ref{fig: GS2D_v_pred_FF} presents the comparisons of reconstructed concentrations $u,v$ against the ground truth functions at the final time $t=4000$. The results show excellent agreement between the predictions and the numerical estimations. This is further validated by the relative $L^2$-norm of error results shown in
figure \ref{fig: GS2D_error_FF}.  Moreover, the evolution of inferred  diffusion rates  during training, as well as the final predictions are presented in figure \ref{fig: GS2D_eps_FF} and table \ref{tab: GS2D_eps_FF} respectively, which show good agreement with the exact values.

To compare these results against the performance of a conventional PINNs model \cite{raissi2019physics}, we also train the same fully-connected neural network (9-layer, width 100) under the same hyper-parameter settings. The results of this experiment are summarized in figure \ref{fig: GS2D_original}. Evidently, conventional PINNs are incapable of accurately learning the concentrations $u,v$, as well as inferring the unknown diffusion rates under the current setting. 

\begin{figure}
    \centering
    \includegraphics[width=0.8\textwidth]{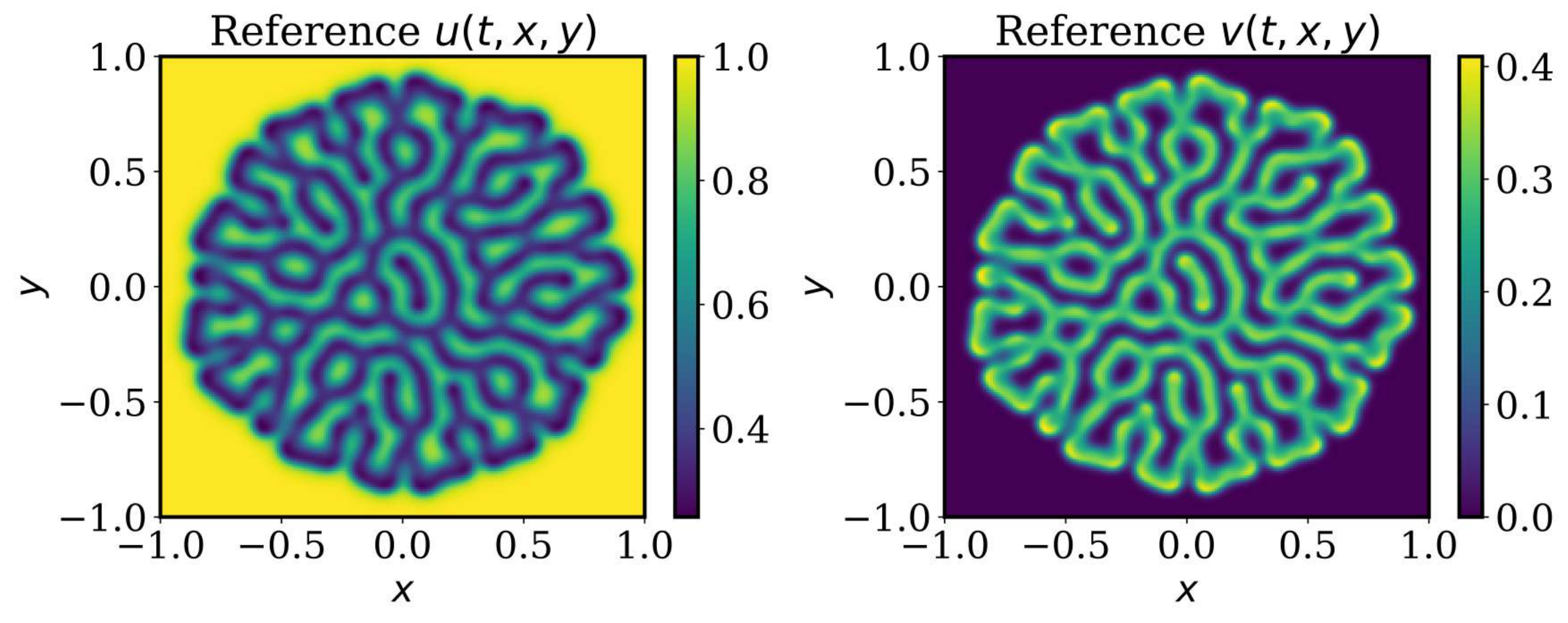}
    \caption{{\em 2D Gray-Scott equation:} Representative snapshots of the ground truth concentration fields $u, v$ at $t=3500$.   }
    \label{fig: GS2D_ref}
\end{figure}

\begin{figure}
     \centering
     \begin{subfigure}[b]{0.9\textwidth}
         \centering
         \includegraphics[width=\textwidth]{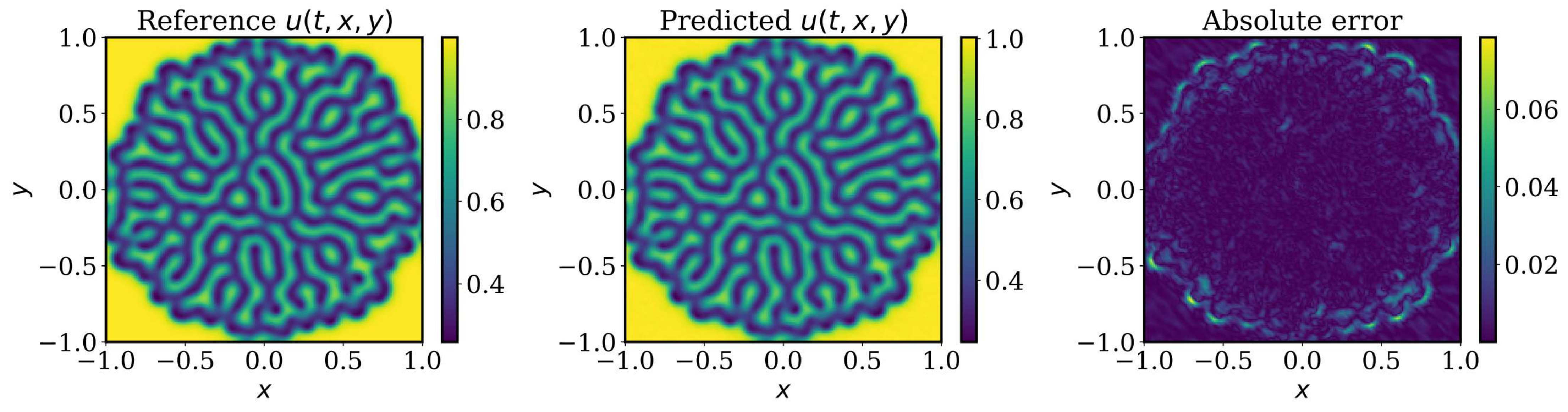}
         \caption{}
         \label{fig: GS2D_u_pred_FF}
     \end{subfigure}
     \begin{subfigure}[b]{0.9\textwidth}
         \centering
         \includegraphics[width=\textwidth]{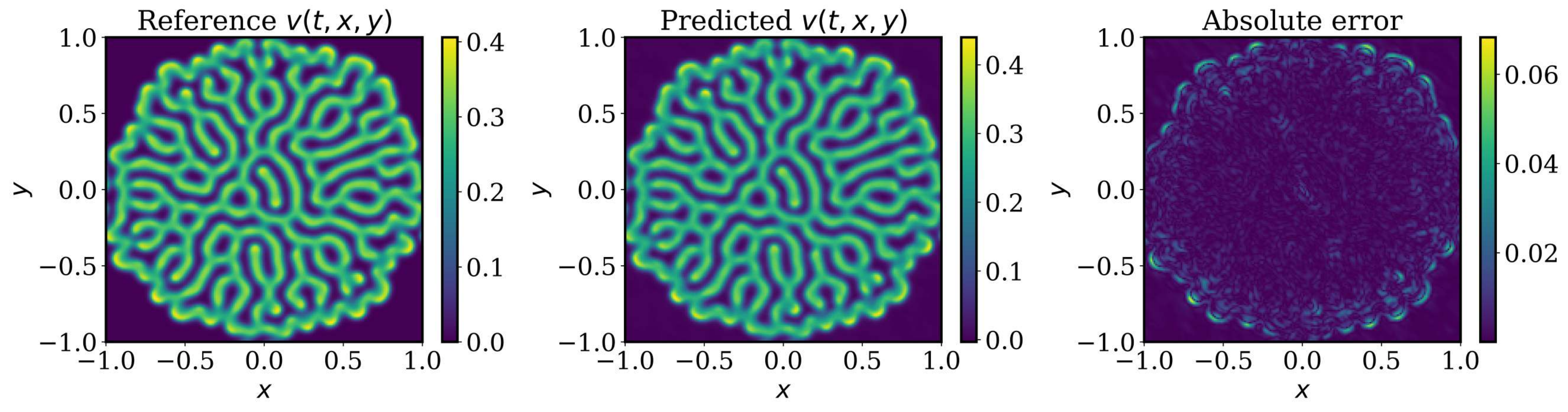}
         \caption{}
         \label{fig: GS2D_v_pred_FF}
     \end{subfigure}
     \begin{subfigure}[b]{0.3\textwidth}
         \centering
         \includegraphics[width=\textwidth]{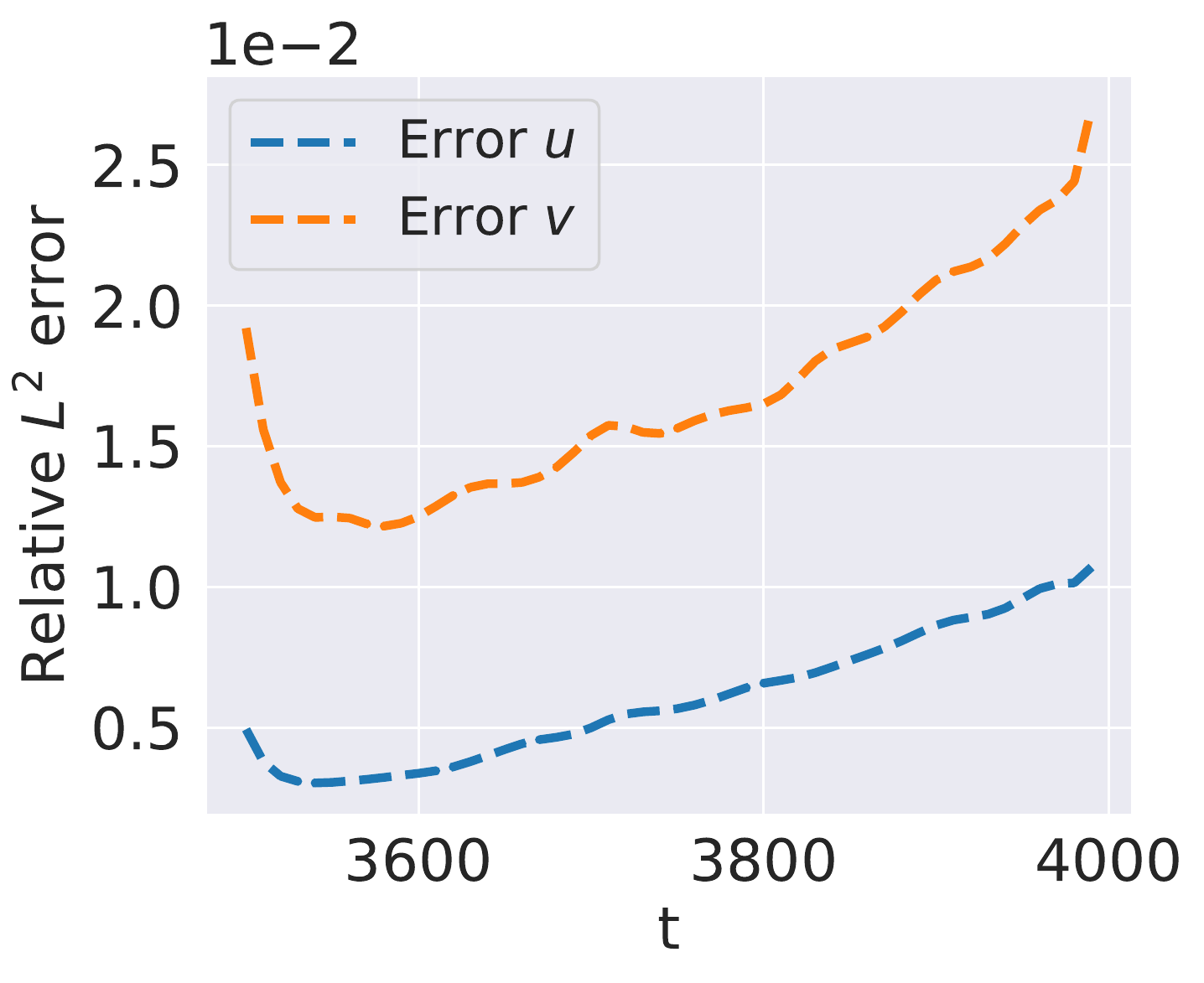}
         \caption{}
         \label{fig: GS2D_error_FF}
     \end{subfigure}
          \begin{subfigure}[b]{0.3\textwidth}
         \centering
         \includegraphics[width=\textwidth]{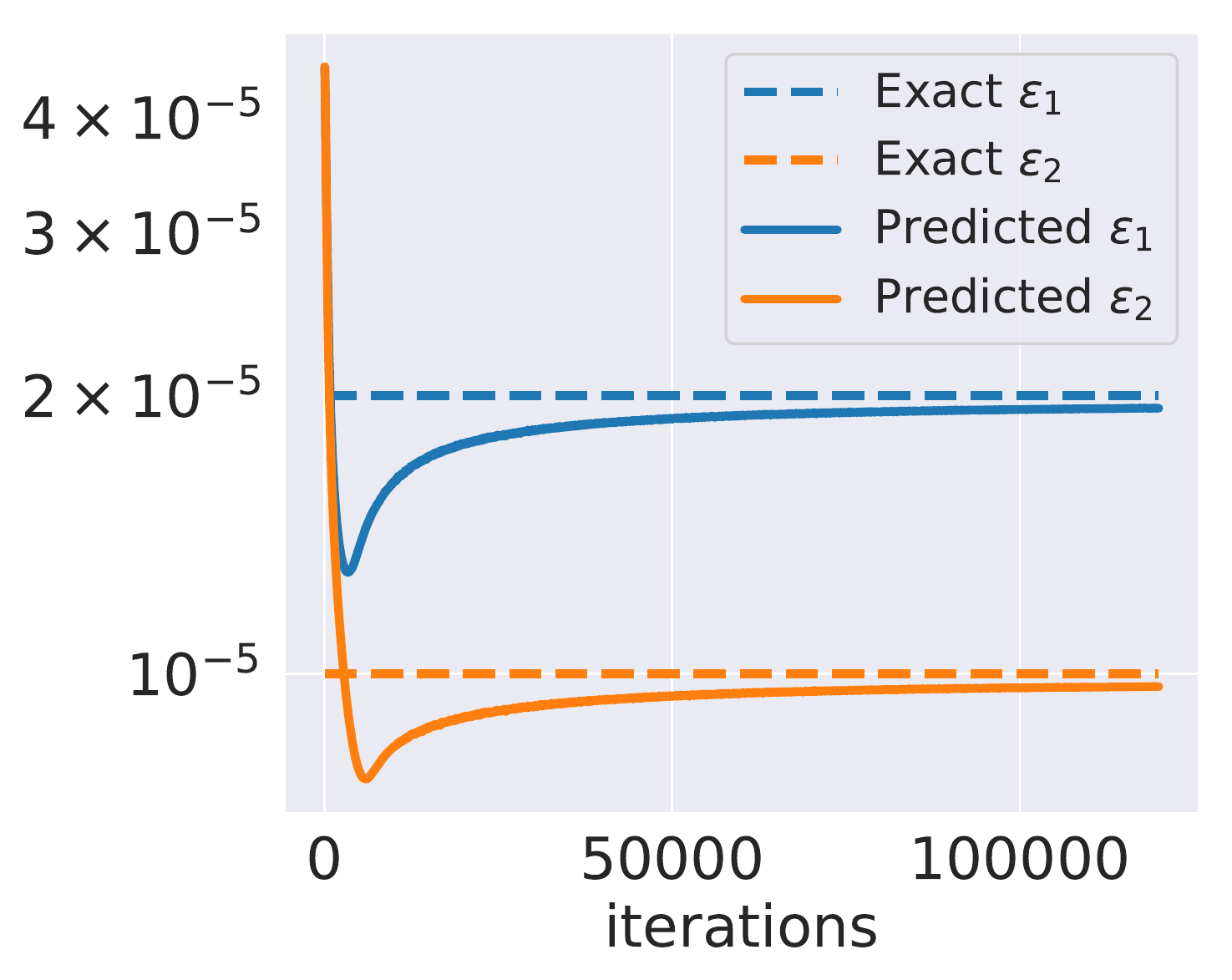}
         \caption{}
         \label{fig: GS2D_eps_FF}
     \end{subfigure}
        \caption{{\em 2D Gray-Scott equation:} Results obtained by training a fully-connected network (9 layers, 100 hidden units, $\tanh$ activations) with the spatio-temporal Fourier feature mappings after 120, 000 iterations of gradient descent.
        (a)(b)  Numerical estimations versus the predicted concentration fields $u, v$ respective at the final time $t =4000$.  (c) Relative $L^2$ errors between the model predictions and the corresponding exact concentration fields for 
        each snapshot $t \in [3500, 4000]$.  (d) Evolution of the inferred diffusion rates $\epsilon_1, \epsilon_2$ during training.}
        \label{fig: GS2D_FF}
\end{figure}

\begin{table}[]
\renewcommand{\arraystretch}{1.4}
    \centering
\begin{tabular}{|c|c|c|c|}
\hline
  Parameters        & Exact & Learned  & Relative $L^2$ error \\ \hline
$\epsilon_1$ & $2e-05$ & $ 1.95e-05$ & $3.00\% $                             \\ \hline
$\epsilon_2$ & $1e-05$ & $ 9.70e-06$ & $ 8.06\% $                           \\ \hline
\end{tabular}
    \caption{{\em  2D Gray-Scott equation:} Exact diffusion rates versus the inferred diffusion rates after training.}
    \label{tab: GS2D_eps_FF}
\end{table}

\begin{figure}
     \centering
     \begin{subfigure}[b]{0.9\textwidth}
         \centering
         \includegraphics[width=\textwidth]{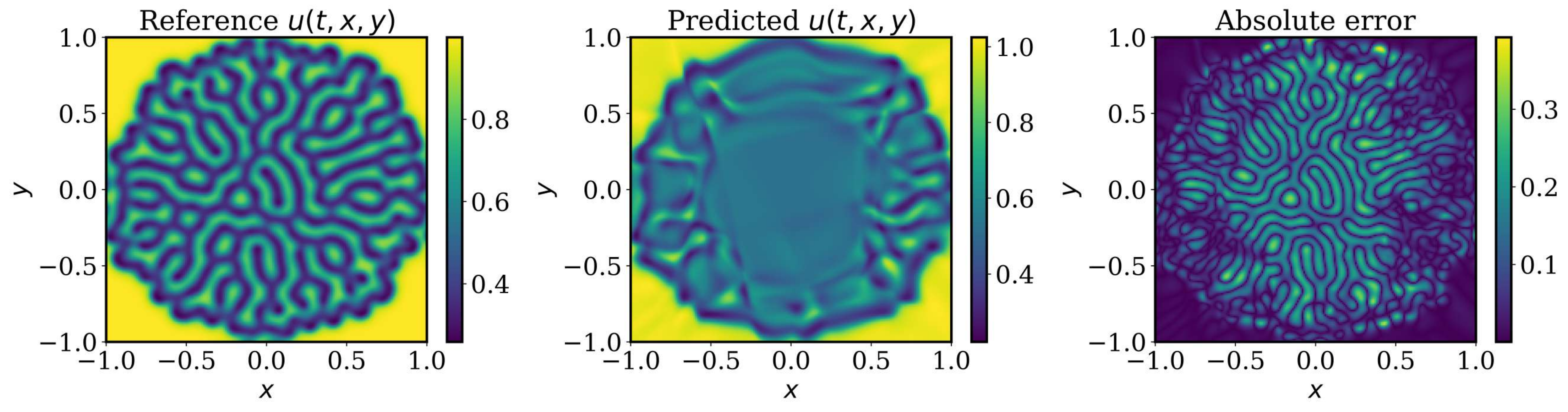}
         \caption{}
         \label{fig: GS2D_u_pred_original}
     \end{subfigure}
     \begin{subfigure}[b]{0.9\textwidth}
         \centering
         \includegraphics[width=\textwidth]{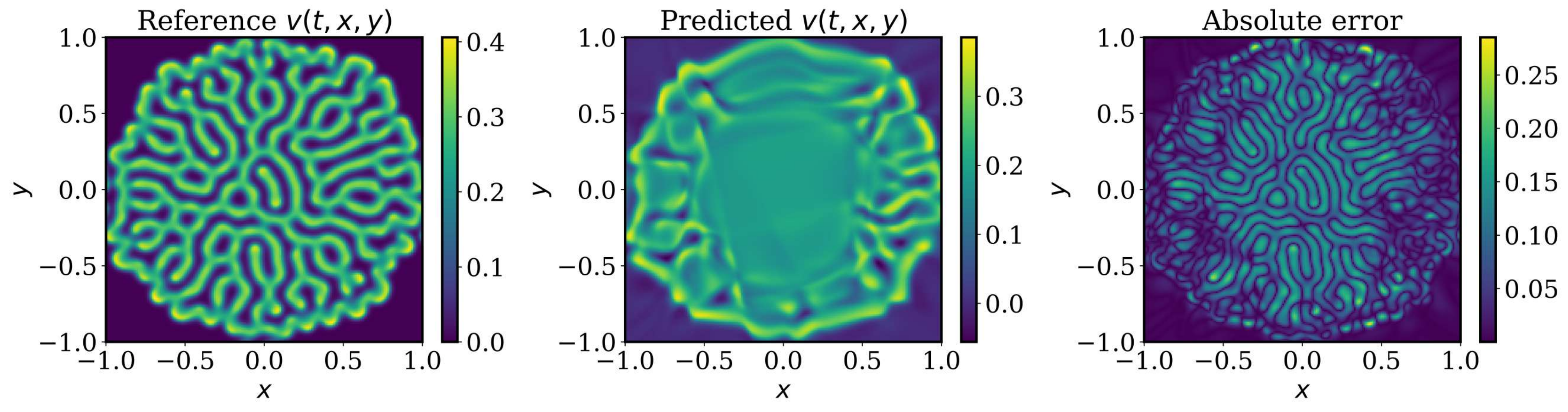}
         \caption{}
         \label{fig: GS2D_v_pred_original}
     \end{subfigure}
     \begin{subfigure}[b]{0.3\textwidth}
         \centering
         \includegraphics[width=\textwidth]{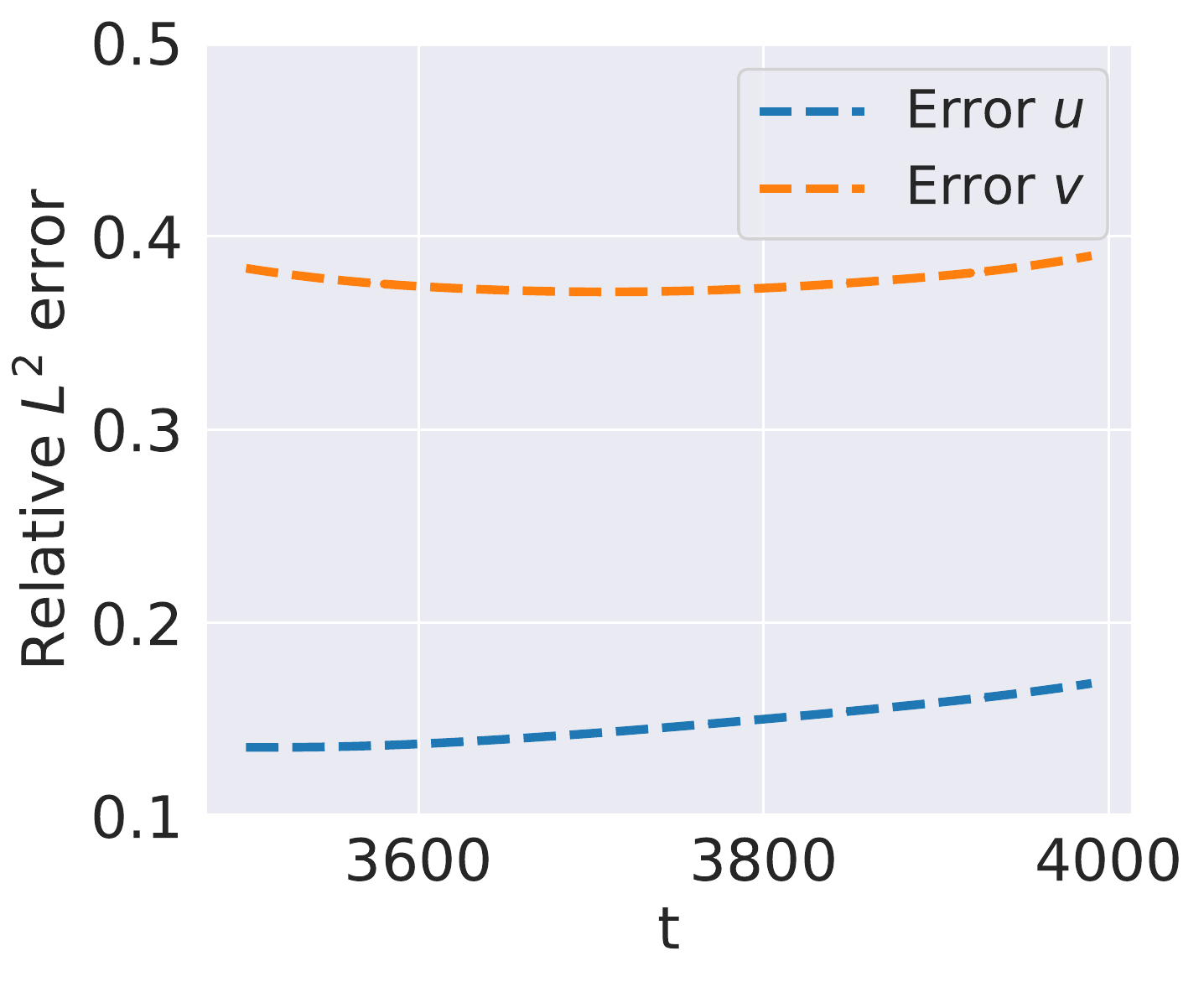}
         \caption{}
         \label{fig: GS2D_error_original}
     \end{subfigure}
          \begin{subfigure}[b]{0.3\textwidth}
         \centering
         \includegraphics[width=\textwidth]{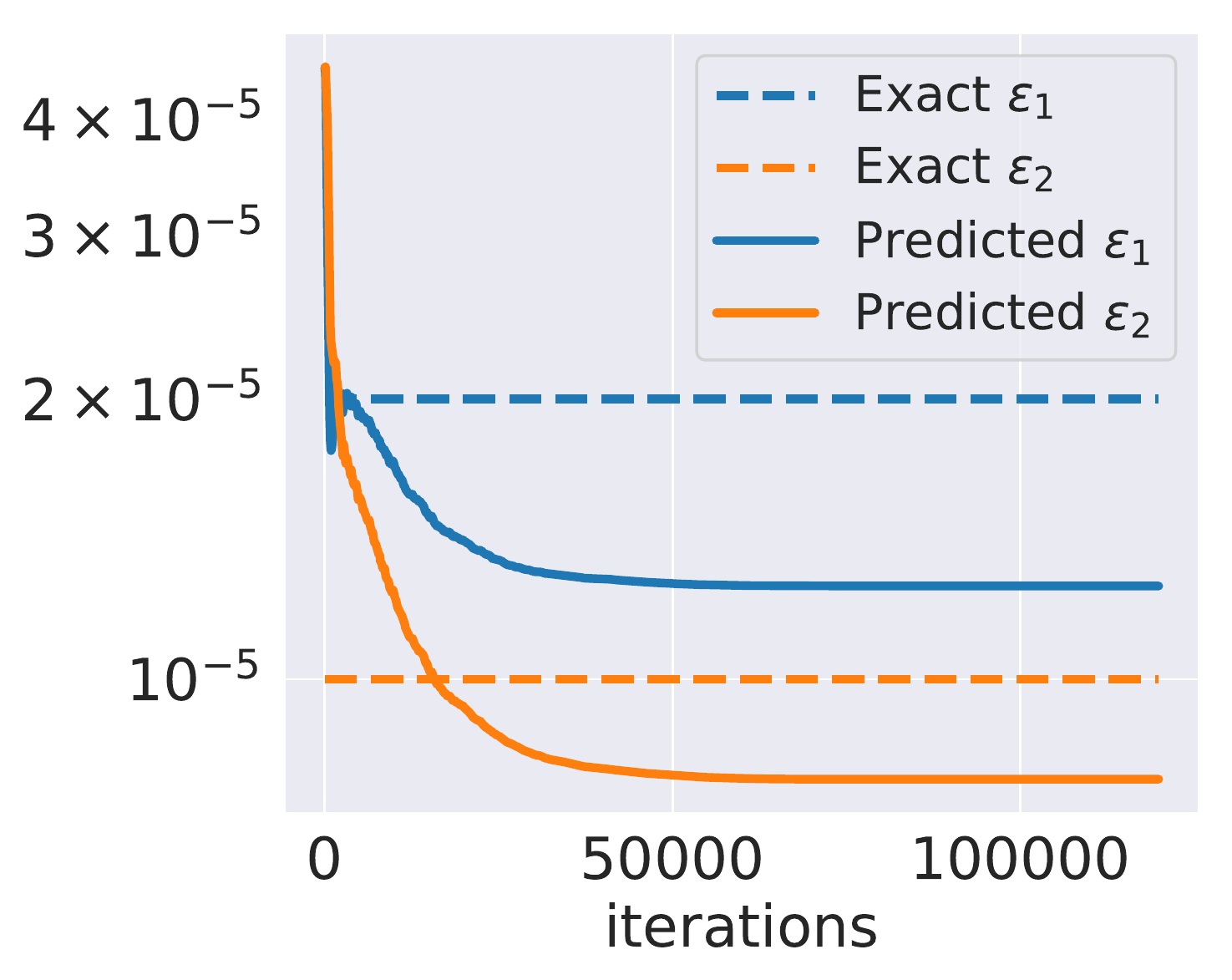}
         \caption{}
         \label{fig: GS2D_eps_original}
     \end{subfigure}
        \caption{{\em 2D Gray-Scott equation:} Results obtained by training a fully-connected network (9 layers, 100 hidden units, $\tanh$ activations) after 120, 000 iterations of gradient descent.
        (a)(b)  Numerical estimations versus the predicted concentration fields $u, v$ respective at the final time  $t=4000$.  (c) Relative $L^2$ errors between the model predictions and the corresponding exact concentration fields for each snapshot $t \in [3500, 4000]$.  (d) Evolution of the inferred diffusion rates $\epsilon_1, \epsilon_2$ during training.}
        \label{fig: GS2D_original}
\end{figure}

\section{Discussion}
\label{sec: disscusion}

In this work, we study Fourier feature networks through the lens of their limiting neural tangent kernel, and show that Fourier feature mappings determine the frequency of the eigenvectors of the resulting NTK. This analysis sheds light into mechanisms that introduce spectral bias in the training of deep neural networks, and suggests possible avenues for overcoming this fundamental limitation. Specific to the context of physics-informed neural networks, our analysis motivates the design of two novel network architectures to tackle forward and inverse problems involving time-dependent PDEs with solutions that exhibit complex multi-scale spatio-temporal features. To gain further insight, we propose a series of benchmarks for which conventional PINN approaches fail, and demonstrate the effectiveness of the proposed methods under these challenging settings.
Taken together, the developments presented in this work provide a principled way of analyzing the performance of PINN models, and enable the design of a new generation of architectures and training algorithms that introduce significant improvements both in terms of training speed and generalization accuracy, especially for multi-scale PDEs for which current PINN models struggle.

Despite this progress, we must admit that we are still at the very early stages of tackling realistic multi-scale and multi-physics problems with PINNs. 
One main limitation of the
the proposed architectures is that we have to carefully choose the appropriate number of Fourier feature mappings and their scale, such that the frequency of the NTK eigenvectors and the target function are roughly matched to each other. In other words, the proposed architectures require some prior knowledge regarding the frequency distribution of the target PDE solution. However, this kind of information may not be accessible for some forward problems, especially for more complex dynamical systems involving the fast transitions of frequencies such the Kuramoto-Sivashinsky equation \cite{sivashinsky1977nonlinear, kuramoto1978diffusion}, or the Navier-Stokes equations in the turbulent regime.  
Fortunately, this issue could be mitigated for some inverse problems where we may perform some spectral analysis on the training data to determine the appropriate number and scale of the Fourier feature mappings.

There are also many open questions worth considering as future research directions. From a theoretical standpoint, we numerically verify that the frequency of Fourier feature mappings determines the frequency of the NTK eigenvectors. Can we rigorously establish a theory for general networks? Besides, what is the behavior of the NTK eigensystem of PINNs? What is the difference between the resulting eigensystem of PINNs and conventional neural networks?
From a practical standpoint, we observe that the eigenvalue distribution moves outward (see figure \ref{fig: reg_a_20_sigma_1_spec_error}) when choosing an inappropriate scale of Fourier feature mappings, which implies that the parameters of the network have to move far away from their initialization to find a good local minimum. Thus, it is natural to ask how to initialize PINNs such that the desirable local minima exist in the vicinity of the parameter initialization in the corresponding loss landscape? Moreover, can we design other useful feature embeddings that aim to handle different scenarios (e.g., shocks, boundary layers, etc.)? We believe that answering these questions not only paves a new way to better understand PINNs and their training dynamics, but also opens a new door for developing scientific machine learning algorithms with provable convergence guarantees, as needed for many critical applications in computational science and engineering.

\section*{Acknowledgements}
PP acknowledges support from the DARPA PAI program (grant HR00111890034), the US Department of Energy (grant DE-SC0019116), and the Air Force Office of Scientific Research (grant FA9550-20-1-0060).

\bibliographystyle{unsrt}
\bibliography{references}


\appendix
\section{Definition of fully-connected neural networks}
\label{appendix: def_FCNN}
A scalar-valued fully-connected neural network with $L$ hidden layers is defined recursively as follows
\begin{align}
    & \bm{f}^{(0)}(\bm{x}) = \bm{x} \\
    &\bm{f}^{(h)}(\bm{x}) =  \frac{1}{\sqrt{d_h}} \bm{W}^{(h)} \cdot \bm{g}^{(h)}  + \bm{b}^{(h)}  \in \R^{d_{h+1}}, \\
    &\bm{g}^{(h)}(\bm{x}) =
    \sigma (\bm{W}^{(h-1)}  \bm{f}^{(h-1)}(\bm{x})  + \bm{b}^{(h-1)}),
\end{align}
for $h = 1, \dots, L$, where $\bm{W}^{(h)} \in \R^{d_{h+1} \times d_{h}}$ are  weight matrices and $\bm{b}^{(h)} \in \R^{d_{h+1}}$ are bias vectors in the $h$-th hidden layer, and $\sigma: \R \rightarrow \R$ is a coordinate-wise smooth activation function. The final output of the neural network is given by
\begin{align}
    \label{eq: NTK_param}
    f(\bm{x}, \bm{\theta}) &= \bm{f}^{(L)}(\bm{x}) =  \frac{1}{\sqrt{d_L}}\bm{W}^{(L)} \cdot \bm{g}^{(L)}(\bm{x}) + \bm{b}^{(L)},
\end{align}
where $\bm{W}^{(L)} \in \R^{1 \times d_{L}}$ and $\bm{b}^{(L)} \in \R$ are the weight and bias parameters of the last layer. 
Here, $\bm{\theta} = \{\bm{W}^{(0)}, \bm{b}^{(0)}, \dots, \bm{W}^{(L)}, \bm{b}^{(L)}\}$ denotes all parameters of the network and are initialized as independent and identically distributed (i.i.d) Gaussian random variables  $\mathcal{N}(0,1)$. We remark that such a parameterization is known as the ``NTK parameterization'' following the original work of Jacot {\em et. al.} \cite{jacot2018neural}.

\section{Proof of Lemma \ref{lemma: eigenfunc}}
\label{proof: lemma}

\begin{proof}
Recall that $K(\bm{x},\bm{x}') =  \frac{1}{m} \sum_{k=1}^m \cos(\bm{b}_k^T (\bm{x} - \bm{x}'))$, where $\bm{x} = (\bm{x}_1, \dots, \bm{x}_l, \dots, \bm{x}_d)$. By equation \ref{eq: eigenfunc_equ}, we have 
\begin{align}
        \frac{1}{m} \sum_{k=1}^m  \int_C \cos(\bm{b}_k^T ( \bm{x} - \bm{x}')) g\left(\bm{x}'\right) d \bm{x}'=\lambda g\left(\bm{x}\right).
\end{align}
For $l=1,2,\dots, d$, taking derivatives with respect to $\bm{x}_l$ gives
\begin{align}
     -\frac{1}{m} \sum_{k=1}^m \bm{b}_{kl} \int_C \sin(\bm{b}_k^T (\bm{x} - \bm{x}')) g\left(\bm{x}'\right) d \bm{x}'=\lambda \frac{\partial g\left(\bm{x}\right)}{\partial \bm{x}_l}.
\end{align}
Again taking derivatives with respect to $\bm{x}_l$ yields 
\begin{align}
      - \frac{1}{m} \sum_{k=1}^m \bm{b}_{kl}^2 \int_C \cos(\bm{b}_k^T (\bm{x} - \bm{x}')) g\left(\bm{x}'\right) d \bm{x}'=\lambda \frac{\partial^2 g\left(\bm{x}\right)}{\partial \bm{x}_l^2}.
\end{align}
Summing over $l$ gives
\begin{align*}
     - \frac{1}{m} \sum_{k=1}^m \sum_{l=1}^d \bm{b}_{kl}^2 \int_C \cos(\bm{b}_k^T (\bm{x} - \bm{x}')) g\left(\bm{x}'\right) d \bm{x}' = \lambda \sum_{l=1}^d \frac{\partial^2 g\left(\bm{x}\right)}{\partial \bm{x}_l^2}.
\end{align*}
Note that  $\|B\|_F^2 =  \sum_{k,l}\bm{b}_{kl}^2$ and $\Delta = \sum_l \partial_{\bm{x}_l}^2$. Therefore, we have
\begin{align}
    - \frac{1}{m} \|\bm{B}\|_F^2 \lambda g(\bm{x}) = \lambda \Delta g(\bm{x}).
\end{align}
When $\lambda \neq 0$, the above equation is equivalent to
\begin{align}
          \Delta g(\bm{x}) = - \frac{1}{m} \|\bm{B}\|_F^2  g(\bm{x}).
\end{align}
This concludes the proof.

\end{proof}

\section{Proof of Proposition \ref{prop: eigenfunc}}
\label{proof: prop_1}
\begin{proof}
Suppose that $g(x)$ is an eigenfunction of the integral operator $K(x,x') = \cos(b(x - x'))$ with respect to the non-zero eigenvalue $\lambda$, i.e.
\begin{align}
        \label{eq: eigenfunc_1}
        \int_0^1 \cos(b(x - x')) g\left(\bm{x}'\right) d \bm{x}'=\lambda g\left(\bm{x}\right).
\end{align}
By Lemma \ref{lemma: eigenfunc}, first we know that
$g(x)$ satisfies 
\begin{align}
    g''(x) = -b^2 g(x).
\end{align}
Note that this is a ODE and thus $g(x)$ must have the form of 
\begin{align}
       g(x) = C_1 \cos(b x) + C_2 \sin(b x),
\end{align}
where $C_1, C_2$ are some constants. 

Next, we compute the corresponding eigenvalue.  
Substituting the expression of $g$ into equation \ref{eq: eigenfunc_1} we get
\begin{align*}
      \int_0^1 \cos(b (x - x')) \left[C_1 \cos(b  x') + C_2 \sin(b x') \right] d x' 
     =\lambda  \left[C_1 \cos(b  x') + C_2 \sin(b x') \right].
\end{align*}
Then,
\begin{align*}
     LHS &=  \int_0^1 \cos(b (x - x')) \left[C_1 \cos(b  x') + C_2 \sin(b x') \right]  d x' \\
    &=  \int_0^1 \cos(b x)\cos(b x')  + \sin(b x)\sin(b x')  \left[C_1 \cos(b  x') + C_2 \sin(b x') \right]  d x' \\ 
    &= \cos(bx) \int_0^1 \cos(b x') \left[C_1 \cos(b  x') + C_2 \sin(b x') \right]  d x'  +\sin(bx) \int_0^1 \sin(b x') \left[C_1 \cos(b  x') + C_2 \sin(b x') \right]  d x' \\
    &= \cos(bx) [C_1 I_1 + C_2 I_2] + \sin(bx) [C_1 I_2 + C_2 I_3],
\end{align*}
where
\begin{align*}
    &I_1 = \int_0^1 \cos(bx') \cos(bx') dx' = \frac{1}{2} + \frac{\sin(2b)}{4b}, \\
    &I_2 =  \int_0^1 \cos(bx') \sin(bx') dx' = \frac{1 - \cos(2b)}{4b},  \\
    &I_3 = \int_0^1 \sin(bx') \sin(bx') dx' = \frac{1}{2} - \frac{\sin(2b)}{4b}.
\end{align*}
Since $LHS =RHS$, we have
\begin{align}
  [C_1 I_1 + C_2 I_2]  \cos(bx)  + [C_1 I_2 + C_2 I_3] \sin(bx)   = \lambda  \left[C_1 \cos(b  x) + C_2 \sin(b x) \right],
\end{align}
which follows
\begin{align}
    &C_1 I_1 + C_2 I_2 = \lambda C_1 \\
    &C_1 I_2 + C_2 I_3 = \lambda C_2.
\end{align}
Let  $  A =   \begin{bmatrix}
    I_1 & I_2 \\
    I_2 & I_3
    \end{bmatrix}$ and $v = \begin{bmatrix}
    C_1 \\
    C_2
    \end{bmatrix}$. Then, the above linear system can be written as
\begin{align*}
 A v = 
    \lambda v.
\end{align*}
This means that the eigenvalue of the kernel function $K(x,x')$ is determined by the eigenvalue of the matrix $A$. The characteristic polynomial is given by
 \begin{align*}
        \det(\lambda I - A) = (\lambda - I_1)(\lambda - I_3) - I_2^2 = \lambda^2 - (I_1 + I_3) \lambda + I_1 I_3 - I_2^2.
\end{align*}
Note that $I_1 + I_3 = 1 $, and
\begin{align*}
         I_1 I_3 - I_2^2 &= \left(\frac{1}{2} + \frac{\sin(2b)}{4b}  \right) \left(\frac{1}{2} - \frac{\sin(2b)}{4b}  \right) - \left(\frac{1 - \cos(2b)}{4b} \right)^2 \\
    &= \frac{1}{4} - \frac{\sin^2(2b)}{16 b^2} - \frac{1 - 2 \cos(2b) - \cos^2(2b)}{16b^2} \\
    &= \frac{1}{4} - \frac{2 - 2 \cos(2b)}{16 b^2} \\
    & = \frac{1}{4} - \frac{\sin^2(b)}{4 b^2}.
\end{align*}
Hence we have 
\begin{align*}
     \det(\lambda I - A) = \lambda^2 - \lambda +   \frac{1}{4} - \frac{\sin^2(b)}{4 b^2}.
\end{align*}
Then the eigenvalues are
\begin{align*}
    \lambda_1 = \frac{1 + \frac{\sin b}{b}}{2}, \quad  \lambda_2 = \frac{1 - \frac{\sin b}{b}}{2}
\end{align*}
which completes the proof.

\end{proof}

\section{Proof of Proposition \ref{prop: pinns}}

\begin{proof}
By assumption (i) in Proposition \ref{prop: pinns}, we immediately obtain 
\begin{align}
    \begin{bmatrix}
    \frac{d \mathcal{B}[\bm{u}] (\bm{x}_b, {\bm \theta}(t))}{dt}\\
    \frac{d \mathcal{N}[\bm{u}](\bm{x}_r, {\bm \theta}(t))}{dt}
    \end{bmatrix}
    \approx
       - \begin{bmatrix}
     \bm{K}_{uu}(0) & \bm{K}_{ur}(0) \\
     \bm{K}_{ru}(0) & \bm{K}_{rr}(0)
    \end{bmatrix}
    \cdot
       \begin{bmatrix}
    \mathcal{B}[\bm{u}](\bm{x}_b, {\bm \theta}(t)) - \bm{g}(\bm{x}_b) \\
    \mathcal{N}[\bm{u}](\bm{x}_r, {\bm \theta}(t)) - \bm{f}(\bm{x}_r)
    \end{bmatrix},
\end{align}
which implies
\begin{align}
     \begin{bmatrix}
     \mathcal{B}[\bm{u}](\bm{x}_b, {\bm \theta}(t))\\
  \mathcal{N}[\bm{u}](\bm{x}_r, {\bm \theta}(t))
    \end{bmatrix} -
     \begin{bmatrix}
     \bm{g}(\bm{x}_b) \\
    \bm{f}(\bm{x}_r)
    \end{bmatrix}
    &\approx    \left( I - e^{- \bm{K}(0)t} \right)  \cdot
    \begin{bmatrix}
     \bm{g}(\bm{x}_b) \\
    \bm{f}(\bm{x}_r)
    \end{bmatrix}  -
     \begin{bmatrix}
     \bm{g}(\bm{x}_b) \\
    \bm{f}(\bm{x}_r)
    \end{bmatrix} \\
    & \approx -  e^{- \bm{K}(0) t}  \cdot
    \begin{bmatrix}
     \bm{g}(\bm{x}_b) \\
    \bm{f}(\bm{x}_r)
    \end{bmatrix}.
\end{align}
By assumption (ii) in Proposition \ref{prop: pinns}, $\bm{K}_{uu}(0)$ $\bm{K}_{rr}(0)$ are positive definite, and there exist orthogonal matrix $\bm{Q}_u$ and $\bm{Q}_r$ such that
\begin{align}
    &\bm{K}_{uu}(0) =  \bm{Q}_u^T \Lambda_u  \bm{Q}_u^T, \\
    &\bm{K}_{rr}(0) =  \bm{Q}_r^T \Lambda_r  \bm{Q}_r^T, 
\end{align}
where $\Lambda_u$ and $\Lambda_r$ are diagonal matrices whose entries are eigenvalues of $\bm{K}_{uu}(0)$ and $\bm{K}_{rr}(0)$, respectively. We remark that $\Lambda_u$ and $\Lambda_r$ are invertible since all eigenvalues are strictly positive.

Now let $ \bm{Q} = \begin{bmatrix}
             \bm{Q}_u & 0\\
           0 &  \bm{Q}_r
           \end{bmatrix}$  
to obtain
\begin{align}
    \bm{Q}^T \bm{K}(0)  \bm{Q} &=  \begin{bmatrix}
             \bm{Q}_u^T & 0\\
           0 &  \bm{Q}_r^T
           \end{bmatrix}
           \begin{bmatrix}
            \bm{K}_{rr}(0) & \bm{K}_{ur}(0)\\
            \bm{K}_{ur}^T(0) & \bm{K}_{rr}(0)
           \end{bmatrix}
           \begin{bmatrix}
            \bm{Q}_u & 0\\
           0 & \bm{Q}_r
           \end{bmatrix} \\
           & = \begin{bmatrix}
            \bm{\Lambda}_u & \bm{Q}_u^T \bm{K}_{ur}(0) \bm{Q}_r\\
           \bm{Q}_r^T \bm{K}_{ru}(0) \bm{Q}_u & \bm{\Lambda_r}
           \end{bmatrix} :=\bm{\Tilde{\Lambda}}.
\end{align}
Furthermore, letting $\bm{B} =   \bm{Q}_r^T \bm{K}_{ru}(0) \bm{Q}_u$ and $ \bm{P} = \begin{bmatrix}
           \bm{I} & 0 \\
           - \bm{B} \bm{\Lambda}_u^{-1}  & \bm{I}
            \end{bmatrix}$,
gives
\begin{align}
    \bm{\Tilde{\Lambda}} = \bm{P}^T    \begin{bmatrix}
           \bm{ \Lambda}_u & 0\\
           0& \bm{\Lambda}_r - \bm{B}^T \bm{\Lambda}_u^{-1} \bm{B}
           \end{bmatrix} \bm{P} =  \bm{P}^T   \bm{\Lambda} \bm{P}.
\end{align}
Therefore, we obtain
\begin{align}
\bm{Q}^T 
     \left( \begin{bmatrix}
    \mathcal{B}[\bm{u}](\bm{x}_b, {\bm \theta}(t))\\
   \mathcal{N}[\bm{u}](\bm{x}_r, {\bm \theta}(t))
    \end{bmatrix} -
     \begin{bmatrix}
     \bm{g}(\bm{x}_b) \\
    \bm{f}(\bm{x}_r)
    \end{bmatrix} \right) 
    \approx e^{- \bm{P }^T \bm{\Lambda}  \bm{P } t}\bm{Q}^T 
     \begin{bmatrix}
      \bm{g}(\bm{x}_b) \\
       \bm{f}(\bm{x}_r)
       \end{bmatrix}.
\end{align}
This concludes the proof.

\end{proof}

\end{document}